\documentclass[twoside,11pt]{article}

\usepackage[preprint]{jmlr}
\usepackage[utf8]{inputenc} % allow utf-8 input
\usepackage[T1]{fontenc}    % use 8-bit T1 fonts
\usepackage{url}            % simple URL typesetting
\usepackage{booktabs}       % professional-quality tables
\usepackage{amsfonts}       % blackboard math symbols
\usepackage{nicefrac}       % compact symbols for 1/2, etc.
\usepackage{microtype}      % microtypography
\usepackage{xcolor}         % colors
\usepackage{wrapfig}
\usepackage{graphicx}
\usepackage[algo2e]{algorithm2e}
\usepackage{comment}
\usepackage{subfig}
\usepackage{enumitem}
\setlist{leftmargin=3mm}
\usepackage{multirow}
\usepackage{tcolorbox}
\usepackage{tikz}
\usepackage{listings}
%%%%% NEW MATH DEFINITIONS %%%%%

\usepackage{amsmath,amssymb,amsfonts}
\usepackage{algorithm}
\usepackage{algorithmic}
% \newcommand{\theHalgorithm}{\arabic{algorithm}}

% Mark sections of captions for referring to divisions of figures

% Highlight a newly defined term

\usepackage{comment}
\usepackage{bm}
\usepackage{pifont}
%
%

% \theoremstyle{plain}
% \newtheorem{theorem}{Theorem}[]
% \newtheorem{corollary}{Corollary}[theorem]
% \newtheorem{lemma}[]{Lemma}
% \newtheorem{proposition}{Proposition}

% %\theoremstyle{definition}
% \newtheorem{definition}{Definition}[]
% \newtheorem{assumption}{Assumption}[]
% \newtheorem{fact}{Fact}[]
%\theoremstyle{remark}
% \newtheorem{remark}{Remark}[]

\newcommand{\ra}[1]{\renewcommand{\arraystretch}{#1}}
% \renewcommand{\qedsymbol}{$\blacksquare$}

% % Figure reference, lower-case.
% \def\figref#1{figure~\ref{#1}}
% % Figure reference, capital. For start of sentence
% \def\Figref#1{Figure~\ref{#1}}
% \def\twofigref#1#2{figures \ref{#1} and \ref{#2}}
% \def\quadfigref#1#2#3#4{figures \ref{#1}, \ref{#2}, \ref{#3} and \ref{#4}}
% % Section reference, lower-case.
% \def\secref#1{section~\ref{#1}}
% % Section reference, capital.
% \def\Secref#1{Section~\ref{#1}}
% % Reference to two sections.
% \def\twosecrefs#1#2{sections \ref{#1} and \ref{#2}}
% % Reference to three sections.
% \def\secrefs#1#2#3{sections \ref{#1}, \ref{#2} and \ref{#3}}
% % Reference to an equation, lower-case.
% \def\eqref#1{equation~\ref{#1}}
% % Reference to an equation, upper case
% \def\Eqref#1{Equation~\ref{#1}}
% % A raw reference to an equation---avoid using if possible
% \def\plaineqref#1{\ref{#1}}
% % Reference to a chapter, lower-case.
% \def\chapref#1{chapter~\ref{#1}}
% % Reference to an equation, upper case.
% \def\Chapref#1{Chapter~\ref{#1}}
% % Reference to a range of chapters
% \def\rangechapref#1#2{chapters\ref{#1}--\ref{#2}}
% % Reference to an algorithm, lower-case.
% \def\algref#1{algorithm~\ref{#1}}
% % Reference to an algorithm, upper case.
% \def\Algref#1{Algorithm~\ref{#1}}
% \def\twoalgref#1#2{algorithms \ref{#1} and \ref{#2}}
% \def\Twoalgref#1#2{Algorithms \ref{#1} and \ref{#2}}
% % Reference to a part, lower case
% \def\partref#1{part~\ref{#1}}
% % Reference to a part, upper case
% \def\Partref#1{Part~\ref{#1}}
% \def\twopartref#1#2{parts \ref{#1} and \ref{#2}}

\def\1{\mathbf{1}}

% Random variables

\def\ra{{\textnormal{a}}}

% rm is already a command, just don't name any random variables m

\def\rs{{\textnormal{s}}}

\def\rx{{\textnormal{x}}}
\def\ry{{\textnormal{y}}}

% Random vectors

\def\rva{{\mathbf{a}}}

% Elements of random vectors
\def\erva{{\textnormal{a}}}

% Random matrices

% Elements of random matrices

% Vectors

\def\vtheta{{\mathbf{\theta}}}
\def\va{{\mathbf{a}}}

\def\vv{{\mathbf{v}}}
\def\vw{{\mathbf{w}}}
\def\vx{{\mathbf{x}}}
\def\vy{{\mathbf{y}}}
\def\vz{{\mathbf{z}}}

% Elements of vectors

% Matrix
\def\mA{{\mathbf{A}}}
\def\mB{{\mathbf{B}}}

\def\mGamma{{\bm{\Gamma}}}
\def\mtheta{{\bm{\theta}}}
% Tensor
\DeclareMathAlphabet{\mathsfit}{\encodingdefault}{\sfdefault}{m}{sl}
\SetMathAlphabet{\mathsfit}{bold}{\encodingdefault}{\sfdefault}{bx}{n}

% Graph

\def\gB{{\mathcal{B}}}

\def\gD{{\mathcal{D}}}

\def\gG{{\mathcal{G}}}
\def\gH{{\mathcal{H}}}

\def\gL{{\mathcal{L}}}

\def\gO{{\mathcal{O}}}

\def\gR{{\mathcal{R}}}
\def\gS{{\mathcal{S}}}
\def\gT{{\mathcal{T}}}

\def\gX{{\mathcal{X}}}
\def\gY{{\mathcal{Y}}}
\def\gZ{{\mathcal{Z}}}

% Sets
\def\sA{{\mathbb{A}}}
\def\sB{{\mathbb{B}}}

% Don't use a set called E, because this would be the same as our symbol
% for expectation.

\def\sI{{\mathbb{I}}}

\def\sN{{\mathbb{N}}}

\def\sP{{\mathbb{P}}}

\def\sR{{\mathbb{R}}}

\def\sU{{\mathbb{U}}}

% Entries of a matrix

% entries of a tensor
% Same font as tensor, without \mathbf wrapper

% The true underlying data generating distribution

% The empirical distribution defined by the training set

% The model distribution

% Stochastic autoencoder distributions

 % Laplace distribution

\newcommand{\E}{\mathbb{E}}

% Wolfram Mathworld says $L^2$ is for function spaces and $\ell^2$ is for vectors
% But then they seem to use $L^2$ for vectors throughout the site, and so does
% wikipedia.

\newcommand{\normlp}{L^p}

 % See usage in notation.tex. Chosen to match Daphne's book.

\DeclareMathOperator*{\argmax}{arg\,max}
\DeclareMathOperator*{\argmin}{arg\,min}

\usepackage{color, colortbl}
\usepackage{bbm}
\usepackage{bm}
\usepackage{hyperref}
\usepackage[capitalize,noabbrev]{cleveref}
\lstdefinestyle{mystyle}{
    backgroundcolor=\color{backcolour},   
    commentstyle=\color{codegreen},
    keywordstyle=\color{magenta},
    numberstyle=\tiny\color{codegray},
    stringstyle=\color{codepurple},
    basicstyle=\ttfamily\footnotesize,
    breakatwhitespace=false,         
    breaklines=true,                 
    captionpos=b,                    
    keepspaces=true,                 
    numbers=left,                    
    numbersep=5pt,                  
    showspaces=false,                
    showstringspaces=false,
    showtabs=false,                  
    tabsize=2
}
\lstset{style=mystyle}
	
\usepackage{soul}
\definecolor{LightCyan}{rgb}{0.9,1,1}
\definecolor{light-gray}{gray}{0.97}
\definecolor{codegreen}{rgb}{0,0.6,0}
\definecolor{codegray}{rgb}{0.5,0.5,0.5}
\definecolor{codepurple}{rgb}{0.58,0,0.82}
\definecolor{backcolour}{rgb}{0.95,0.95,0.92}

\crefname{assumption}{Assumption}{Assumptions}
\crefname{remark}{Remark}{Remarks}
\newcommand{\Mu}{\bm{\mu}}
% Any additional packages needed should be included after jmlr2e.
% Note that jmlr2e.sty includes epsfig, amssymb, natbib and graphicx,
% and defines many common macros, such as 'proof' and 'example'.
%
% It also sets the bibliographystyle to plainnat; for more information on
% natbib citation styles, see the natbib documentation, a copy of which
% is archived at http://www.jmlr.org/format/natbib.pdf

% Available options for package jmlr2e are:
%
%   - abbrvbib : use abbrvnat for the bibliography style
%   - nohyperref : do not load the hyperref package
%   - preprint : remove JMLR specific information from the template,
%         useful for example for posting to preprint servers.
%
% Example of using the package with custom options:
%
% \usepackage[abbrvbib, preprint]{jmlr2e}

% Heading arguments are {volume}{year}{pages}{date submitted}{date published}{paper id}{author-full-names}

\usepackage{lastpage}
\jmlrheading{23}{2025}{1-\pageref{LastPage}}{1/21; Revised 5/22}{9/22}{21-0000}{Acharya, Jing, Bhushanam, Choudhary, Rabbat, Sanghavi, Dhillon}

% Short headings should be running head and authors last names

\ShortHeadings{Contrastive PU Learning}{Acharya, Jing, Bhushanam, Choudhary, Rabbat, Sanghavi, Dhillon}
% \firstpageno{1}

\begin{document}

\title{Understanding Contrastive Representation Learning \\
from Positive Unlabeled (PU) Data}

\author{\name Anish Acharya~\thanks{Part of the work was done while the author was at Meta.} \email anishacharya@utexas.edu \\
       \addr The University of Texas at Austin \\
       2515 Speedway, Austin, TX 78712
       \AND
       \name Li Jing~$^\ast$ \email lijing@openai.com \\
       \addr OpenAI \\
       1455 3rd Street, San Francisco, California 94158
       \AND
       \name Bhargav Bhushanam \email bbhushanam@fb.com \\
       \addr Meta \\
       1 Meta Way, Menlo Park, CA 94025
       \AND
       \name Dhruv Choudhary \email choudharydhruv@fb.com \\
       \addr Meta \\
       1 Meta Way, Menlo Park, CA 94025
       \AND
       \name Michael Rabbat \email mikerabbat@fb.com \\
       \addr Meta \\
       1 Meta Way, Menlo Park, CA 94025
       \AND 
       \name Sujay Sanghavi \email sanghavi@mail.utexas.edu \\
       \addr The University of Texas at Austin \\
       2515 Speedway, Austin, TX 78712
       \AND
       \name Inderjit S Dhillon \email inderjit@cs.utexas.edu \\
       \addr The University of Texas at Austin \\
       2515 Speedway, Austin, TX 78712
       }

\editor{}

\maketitle

\begin{abstract}%  
Pretext Invariant Representation Learning (PIRL) followed by Supervised Fine-Tuning (SFT) has become a standard paradigm for learning with limited labels. We extend this approach to the Positive-Unlabeled (PU) setting, where only a small set of labeled positives and a large unlabeled pool—containing both positives and negatives—are available. We study this problem under two regimes: (i) without access to the class prior, and (ii) when the prior is known or can be estimated. We introduce Positive Unlabeled Contrastive Learning (\textsc{puCL}), an unbiased and variance-reducing contrastive objective that integrates weak supervision from labeled positives judiciously into the contrastive loss. When the class prior is known, we propose Positive Unlabeled \textsc{InfoNCE} (\textsc{puNCE}), a prior-aware extension that re-weights unlabeled samples as soft positive-negative mixtures. For downstream classification, we develop \textsc{puPL}, a pseudo-labeling algorithm that leverages the structure of the learned embedding space via PU aware clustering. Our framework is supported by theory; offering bias-variance analysis, convergence insights, and generalization guarantees via augmentation concentration; and validated empirically across standard PU benchmarks, where it consistently outperforms existing methods, particularly in low-supervision regimes.
\end{abstract}

\begin{keywords}
  keyword one, keyword two, keyword three
\end{keywords}

\tableofcontents

\clearpage
\section{Introduction}
\label{sec:intro}
Learning from limited amount of labeled data is a longstanding challenge in modern machine learning.
Owing to its recent widespread success in both computer vision and natural language processing tasks~\citep{chen2020big, Grill2020BootstrapYO, Radford2021LearningTV, Gao2021SimCSESC, dai2015semi, radford2018improving} 
{\bf Pretext Invariant Representation Learning (PIRL)} followed by {\bf Supervised Fine-Tuning (SFT)} has become the de-facto approach to learn from limited supervision. 
This two-stage approach, which first leverages unlabeled data in a task-agnostic manner and subsequently adapts to the target task using labeled data, has driven state-of-the-art performance across a wide range of computer vision and natural language processing tasks~\citep{chen2020big, Grill2020BootstrapYO, Radford2021LearningTV, Gao2021SimCSESC, dai2015semi, radford2018improving, hinton2006fast, bengio2006greedy, mikolov2013distributed, kiros2015skip, devlin2018bert, Zbontar2021BarlowTS}.
In this context, {\bf Contrastive Learning (CL)}\citep{gutmann2010noise, sohn2016improved, tian2020contrastive, chen2020simple} has emerged as a particularly powerful approach for learning such pretext-invariant representations. By encouraging embeddings of semantically similar inputs to be mapped closer together, while pushing apart dissimilar ones, CL effectively induces invariance to a class of label-preserving transformations. This inductive bias has been shown to yield representations that transfer well across tasks, especially in settings where labeled data is scarce. 
\\
\\
Despite the widespread empirical success of CL, its theoretical and algorithmic underpinnings in {\bf weakly supervised} settings remain comparatively underexplored~\citep{assran2020supervision, cui2023rethinking, zheng2021weakly, xue2022investigating}. 
Classical formulations of CL objectives typically fall into one of two extremes: either fully self-supervised, where semantic similarity is heuristically induced through data augmentations~\citep{chen2020simple, oord2018representation}, 
or fully supervised, where labels provide explicit guidance for defining positive and negative pairs~\citep{khosla2020supervised}.
However, many real-world settings fall into a spectrum of weak supervision, where the training signal is indirect, imprecise, or only partially aligned with the true task objective.  
CL relies on the ability to form reliable similar and dissimilar pairs. In weakly supervised regimes, this assumption becomes increasingly fragile. Naively applying supervised contrastive losses can introduce statistical bias when label noise or label sparsity leads to incorrect pair assignments. Conversely, purely self-supervised methods—though unbiased—often suffer from high variance in the learned representations due to the lack of semantic guidance. This creates a fundamental tension between bias and variance in weakly supervised contrastive learning that is remain unexplored in existing literature.
\\
\\
To this end, this paper investigates and extends contrastive representation learning to the {\bf Positive Unlabeled (PU) Learning}~\citep{denis1998pac} setting - 
\begin{quote}
    The weakly supervised task of learning a binary (positive vs negative) classifier {\bf in absence of any explicitly labeled negative examples}, i.e., using an incomplete set of positives and a set of unlabeled samples.
\end{quote}
This setting is frequently encountered in several real-world applications, especially where obtaining negative samples is either expensive or infeasible. For example, consider {\em personalized recommendation systems}~\citep{naumov2019deep} where the training data is typically extracted from user feedback. Since explicit user feedback (e.g. user ratings) is hard to obtain, most practical recommendation systems rely on implicit user feedback (e.g. browsing history)~\citep{kelly2003implicit} which usually indicates user's positive preference (e.g. if a user browses a product frequently or watched a movie then the user-item pair is labeled positive)~\citep{chen2021pu}. The study of PU Learning has also been motivated by diverse domains such as -- drug, gene, and protein identification~\citep{yang2012positive}, anomaly detection~\citep{blanchard2010semi}, fake news detection~\citep{ren2014positive}, matrix completion~\citep{hsieh2015pu}, data imputation~\citep{denis1998pac}, named entity recognition (NER)~\citep{peng2019distantly} and face recognition~\citep{kato2018learning} among others.

\subsection{Overview: Contrastive Approach to PU Learning}
In this work, we present a principled investigation into the design and analysis of the popular \textsc{infoNCE}~\citep{gutmann2010noise} family of contrastive objectives, under the Positive-Unlabeled (PU) setting, where the {\bf available supervision is both partial (only positives are labeled) and asymmetric (no labeled negatives)}. Our goal is to understand how to integrate this weak supervision signal into the InfoNCE family of contrastive objectives in a way that preserves statistical consistency, minimizes estimator variance, and improves representation quality. 
\\
\\
To this end, in{\bf ~\cref{sec:puCL}}, we study several adaptations of \textsc{infoNCE} that reflect different assumptions about the unlabeled data and different uses of the available supervision. We begin by analyzing two classical baselines: {\bf Self Supervised Contrastive Learning (\textsc{ssCL})}\citep{chen2020simple}, which is unbiased but high variance, and the naive adaptation of the supervised contrastive loss~\citep{khosla2020supervised} {\bf \textsc{sCL-PU}}~\eqref{eq:scl_pu}, which suffers from bias due to incorrect treatment of unlabeled samples as negatives. We quantify the sources of estimation error in each case and demonstrate empirically and theoretically how these manifest in degraded representation quality, particularly in the low-supervision regime.
% At one extreme, the self-supervised contrastive loss (\textsc{ssCL}) completely ignores label information and remains unbiased but often exhibits high variance due to the lack of semantic guidance. At the other, a naive supervised variant (\textsc{sCL-PU}), which treats all unlabeled samples as negatives, introduces bias that can significantly degrade performance—especially in low-label regimes or when the class prior is unknown.
% We explore intermediate strategies to navigate this trade-off, including Mixed Contrastive Learning ({\bf \textsc{mCL}})~\citep{cui2023rethinking}, a convex combination of \textsc{ssCL} and \textsc{sCL-PU}, which interpolates between bias and variance via a tunable mixing parameter~\eqref{eq:mcl}, and Debiased Contrastive Learning ({\bf \textsc{dCL}})~\citep{chuang2020debiased}, a correction-based variant that leverages a known or estimated class prior to reduce statistical bias~\eqref{eq:dcl}. 
% Building on these insights, we adopt a simple modification of infoNCE family of losses for the PU setting by including an additional similarity term for pairs of samples that are both labeled. We call our weakly supervised adaptation Positive Unlabeled Contrastive Learning (\textsc{puCL}). In essence, leveraging the available supervision judiciously, unlike \textsc{sCL-PU}, \textsc{puCL} is able to form an unbiased risk estimator of the contrastive objective, while achieving lower variance compared to \textsc{ssCL}.
% \\
% \\
\begin{algorithm*}[t]
        \SetAlgoLined
        \vspace{1em}
        {\bf initialize:}
        PU training data $\gX_\textsc{PU}$~\eqref{eq:pu_dataset};  
        batch size b; temperature parameter $\tau > 0$; randomly initialized encoder $g_{\mB}(\cdot): \sR^d \to \sR^k$, projection network: $h_{\bm{\Gamma}}(\cdot): \sR^k \to \sR^p$, family of stochastic augmentations $\gT$, (optionally) class prior estimate $\pi := \Pr(y = 1)$.

        \For{epochs  e = 1, 2, \dots , until convergence}
        {   
            {\em select mini-batch}:
            \\
            $\gD=\{\vx_i\}_{i=1}^b \sim \gX_\textsc{PU}$
            \\
            \\
            {\em create multi-viewed batch}: 
            \\
            $t(\cdot)\sim \gT, t'(\cdot) \sim \gT$
            \\
            $
            \tilde{\gD} 
            = \{\tilde{\vx}_i=t(\vx_i), \tilde{\vx}_{a(i)}=t'(\vx_i) \}_{i=1}^b 
            $
            \\
            $\sI = \{1, 2, \dots, 2b\}$ is the index set of $\tilde{\gD}$ and,
            \\
            $\sP = \{i \in \sI : \vx_i \in \gX_{\textsc{P}}\}, \sU = \{j \in \sI : \vx_j \in \gX_\textsc{U}\}$
            \\
            \\
            {\em obtain representations}: 
            \\
            $\{\vz_j\}_{j\in \sI} = \{ \vz_i = h_{\mGamma}\circ g_{\mB}(\tilde{\vx}_i), \;
            \vz_{a(i)} = h_{\mGamma}\circ g_{\mB}(\tilde{\vx}_{a(i)})\}_{i=1}^b$ 
            \\
            \\
            {\em compute pairwise similarity}:
            \\
            $
            \vz_i \boldsymbol \cdot \vz_j = \frac{1}{\tau}\frac{\vz_i^T\vz_j}{\|\vz_i\|\|\vz_j\|},\;
            P(i,j) = \frac{\exp{(\vz_i} \boldsymbol \cdot \vz_j)}{\sum\limits_{k \in \sI}\1(k \neq i)\exp(\vz_i\boldsymbol\cdot \vz_k)},  \forall i,j\in \sI
            $
            \\
            \\
            {\em compute loss} :
            \\
            \colorbox{light-gray}{
            $\gL_{\textsc{ssCL}}
            = - 
            \frac{1}{|\sI|}\sum_{i \in \sI}\log P(i, a(i))$
            }
            \\
            \colorbox{light-gray}{
            $\gL_\textsc{sCL-PU} 
            = - 
            \frac{1}{|\sI|}\sum_{i\in \sI}
            \Bigg[\1(i\in \sP) \frac{1}{|\sP \setminus i|}\sum\limits_{j \in \sP\setminus i}\log P(i,j)
            + 
            \1(i\in \sU)\frac{1}{|\sU \setminus i|}\sum\limits_{j \in \sU\setminus i}\log P(i,j)
            \Bigg]$ }
            \\ 
            \colorbox{light-gray}{
            $\gL_{\textsc{mCL}}(\lambda) = \lambda \gL_{\textsc{sCL-PU}} 
            + (1 - \lambda)\gL_{\textsc{ssCL}} \; , \; 0 \leq \lambda \leq 1$
            }
            \\
            \colorbox{light-gray}{
            $
            \gL_\textsc{puCL} = \frac{1}{|\sI|}\sum_{i\in \sI} \bigg[\1(\vx_i \in \sP)\ell^i_\textsc{mCL}(1) +  \1(\vx_i \in \sU)\ell^i_\textsc{mCL}(0)\bigg]
            $
            }
            \\
            \colorbox{light-gray}{
            $
            \gL_\textsc{puNCE} = \frac{1}{|\sI|}\sum_{i\in \sI} \bigg[\1(\vx_i \in \sP)\ell^i_\textsc{mCL}(1) +  \1(\vx_i \in \sU)\bigg(\pi \ell^i_\textsc{mCL}(1) + (1-\pi) \ell^i_\textsc{mCL}(0)\bigg)\bigg]
            $
            }
            \\
            \\
            {\em update network parameters} $\mB, \bm{\Gamma}$ to minimize contrastive loss.
        }
        {\bf return:} encoder $g_{\mB}(\cdot)$ and throw away $h_{\bm{\Gamma}}(\cdot)$. 
        \caption{
            \bf Contrastive Representation Learning from PU Data.
        }
        \label{algo:puCL}
\end{algorithm*}
\begin{algorithm*}[t]
        \SetAlgoLined
        \vspace{1em}
        {\bf initialize:}
        PU training data $\gX_\textsc{PU}$; pretrained encoder $g_{\mB}(\cdot): \sR^d \to \sR^k$ via~\cref{algo:puCL}.
        \\
        \\
        {\em obtain representations:}
        \\
        $\gZ_\textsc{P} = \{\vz_i = g_{\mB}(\vx_i) : \forall \vx_i \in \gX_\textsc{P}\}
        \\
        \gZ_{\textsc{U}} = \{\vz_j = g_{\mB}(\vx_j) : \forall \vx_j \in \gX_{\textsc{U}}\}$
        \\
        \\
        {\em initialize pseudo labels :} 
        \\
        $\tilde{y_i} = y_i = 1 : \forall \vz_i\in \gZ_\textsc{P}$ and $\tilde{y_j} = 0 : \forall \vz_j\in \gZ_{\textsc{U}}$
        \\
        \\
        {\em initialize cluster centers: }
        \\
         \colorbox{light-gray}{
        $\mu_{\textsc{P}} = \frac{1}{|\gZ_\textsc{P}|}\sum_{\vz_i \in \gZ_\textsc{P}}\vz_i$ 
        \;,\;
        $\mu_{\textsc{N}} \overset{D(\vx')}{\sim}\gZ_{\textsc{U}}$ 
        \quad
        where, 
        $D(\vx')=\frac{\|\vx' - \mu_{\textsc{P}} \|^2}{\sum_{\vx} \|\vx - \mu_{\textsc{P}}\|^2 }$
        }
        \\
        \\
        \While{not converged}
        {
        {\em update pseudo-label: } 
        \\
        $\forall \vz_i\in \gZ_{\textsc{U}} : \tilde{y_i} = 1$ if $\mu_\textsc{P} = \argmin_{\mu \in \{\mu_\textsc{P}, \mu_\textsc{N}\}}\|\vz_i - \mu\|^2$ else $\tilde{y_i} = 0$
        \\
        $\tilde{\gZ_{\textsc{P}}} = \gZ_\textsc{P} \cup \{\vz_i \in \gZ_\textsc{U} : \tilde{y_i} = 1\} 
        \\
        \tilde{\gZ_{\textsc{N}}} = \{\vz_i \in \gZ_{\textsc{U}} : \tilde{y_i} = 0\}$ 
        \\
        \\
        {\em update cluster centers: } 
        \\
        $\Mu_\textsc{P} = \frac{1}{|\tilde{\gZ_{\textsc{P}}}|}\sum_{\vz_i \in \tilde{\gZ_{\textsc{P}}}}\vz_i$
        \\
        $\Mu_\textsc{N} = \frac{1}{|\tilde{\gZ_{\textsc{N}}}|}\sum_{\vz_i \in \tilde{\gZ_{\textsc{N}}}}\vz_i$
        }
        {\bf return:} 
        $\tilde{\gX}_{\textsc{PU}} = \{ (\vx_i, \tilde{y}_i) :\forall \vx_i \in \gX_{\textsc{PU}} \}$
        \caption{\bf \textsc{puPL}: \underline{P}ositive \underline{U}nlabeled \underline{P}seudo \underline{L}abeling}
        \label{algo:puPL}
\end{algorithm*}
\\
\\
To better navigate the inherent bias-variance trade-off in weakly supervised contrastive learning, we explore a sequence of intermediate strategies grounded in variants of the \textsc{InfoNCE} objective. One approach is {\bf Mixed Contrastive Learning (\textsc{mCL})}~\citep{cui2023rethinking}, which forms a convex combination of the self-supervised loss (\textsc{ssCL}) and the supervised PU variant (\textsc{sCL-PU}). By tuning a mixing coefficient \(\lambda \in [0,1]\), \textsc{mCL} provides an interpolation between the high-variance, unbiased regime of \textsc{ssCL} and the low-variance but potentially biased regime of \textsc{sCL-PU}~\eqref{eq:mcl}. Although flexible, \textsc{mCL} requires careful selection of \(\lambda\) and does not eliminate bias unless tuned precisely. We also consider {\bf Debiased Contrastive Learning (\textsc{dCL})}~\citep{chuang2020debiased}, which corrects for the asymmetry introduced by unlabeled data by reweighting similarity terms according to a known or estimated class prior \(\pi\)~\eqref{eq:dcl}. While \textsc{dCL} can significantly reduce bias in the presence of accurate prior estimates, its performance is sensitive to estimation error—particularly in low-label or high-variance settings where estimating \(\pi\) reliably is difficult.
\\
\\
Motivated by the limitations of these alternatives, we adopt a simple and robust modification to the \textsc{InfoNCE} loss tailored to the PU regime, which we term {\bf Positive-Unlabeled Contrastive Learning (\textsc{puCL})}. It introduces additional attraction terms between pairs of labeled positive samples, while treating unlabeled data entirely via self-supervised augmentation, without making assumptions about their labels. This selective integration of supervision ensures that \textsc{puCL} avoids the bias incurred by \textsc{sCL-PU}, while also reducing the estimator variance compared to \textsc{ssCL}, particularly as the number of labeled positives increases. The resulting objective yields an \textit{unbiased estimator of the population contrastive loss} and exhibits \textit{monotonic variance reduction} as a function of the supervision ratio \(\gamma = n_P / n_U\). Moreover, unlike \textsc{mCL} or \textsc{dCL}, \textsc{puCL} requires no tuning or external prior estimation, and is thus well-suited to PU settings with minimal supervision and limited assumptions. We further support this construction through a bias-variance decomposition, a gradient-based analysis of optimization dynamics, and empirical comparisons across standard PU benchmarks.
\\
\\
When global side information about the data distribution such as class prior \(\pi := \Pr(y = 1)\) is available or can be reliably estimated, it is natural to ask whether this knowledge can be used to further improve contrastive learning under the PU setting. While \textsc{puCL} leverages only the labeled positives and makes no assumptions about the unlabeled data, this formulation overlooks potentially informative constraints imposed by the overall class proportions. Drawing inspiration from classical techniques in importance weighting and probabilistic weak supervision~\citep{elkan2008learning, du2014analysis}, in{\bf ~\cref{sec:puNCE}}, we introduce a prior-aware extension of our objective, which we refer to as {\bf Positive-Unlabeled Noise Contrastive Estimation (\textsc{puNCE})~\eqref{eq:puNCE}}.
\\
\\
The central idea behind \textsc{puNCE} is to treat each unlabeled example as a {\bf probabilistic mixture of positive and negative instances}, with weights given by the known or estimated class prior. Specifically, for each unlabeled sample, we compute contrastive terms as if it were a positive with weight \(\pi\), and a negative with weight \(1 - \pi\). This induces soft positive and negative pairings in expectation, thereby allowing the contrastive objective to make better use of unlabeled data without making hard label assignments.
From a statistical perspective, \textsc{puNCE} can be viewed as an importance-corrected extension of \textsc{puCL} that introduces an {\bf inductive bias} through the prior. When the estimate of \(\pi\) is accurate, this enables the model to better balance attraction and repulsion terms in the loss, resulting in more semantically coherent embeddings, faster convergence, and improved generalization. While, our experiments and ablation studies show that \textsc{puNCE} is robust to moderate errors in \(\pi\) {\bf (\cref{figure:puNCE_pi})}, and consistently outperforms \textsc{puCL} -- especially in low-supervision regimes where the labeled positives alone are insufficient to reliably guide representation learning {\bf (\cref{figure:puNCE_vs_puCL})} -- \textsc{puNCE} comes with the risk of introducing bias when \(\pi\) is miss-specified.
\\
\\
Next, in {\bf \cref{sec:puPL}}, we address the challenge of converting the learned representations into a downstream classifier without access to labeled negatives. A widely used strategy in other semi/weakly-supervised learning is pseudo-labeling~\citep{wang2021pico, bovsnjak2023semppl, zhang2021flexmatch, tsai2022learning, caron2018deep, asano2019self, van2020scan, caron2020unsupervised}, where labels for unlabeled data are inferred based on similarity structure or clustering in the embedding space. However, such approaches remain relatively underexplored in the PU setting~\citep{ yuan2025weighted}, where the lack of labeled negatives introduces additional ambiguity in assigning reliable cluster memberships.
Motivated by this success of label correction in other weakly supervised settings, we propose a simple adaptation of pseudo-labeling for the PU regime, which we call {\bf Positive Unlabeled Pseudo Labeling (\textsc{puPL})}. The key idea is to exploit the geometry of the contrastive embedding space to assign cluster-based pseudo-labels. We implement this by modifying the $k$-means$++$~\citep{arthur2007k} initialization: one cluster is seeded using the centroid of the labeled positives, and the other is selected via $D^2$-weighted sampling over the unlabeled examples. This PU-aware seeding anchors the clustering process and helps disambiguate the assignment of positive and negative pseudo-labels. The resulting labels are then used to train a binary classifier with a standard supervised loss. We describe the full algorithm in~\cref{algo:puPL}.  
% \end{itemize}

\subsection{Contributions} 

Overall, we make several key contributions: 
\begin{itemize}
    \item 
    We systematically examine the behavior of contrastive learning (CL) in the Positive-Unlabeled (PU) setting (\cref{sec:puCL}), beginning with established self-supervised and supervised variants. 
    We investigate several strategies for incorporating weak supervision into CL and characterize the strengths and limitations of each approach in terms of estimator bias, sensitivity to hyper-parameters, and robustness to label sparsity.
    Through this study, we uncover a fundamental bias-variance trade-off (~\cref{th:scl_pu_bias},~\cref{lemma:pucl_unbiased}) that emerges when applying CL under such partial and asymmetric weak supervision.
    
    \item 
    Building on these insights, we propose Positive-Unlabeled Contrastive Learning (\textsc{puCL}), a simple yet effective contrastive objective tailored for the PU setting. \textsc{puCL} treats unlabeled examples via self-supervised learning while using labeled positives to inject structure. We show that \textsc{puCL} is an unbiased and variance-reducing estimator of the population InfoNCE loss, with theoretical guarantees that its performance improves monotonically with the amount of available supervision (\cref{lemma:pucl_unbiased}).

    \item 
    When the class prior \(\pi := \Pr(y = 1)\) is known or can be accurately estimated, we extend \textsc{puCL} to a prior-aware formulation, which we call \textsc{puNCE} (\cref{sec:puNCE}). This loss softly re-weights unlabeled samples as probabilistic mixtures of positives and negatives. Empirically, we find that this inductive bias provided by the class prior can further enhance generalization -- particularly in low-supervision regimes, without requiring hard label assignments or overconfident decisions.

    \item 
    We investigate the role of pseudo-labeling for downstream classification in PU learning (\cref{sec:puPL}), where the absence of labeled negatives poses significant training challenges. To this end, we propose Positive-Unlabeled Pseudo-Labeling (\textsc{puPL}) -- a simple and effective strategy that leverages the structure of the learned contrastive embedding space and introduces a PU-aware modification to the k-means++ initialization. Theoretically, under mild assumptions, we show that \textsc{puPL} achieves an $\gO(1)$ multiplicative error compared to the optimal clustering, and furthermore improves upon the constant factor of standard k-means++ due to its judicious initialization (\cref{th:puPL}). Empirically, we find that \textsc{puPL} enables robust and scalable classification, particularly in low-supervision regimes.

    \item 
    We establish rigorous generalization guarantees for the overall contrastive PU learning framework by leveraging recent tools from augmentation concentration theory. Specifically, we show that the downstream classification error of a non-parametric classifier (e.g., nearest neighbor) is controlled by three key factors: the concentration of the augmentation distribution (captured by parameters $(\sigma, \delta)$) (\cref{def:aug_delta_sigma}), the alignment quality of representations within each class (\cref{lemma:bound_align_puCL}), and the accuracy of pseudo-labeled centroids obtained via \textsc{puPL}. Under mild assumptions (\cref{assumption:aug_non_overlapping}), we prove that the generalization error is bounded by $(1 - \sigma) + R_\epsilon$, where $R_\epsilon$ captures the probability of misalignment between augmented views of the same sample (\cref{th:generalization}).

    \item 
    We conduct extensive experiments across a large set of PU datasets (\cref{sec:exp}), structured around three evaluation setups: (i) a comprehensive PU benchmark comparison against state-of-the-art methods on six standard datasets; (ii) detailed ablations on contrastive learning objectives; and (iii) ablations on downstream classification strategies, evaluating the impact of pseudo-labeling and representation geometry. Our proposed framework -- comprising \textsc{puCL} (when the class prior is unknown) or \textsc{puNCE} (when the prior is available), followed by \textsc{puPL} -- consistently improves over prior art.

\end{itemize}

\section{Related Work}
\label{sec:rel_work}
\subsection*{Positive Unlabeled (PU) Learning.}
Due to the unavailability of negative examples, {\em statistically consistent unbiased risk estimation is generally infeasible}, without imposing strong structural assumptions on $p(\rx)$~\citep{blanchard2010semi, lopuhaa1991breakdown, natarajan2013learning}.
\\
\\
Existing PU learning algorithms primarily differ in the way they handle the semantic annotations of unlabeled examples: 
\\
\\
One set of approaches rely on heuristic based {\em sample selection} where the idea is to identify potential negatives, positives or both samples in the unlabeled set; followed by performing traditional supervised learning using these pseudo-labeled instances in conjunction with available labeled positive data~\citep{liu2002partially, bekker2020learning, luo2021pulns, wei2020mixpul}.
\\
\\
A second set of approaches adopt a {\em re-weighting} strategy, where the unlabeled samples are treated as down-weighted negative examples~\citep{liu2003building, lee2003learning}. 
However, both of these approaches can be difficult to scale, as identifying reliable negatives or finding appropriate weights can be challenging or expensive to tune, especially in deep learning scenarios~\citep{garg2021mixture}. The milestone is~\citep{elkan2008learning}; they additionally assume {\bf a-priori knowledge of class prior} and treat the unlabeled examples as a mixture of positives and negatives. \citep{blanchard2010semi, du2014analysis, kiryo2017positive} build on this idea, and develop {\em statistically consistent and unbiased risk estimators} to perform cost-sensitive learning which has become the backbone of modern large scale PU learning algorithms~\citep{garg2021mixture}. However in practice, $\pi_p^\ast$ is unavailable and must be estimated accurately
\footnote{since inaccurate estimate can lead to significantly poor performance. For example, consider $\pi_p \neq \hat{\pi}_p = 1$ which leads to a degenerate solution i.e. all the examples wrongly being predicted as positives~\citep{chen2020variational}.} 
via a separate Mixture Proportion Estimation (MPE)~\citep{ramaswamy2016mixture, ivanov2020dedpul}, which can add significant computational overhead.
Moreover, even when $\pi_p^\ast$ is available, when supervision is scarce, these approaches can suffer from significant drop in performance or even complete collapse~\citep{chen2020variational} due to the increased variance in risk estimation, which scales as $\sim \gO(1 / n_\textsc{P})$. 
Recent works, alleviate this by combining these estimators with additional techniques. For instance, \citep{chen2021pu} performs self training;~\citep{wei2020mixpul, li2022your} use MixUp to create augmentations with soft labels but can still suffer from similar issues to train the initial teacher model.
\\
Moreover, PU learning is also closely related to other robustness and weakly supervised settings, including learning under distribution shift~\citep{garg2021mixture}, asymmetric label noise~\citep{tanaka2021novel, du2015modelling} and semi-supervised learning~\citep{chen2020big, assran2020supervision, zhou2018brief}.

\subsection*{Contrastive Representation Learning.}
Self-supervised learning has demonstrated superior performances over supervised methods on various benchmarks. Joint-embedding methods~\citep{chen2020simple, Grill2020BootstrapYO, Zbontar2021BarlowTS, Caron2021EmergingPI} are one the most promising approach for self-supervised representation learning where the embeddings are trained to be invariant to distortions. To prevent trivial solutions, a popular method is to apply pulsive force between embeddings from different images, known as contrastive learning. Contrastive loss is shown to be useful in various domains, including natural language processing~\citep{Gao2021SimCSESC}, multimodal learning~\citep{Radford2021LearningTV}. Contrastive loss can also benefit supervised learning~\citep{khosla2020supervised}.

\subsection*{Clustering based Pseudo Labeling.}
Simultaneous clustering and representation learning has gained popularity recently. {\bf DeepCluster}~\citep{caron2018deep} uses off-the-shelf clustering method e.g. kMeans to assign pseudo labels based on cluster membership and subsequently learns the representation using standard CE loss over the pseudo-labels.  However, this standard simultaneous clustering and representation learning framework is often susceptible to degenerate solutions (e.g. trivially assigning all the samples to a single label) even for linear models~\citep{xu2004maximum, joulin2012convex, bach2007diffrac}. {\bf SeLA}~\citep{asano2019self} alleviate this by adding the constraint that the label assignments must partition the data in equally-sized subsets. {\bf Twin contrastive clustering (TCC)} ~\citep{shen2021you}, {\bf SCAN}~\citep{van2020scan}, ~\citep{qian2023stable}, {\bf SwAV}~\citep{caron2020unsupervised, bovsnjak2023semppl} combines ideas from contrastive learning and clustering based representation learning methods to perform simultaneous clusters the data while enforcing consistency between cluster assignments produced for different augmentations of the same image in an online fashion.

\section{Problem Setup}
\label{sec:setup}
\label{sec:problem_setup}
\begin{figure}[t]
\centering
\subfloat
{\includegraphics[width=0.55\textwidth]{plots/problem_setup/pu_problem_visual.pdf}}
\subfloat
{\includegraphics[width=0.45\textwidth]{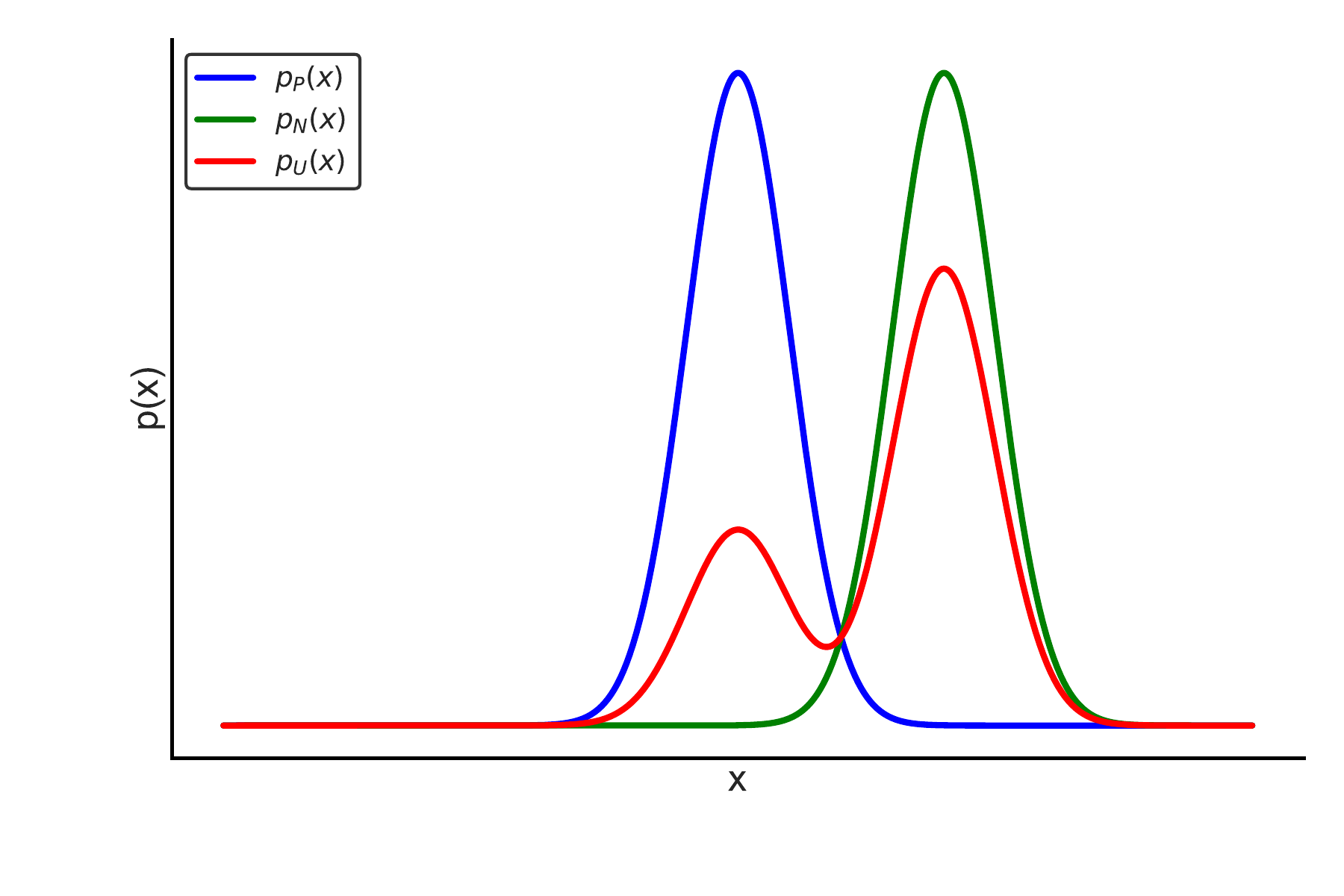}}

\caption{\small{\bf Positive Unlabeled Learning .} 
No negative examples are labeled, a binary classifier needs to be trained using a set of labeled positives $\sim p_\textsc{P}(\rx)$ and a set of unlabeled samples drawn from $\sim p_\textsc{U}(\rx) = \pi_\textsc{P}p_\textsc{P}(\rx)+ (1 - \pi_\textsc{P})p_\textsc{N}(\rx)$ -- the mixture distribution of the positive and negative (unobserved) class marginals .}
\label{fig:problem_setup}
\end{figure}
Formally, let, $\rx \in \sR^d$, $d \in \sN$ and $y \in Y = \{0, 1\}$ denote the underlying input (i.e., feature) and output (label) random variables respectively, and $p(\rx, y)$ denotes the true underlying joint density of $(\rx, y)$. Then, a PU training dataset is composed of a set $\gX_\textsc{P}$ of $n_\textsc{P}$ positively labeled samples and a set $\gX_\textsc{U}$ of $n_\textsc{U}$ unlabeled samples: 
\begin{equation} 
    \gX_\textsc{PU} = \gX_\textsc{P} \cup \gX_\textsc{U}, \;
    \gX_\textsc{P} = \{\vx_i^\textsc{P}\}_{i=1}^{n_\textsc{P}} \sim p(\rx | y=1),\;
    \gX_\textsc{U} = \{\vx_i^\textsc{U} \sim p(\rx)\}_{i=1}^{n_\textsc{U}} 
    \label{eq:pu_dataset}
\end{equation}
In other words, $p(\rx)$ is the mixture distribution of positives and negatives:
\begin{equation}
    p(\rx) = \pi p(\rx|\ry=1) + (1 - \pi) p(\rx|y=0)
    \label{eq:gmm}
\end{equation}
where, $p(\rx|\ry=1) = p_P(\rx)$ and $p(\rx|\ry=0) = p_\textsc{N}(\rx)$ denote the true positive (observed) and negative (unobserved) class marginals and $\pi = p(\ry=1|\rx)$ denotes the {\bf class prior}. 
\\
\\
Since, information about $y$ is unavailable for all samples, it is convenient to define an indicator random variable $s$ such that:
\begin{flalign}
    s = 
    \begin{cases}
        1 \; &\text{if } \vx \text{ is labeled}\\
        0 \; &\text{o/w}
    \end{cases}
\end{flalign}
Now, viewing $\vx, y, s$ as random variables allows us to assume that there is some true underlying distribution $p(\vx, y, s)$ over triplets $(\vx, y, s)$. However, for each sample only $(\vx, s)$ is recorded. The definition of PU dataset~\eqref{eq:pu_dataset} immediately implies the following two results: 
\begin{flalign}
p(y = 1 | s = 1) & = 1
\\
p(s = 1 | y = 0) & = 0
\label{eq.pu.dataset.lemma}
\end{flalign}
This particular setup of how the PU learning dataset is generated - known as the {\bf Case Control Setting}~\citep{bekker2019beyond, blanchard2010semi}; is the most popular and widely studied in the literature. 
While, most of the (theoretical) results in this paper primarily focuses on the case-control setting; it is worth noting that there is another setup called {\bf Single Dataset Setting}, where positive samples are randomly labeled from the data set as opposed to being independent samples from the positive marginal. Thus the unlabeled set is no longer truly representative of the mixture. 
% \textcolor{red}{We experiment with this setup in }
% However our experiments \cref{fig:icml-2024.app.full_nP_contrastive} reveal that contrastive learning is still able learn representations  following similar trends as case-control settings possibly because it is agnostic to how the data is generated unlike unbiased PU Learning methods. 
% For these experiments we keep $n_P + n_U$ fixed while varying $n_P$ i.e. more positives are revealed i.e. at $\gamma = \pi_p$, all the unlabeled samples are actually negative. 
\\
\\
Without the loss of generality, throughout the paper we will assume that the overall classifier $f_\mtheta(\vx): \sR^d \to \sR^{|Y|}$ is parameterized in terms of:
\begin{itemize}
    \item {\bf Encoder}: A feature map $g_{\mB}(\cdot): \sR^d \to \sR^k$ to a lower dimensional manifold referred to as the {\em embedding space} hereafter; and
    \item {\bf Linear Layer}:  $v_{\vv}(\cdot) : \sR^k \to \sR^{|Y|}$, mapping the representations to output (label) space. 
\end{itemize}
Thus, the overall classifier is expressed as the composition:
\begin{equation}
    f_\mtheta(\vx) = v_{\vv} \circ g_{\mB}(\vx)
\end{equation}
The goal in PU learning is to learn $\mtheta = \vv^T \mB$ from $\gX_\textsc{PU} = \gX_P \cup \gX_\textsc{U}$. At a high level, the contrastive framework involves two key steps - (a) learning a mapping function $g_{\mB}(\cdot)$ to a cluster-preserving representation space via contrastive learning and (b) exploit the geometry of the feature space to train the subsequent linear layer $v_{\vv}(\cdot)$. 
% \\
% \\
% Please refer to~\cref{sec:notations} for a glossary of the notations and symbols used in this paper.
% We discuss these ideas in more detail in (\cref{sec:icml-2024.main.puCL},\ref{sec:icml-2024.main.puPL}). 
% learning a mapping function $g_{\mB}(\cdot): \sR^d \to \sR^k$ to a cluster-preserving representation space via contrastive learning and (b) exploit the geometry of the feature space to pseudo-label the representations, used to train the subsequent linear layer $v_{\vv}(\cdot): \sR^k \to \sR^{|Y|}$. 

\section{Background}
\label{sec:background}
% %%%%%%%%%%%%%%%%%%%%%%%%%%%%%%%%%
% {\bf \textsc{Connection to Learning under Class Dependent Label Noise }}
% %%%%%%%%%%%%%%%%%%%%%%%%%%%%%%%%%
% \\
PU Learning~\eqref{eq:pu_dataset} is closely related to the well-studied problem of learning under label noise, where the objective is to robustly train a classifier despite a fraction of training examples being mislabeled. This problem has been extensively studied under both generative and discriminative settings and remains an active area of research~\citep{ghosh2015making, ghosh2017robust, ghosh2021contrastive, wang2019symmetric, zhang2017mixup}.
\subsection{Reduction of PU Learning to Learning with Label Noise}
To illustrate the connection, we frame PU Learning as a {\em special case of binary classification under class-dependent label noise}. Consider, an underlying clean binary dataset $\gX_\textsc{PN}$, i.e. 
\begin{equation} 
    \gX_\textsc{PN} = \gX_{\textsc{P}}^\ast \cup \gX_\textsc{N}, \;
    \gX_\textsc{P}^\ast = \bigg\{\vx_i^\textsc{P}\sim p(\rx | y=1)\bigg\}_{i=1}^{n_\textsc{P}^\ast},\;
    \gX_\textsc{N} = \bigg\{\vx_i^\textsc{N}\sim p(\rx | y=0)\bigg\}_{i=1}^{n_\textsc{N}} 
    \label{eq:supervised_dataset}
\end{equation}
Note that, in the {\bf Label Noise} setting, instead of $\gX_\textsc{PN}$,  a binary classifier needs to be trained from a noisy dataset $\tilde{\gX}_\textsc{PN}$, where the {\bf class conditioned noise rates} i.e. the probability of being mislabeled is $\xi_\textsc{P}$ and $\xi_\textsc{N}$ respectively for the positive and negative samples i.e. 
\begin{equation}
   \tilde{\gX}_\textsc{PN} = \bigg\{(\vx_i, \tilde{y}_i)\bigg\}_{i=1}^{n_\textsc{P}^\ast+n_\textsc{N}}, \; \xi_\textsc{P} = p\bigg(\tilde{y}_i \neq y_i \big| y_i=1\bigg),\; \xi_\textsc{N} = p\bigg(\tilde{y}_i \neq y_i \big| y_i=0\bigg)
\end{equation}
Consider the naive {\bf Disambiguation Free} approach~\citep{li2022your}, where the idea is to pseudo label the PU dataset as follows:
{\em Treat the unlabeled examples as negatives and train an ordinary binary classifier over the pseudo-labeled dataset}. Clearly, since the unlabeled samples (a mixture of positives and negatives) are being pseudo labeled as negative, this is an instance of learning with class dependent label noise:  
\begin{equation} 
    \tilde{\gX}_\textsc{PN} = \gX_\textsc{P} \cup \tilde{\gX}_\textsc{N}, \;
    \gX_\textsc{P} = \bigg\{\vx_i^\textsc{P} \sim p(\rx | y=1) \bigg\}_{i=1}^{n_\textsc{P}},\;
    \tilde{\gX}_\textsc{N} = \bigg\{\vx_i^\textsc{U}\sim p(\rx)\bigg\}_{i=1}^{n_\textsc{U}} 
    \label{eq:disambiguation_free_approach}
\end{equation}
where, the noise rates are: 
\begin{equation}
    E(\xi_\textsc{P}) =  \frac{\pi}{\gamma + \pi} \; \text{and }   \xi_\textsc{N} = 0
\end{equation}
where, $\gamma = \frac{n_\textsc{P}}{n_\textsc{U}}$ and $\pi = p(y=1 | \vx)$ are training distribution dependent parameters. 
\\
\\
Under the standard Empirical Risk Minimization (ERM) framework, the goal is to robustly estimate the true risk from noisy data i.e. for some loss function $\ell(\cdot, \cdot)$ with high probability, we seek: 
\begin{equation}
    \Delta = \bigg\| \hat{\gR}(\mtheta) - \gR(\mtheta^\ast) \bigg\| = \E \bigg\|\ell\bigg(f_{\mtheta}(\vx), \tilde{y}\bigg)- \ell\bigg(f_{\mtheta^\ast}(\vx), y\bigg) \bigg\| \leq \epsilon
\end{equation}
A common way to measure the resilience of an estimator against corruption is via breakdown point analysis~\citep{donoho1983notion, lopuhaa1991breakdown, acharya2022robust}.    
\begin{definition}[\bf Breakdown point]
Breakdown point $\zeta_T$ of an estimator $T(\cdot)$ is simply defined as the smallest fraction of corruption $\psi$ that must be introduced to cause an estimator to break i.e.
\begin{flalign}
    \zeta_T = \inf \left\{ \psi \geq 0 : \sup_{\mathcal{D}_{\mathcal{B}}} \bigg\| T(\gD) - T(\gD_\gG)\bigg\| = \infty \right\}
\end{flalign}
where, $\gD_\gG$ is the uncontaminated (clean) dataset, $\gD_\gB$ represents the corrupted subset of data i.e. $\gD = \gD_\gG \bigcup \gD_\gB$, and $\psi = {|\gD_\gB|}/{|\gD|}$ denotes the fraction of corrupted samples. $T(\cdot)$ is said to achieve the {\bf optimal breakdown point $\zeta^\star_T=1/2$} if it remains bounded $\forall \; 0 \leq \psi < 1/2$. 
\end{definition}
It can be shown that:
\begin{lemma}
\label{lemma:label_noise}
    Consider learning a binary classifier (\textsc{P} vs \textsc{N}) in presence of class-dependent label noise with noise rates $E(\xi_\textsc{P}) =  \frac{\pi}{\gamma + \pi},\; \xi_\textsc{N} = 0$. Without additional distributional assumption, no robust estimator can guarantee bounded risk estimate if: 
    \[
    \gamma \leq 2 \pi - 1
    \]
    where $\gamma = \frac{n_\textsc{P}}{n_\textsc{U}}$ and $\pi = p(y=1|\vx)$ denotes the underlying class prior. 
\end{lemma} 

\begin{proof}
    This result (\cref{lemma:label_noise}) follows from using the fact that for any estimator $0 \leq \psi < \frac{1}{2}$~\citep{lopuhaa1991breakdown, minsker2015geometric, cohen2016geometric, acharya2022robust} i.e. for robust estimation to be possible, the corruption fraction  $\alpha =  \frac{\pi}{\gamma + 1} < \frac{1}{2}$. 
\end{proof}
In summary, {\em this indicates that PU Learning cannot be solved by standard off-the-shelf label noise robust algorithms and specialized algorithms need to be designed}. 
% {\em Is it still possible to overcome the limitations imposed by such high noise rates, for instance by exploiting additional side information?} ?

\subsection{Cost Sensitive PU Learning}
While, without additional assumptions the PU Learning problem is generally infeasible, a natural question remains:
\begin{quote}
    {\em Is it still possible to overcome the limitations imposed by such high noise rates, for instance by exploiting additional side information?}
\end{quote}
Remarkably, by assuming {\bf additional knowledge of the true class prior} $\pi = p(\ry=1 |\rx)$, state-of-the-art (SOTA) cost-sensitive PU learning algorithms tackle this by forming an unbiased estimate of the true risk from PU data~\citep{blanchard2010semi}. 
\\
\\
An unbiased estimate of the negative risk in cost‐sensitive PU learning can be derived via a straightforward application of Bayes’ Rule~\citep{elkan2008learning}. Specifically,
\begin{flalign}
    \hat{R}_\textsc{N}^-(f_\mtheta) = \frac{1}{(1 - \pi)}\Bigg[\hat{R}_\textsc{U}^-(f_\mtheta) - \pi \hat{R}_\textsc{P}^-(f_\mtheta)\Bigg]
\end{flalign}
Substituting this into the overall risk yields the well‐known \textsc{uPU} estimator~\citep{blanchard2010semi, du2014analysis} for $R_\textsc{PN}$:
\begin{flalign}
    \hat{R}_\textsc{uPU}(f_\mtheta) = \pi \hat{R}_\textsc{P}^+(f_\mtheta) + \Bigg[\hat{R}_\textsc{U}^-(f_\mtheta) - \pi \hat{R}_\textsc{P}^-(f_\mtheta)\Bigg]
    \label{eq:uPU}
\end{flalign}
where, we have denoted the empirical estimates computed over PU dataset as: 
\begin{flalign*}
    \hat{R}_\textsc{P}^+(f_\mtheta) = \frac{1}{n_\textsc{P}}\sum_{i=1}^{n_\textsc{P}} \ell\bigg(f_\mtheta(\rx_i^\textsc{P}), 1\bigg)
    ,\quad
    \hat{R}_\textsc{N}^-(f_\mtheta) =\frac{1}{n_\textsc{N}}\sum_{i=1}^{n_\textsc{N}} \ell\bigg(f_\mtheta(\rx_i^\textsc{N}), 0\bigg)
    \\
    \hat{R}_\textsc{P}^-(f_\mtheta) = \frac{1}{n_\textsc{P}}\sum_{i=1}^{n_\textsc{P}} \ell\bigg(f_\mtheta(\rx_i^\textsc{P}), 0\bigg)
    ,\quad
    \hat{R}_\textsc{u}^-(f_\mtheta) =\frac{1}{n_\textsc{U}}\sum_{i=1}^{n_\textsc{U}} \ell\bigg(f_\mtheta(\rx_i^\textsc{U}), 0\bigg)
\end{flalign*}
where, $\ell(\cdot, \cdot): Y\times Y \to \sR$ is the classification loss.
\\
\\
In practice, further improvements are achieved by clipping the estimated negative risk. This approach, known as \textsc{nnPU}~\citep{kiryo2017positive}, modifies the estimator as follows:
\begin{equation}
    \hat{R}_\textsc{nnPU}(f_\mtheta) = \pi \hat{R}_\textsc{P}^+(f_\mtheta) + \max \bigg\{ 0, \;\hat{R}_\textsc{U}^-(f_\mtheta) - \pi \hat{R}_\textsc{P}^-(f_\mtheta) \bigg\}
    \label{eq:nnPU}
\end{equation}
This clipped loss has become the the de-facto approach to solve PU problems in practical settings and forms a strong theoretical baseline for training the downstream PU classifier. 

\subsection{Limitations of Cost Sensitive Approaches}
\label{sec:nnPU_limitation}
Despite their wide adoption in industry, these estimators exhibit notable theoretical and practical limitations, as detailed below:
\paragraph{Class Prior Estimate. } The success of these estimators is based on knowledge of the oracle class prior to $\pi^\ast$ for their success. However, in practice, the true $\pi$ is often not available and must be estimated $\hat{\pi}$ from the data through a separate Mixture Proportion Estimation (MPE) subroutine~\citep{garg2021mixture, ivanov2020dedpul, ramaswamy2016mixture, yao2021rethinking, christoffel2016class, chen2020variational, niu2016theoretical}.
Moreover, an error in class prior estimate $\|\hat{\pi} - \pi^\ast\| \leq \xi$ results in an estimation bias $\sim \gO(\xi)$:
\begin{flalign}
    \bigg\|\hat{R}_\textsc{uPU}(f_\mtheta, \pi) - \hat{R}_\textsc{uPU}(f_\mtheta, \hat{\pi})\bigg\|
    \leq \xi \max_{f_\mtheta}\bigg\|\hat{R}_\textsc{P}^+(f_\mtheta) - \hat{R}_\textsc{P}^-(f_\mtheta)\bigg\|
\end{flalign}
Thus, even small approximation error in estimating the class prior can lead to notable degradation in the overall performance of the estimators resulting in poor generalization, slower convergence or both~\citep{yao2021rethinking, garg2021mixture} as also validated by our experiments in~\cref{fig:puCL_convergence}. Furthermore, obtaining highly accurate approximations with the MPE subroutine often entails considerable computational overhead. This increased computational demand can become a bottleneck, particularly in scenarios where hardware resources are limited or real-time processing is required. As a result, the practicality of using such subroutines may be compromised in resource-constrained environments, thereby necessitating the development of more efficient estimation techniques or alternative strategies to mitigate the associated computational costs.
% \\
% \\
% Unfortunately however in real-world settings e.g. large-scale recommendation the class prior is not available
% and often estimating it with high accuracy using some MPE~\citep{garg2021mixture, ivanov2020dedpul, ramaswamy2016mixture, yao2021rethinking} algorithm can be quite costly.

\paragraph{Low Supervision Regime. } While these estimators are significantly more robust than the vanilla supervised approach, our experiments (~\cref{fig:puPL_gmm}) suggest that they might produce decision boundaries that are not closely aligned with the true decision boundary especially as $\gamma$ becomes smaller~\citep{kiryo2017positive, du2014analysis}. Note that, when available supervision is limited i.e. when $\gamma$ is small, the estimates $\hat{R}_p^+$ and $\hat{R}_p^-$ suffer from increased variance resulting in increase variance of the overall estimator $\sim \gO(\frac{1}{n_\textsc{P}})$. For sufficiently small $\gamma$ these estimators are likely result in poor performance due to large variance.
\\
\\
Some recent works alleviate this by combining these estimators with additional techniques. For instance, \citep{chen2021pu} performs self training;~\citep{wei2020mixpul, li2022your} use MixUp to create augmentations with soft labels but can still suffer from similar issues to train the initial teacher model. 

\section{Contrastive Representation Learning from PU Data}
\label{sec:puCL}
Central to the contrastive approach is the construction of a representation space that fosters the proximity of semantically related instances while enforcing the separation of dissimilar ones. 
One way to obtain such a representation space is via pretext-invariant representation learning where the representations $\vz_i = g_{\mB}(\vx_i) \in \sR^k$ are {\bf trained to be invariant to label-preserving distortions aka augmentations}~\citep{wu2018unsupervised, misra2020self} (\cref{def:augmentation}).
\begin{definition}[Invariance under Transformation]
\label{def:augmentation}
    Consider the data set $\vx \in \gX, y \in Y$ with the underlying ground truth labeling mechanism $y = \gY(\vx) \in Y$. The parameterized representation function $f_{\mtheta}(\cdot)$ is said to be invariant under transformation $t: \gX \to \gX$ that does not change the ground truth label, i.e. $\gY(t(\vx)) = \gY(\vx)$ if $f_{\mtheta}(t(\vx)) \approx f_{\mtheta}(\vx)$.
    \begin{flalign}
        \gY(t(\vx)) = \gY(\vx) \;{\text if}\; f_{\mtheta}(t(\vx)) \approx f_{\mtheta}(\vx).
    \end{flalign}
\end{definition}
To avoid trivial solutions~\citep{tian2021understanding}, a popular trick is to apply an additional repulsive force between the embeddings of semantically dissimilar images, known as contrastive learning~\citep{chopra2005learning, schroff2015facenet,sohn2016improved}. 
\\
\\
In particular, we study minimizing variants of InfoNCE family of losses~\citep{oord2018representation} -- a popular contrastive objective based on the idea of {\em Noise Contrastive Estimation} (NCE), a method of estimating the likelihood of a model by comparing it to a set of noise samples~\citep{gutmann2010noise}:
\begin{equation}
    \gL_\textsc{CL}^\ast =
    \mathop{-\E}_{(\vx_i, y_i) \sim p(\vx, \vy)}
    \mathop{\E}_{\substack{\vx_j \sim p(\vx | y_j = y_i) 
    \\ \{\vx_k\}_{k=1}^N \sim p(\vx | y_k \neq y_i)}}
    % \mathop{\E}_{\vx_j \sim p(\vx | y_j = y_i)}
    \Bigg[ \vz_i \boldsymbol \cdot \vz_j - \log Z(\vz_i) \Bigg]
    \label{eq:asymptotic_scl}
\end{equation}
where, the operator $\boldsymbol\cdot$ and the partition function $Z(\vz_i)$ are defined as: 
\begin{equation}
    \vz_i \boldsymbol \cdot \vz_j = \frac{1}{\tau}\frac{\vz_i^T\vz_j}{\|\vz_i\|\|\vz_j\|} \;,\; Z(\vz_i) = \exp(\vz_i\boldsymbol\cdot \vz_j) + \sum_{k=1}^N \exp(\vz_i\boldsymbol\cdot \vz_k).
\end{equation}
Intuitively, the loss projects the representation vectors onto the hypersphere $\gS^{k-1}_1 =\{\vz \in \sR^k : \|\vz\|=\frac{1}{\tau}\}$ and aims to minimize the angular distance between similar samples while maximizing the angular distance between dissimilar ones. $\tau \in \sR^+$ is a hyper-parameter that balances the spread of the representations on the hypersphere~\citep{wang2020understanding}. 

\subsection{Self Supervised Contrastive Learning (\textsc{ssCL})}
In the unsupervised setting, since {\em identifying similar and dissimilar example pairs from the appropriate class conditionals is intractable};
different augmentations of the same image are treated as similar, while the rest are considered as dissimilar pairs.  
\\
\\
While, several representation learning frameworks~\citep{caron2020unsupervised, Grill2020BootstrapYO, he2020momentum, Zbontar2021BarlowTS} have been proposed to realize the infoNCE family of losses in the finite sample setting, in this paper we focus on the SimCLR (Simple Contrastive Learning) framework~\citep{chen2020simple} for simplicity.
\\
\\
In particular, for any random batch of samples $\gD$, the corresponding {\bf multi-viewed batch} is constructed by obtaining two augmentations (correlated views) of each sample: 
\begin{flalign}
    \gD=\{\vx_i\}_{i=1}^b,\;, \tilde{\gD} = \{t(\vx_i), t'(\vx_i) \}_{i=1}^b
    \label{eq:mv_dataset}
\end{flalign}
where $t(\cdot),t^\prime(\cdot): \sR^d\to\sR^d$ are stochastic label preserving transformations (\cref{def:augmentation}), such as color distortion, cropping, flipping, etc~\footnote{While, for simplicity, in this paper we will only construct one augmentation pair per sample; it is also common to compute the expectation over multiple augmentation pairs~\citep{tian2020contrastive}}. 
\\
\\
Furthermore, following~\citep{saunshi2019theoretical, tosh2021contrastive}, for ease of exposition we make the following simplifying assumption:
\begin{assumption}[Transformation Independence]
\label{assumption:aug_independence}
Let \(\gT\) be a family of stochastic, label-preserving transformations, i.e., for every \(t\in \gT\) and \(\vx\in\gX\), \(\gY\big(t(\vx)\big)=\gY(\vx)\). 
Then, given a sample \((\vx_i, \gY(\vx_i))\), two independent augmented samples $\tilde{\vx}_i, \tilde{\vx}_i'$
are independently and identically distributed draws from the underlying class marginal. i.e.
\begin{flalign}
    \tilde{\vx}_i,\, \tilde{\vx}_i' \stackrel{\text{i.i.d.}}{\sim} p\Big(\vx \mid \gY(\vx_i)\Big).
\end{flalign}
where, 
\(
t(\cdot), t'(\cdot)\in \gT, \quad \tilde{\vx}_i = t(\vx_i), \quad \tilde{\vx}_i' = t'(\vx_i).
\)
\end{assumption}
However, note that,~\cref{assumption:aug_independence} is made solely for the clarity of exposition; in \cref{sec:generalization}, we relax this assumption and derive generalization guarantees by analyzing the concentration properties of the augmentation sets~\citep{huang2023towards}.
\\
\\
To facilitate the subsequent discussion, let us introduce the {\bf index set} $\sI \equiv \{1, \dots, 2b\}$ corresponding to the elements of the multi-viewed batch. For the augmentation indexed by $i \in \sI$, the other augmentation originating from the same source sample is indexed as $a(i)$. 
\\
\\
Then, \textsc{ssCL} minimizes the following objective~\citep{chen2020simple} :
\begin{equation}
    \gL_{\textsc{ssCL}}
    = - 
    \frac{1}{|\sI|}\sum_{i \in \sI}
    \Biggl[\vz_i \boldsymbol \cdot \vz_{a(i)} - \log Z(\vz_i) \Biggr]
\label{eq:sscl}
\end{equation}
where, $Z(\vz_i) = \sum_{j \in \sI}\1(j \neq i) \exp(\vz_i\boldsymbol\cdot \vz_j)$ is the {\em finite-sample approximation of the partition function within the batch}. 
\\
\\
Indeed, if~\cref{assumption:aug_independence} holds, $\gL_{\textsc{ssCL}}$ is an unbiased estimator of $\gL_\textsc{CL}^\ast$~\eqref{eq:asymptotic_scl}. 
Simply put, within each batch of samples, \textsc{ssCL} approximates $\gL_\textsc{CL}^\ast$~\eqref{eq:asymptotic_scl} -- driving the representations of augmented views of the same image (positive pairs) to converge to low-energy wells, while simultaneously pushing the representations of different images (negative pairs) to be separated by high-energy barriers~\citep{ranzato2007unified}. 

\paragraph{Projection Network.} 
In practice, rather than computing the loss over the encoder output, i.e. $\vz_i = g_{\mB}(\vx_i)$; it is beneficial~\citep{chen2020simple, Zbontar2021BarlowTS, Grill2020BootstrapYO, khosla2020supervised,  assran2020supervision} to feed it through a small {\em non-linear projection network} $h_{\mGamma}(\cdot): \sR^k \to \sR^p$ to obtain a lower-dimensional representation $\vz_i = h_{\mGamma} \circ g_{\mB}(\vx_i)\in \sR^p$~\citep{chen2020simple,schroff2015facenet}. Note that {\em the $h_{\mGamma}(\cdot)$ is only used during training and is discarded during inference}. 

\subsection*{Incorporating Supervision.}
Despite its ability to learn robust representations,
\textsc{ssCL} is completely agnostic to semantic annotations, hindering its ability to benefit from additional supervision, especially when such supervision is reliable. This lack of semantic guidance often leads to inferior visual representations compared to fully supervised approaches~\citep{he2020momentum, kolesnikov2019revisiting}. Motivated by these observations, we ask the question:
\begin{quote}
    {\em How to design a contrastive loss that can leverage the available weak supervision in PU learning (in the form of labeled positive examples) in an efficient manner, to learn more discriminative representations compared to} $\gL_{\textsc{ssCL}}$.
\end{quote}
In the remainder of this section, we explore several strategies to integrate additional weak supervision into the contrastive framework. We discuss different modifications to the loss function, analyze their theoretical implications, and evaluate their empirical benefits.

\subsection{Supervised Contrastive Learning (\textsc{sCL})}
In the fully supervised setting~\eqref{eq:supervised_dataset}, \textsc{sCL}~\citep{khosla2020supervised} addresses this issue by utilizing the semantic annotations to guide the choice of similar and dissimilar pairs, resulting in significantly better representations than \textsc{ssCL}. 
\begin{align}
    \gL_{\textsc{sCL}}  
    =
    - \frac{1}{|\sI|}\sum_{i\in \sI}
    \Bigg[
    \Bigg(
    \frac{\1(i \in \sP^\ast)}{|\sP^\ast \setminus i|}\sum\limits_{j \in \sP\setminus i}\vz_i \boldsymbol \cdot \vz_j
    + \frac{\1(i\in \sN) }{|\sN\setminus i|}\sum\limits_{j \in \sN\setminus i}\vz_i \boldsymbol \cdot \vz_j
    \Bigg)
    - \log Z(\vz_i)
    \Bigg] 
\label{eq:scl}
\end{align}
where, $\sP^\ast$ and $\sN$ denote the subset of indices of the samples in the {\bf multi-viewed batch} $\tilde{\gD}$ that are {\bf labeled} positives and negatives respectively. i.e. 
\begin{flalign}
    \sP^\ast= \bigg\{i \in \sI : \ry_i =1\bigg\}, \; \sN = \bigg\{i \in \sI : \ry_i = 0 \bigg\}
\end{flalign}
The indicator function $\1(\cdot)$ selects the appropriate term depending on whether the anchor is a labeled positive or labeled negative sample. 
\\
\\
Clearly, under~\cref{assumption:aug_independence}, $\gL_{\textsc{sCL}}$~\eqref{eq:scl} is a consistent and unbiased estimator of the asymptotic objective $\gL_{\textsc{CL}}^\ast$~\eqref{eq:asymptotic_scl}. Furthermore, since the expected similarity of positive pairs is computed on all available samples of the same marginal class as anchor, this loss enjoys a lower variance, compared to its self-supervised counterpart~\eqref{eq:sscl}. This variance reduction $\sim \gO(1/|\sI|^2)$ often results in significant empirical gains.
\begin{figure}[t] 
\centering
\setlength{\abovecaptionskip}{0pt} 
\subfloat[$\kappa = \frac{\pi(1 -\pi)}{1 + \gamma}$]{
\includegraphics[width=0.35\textwidth]{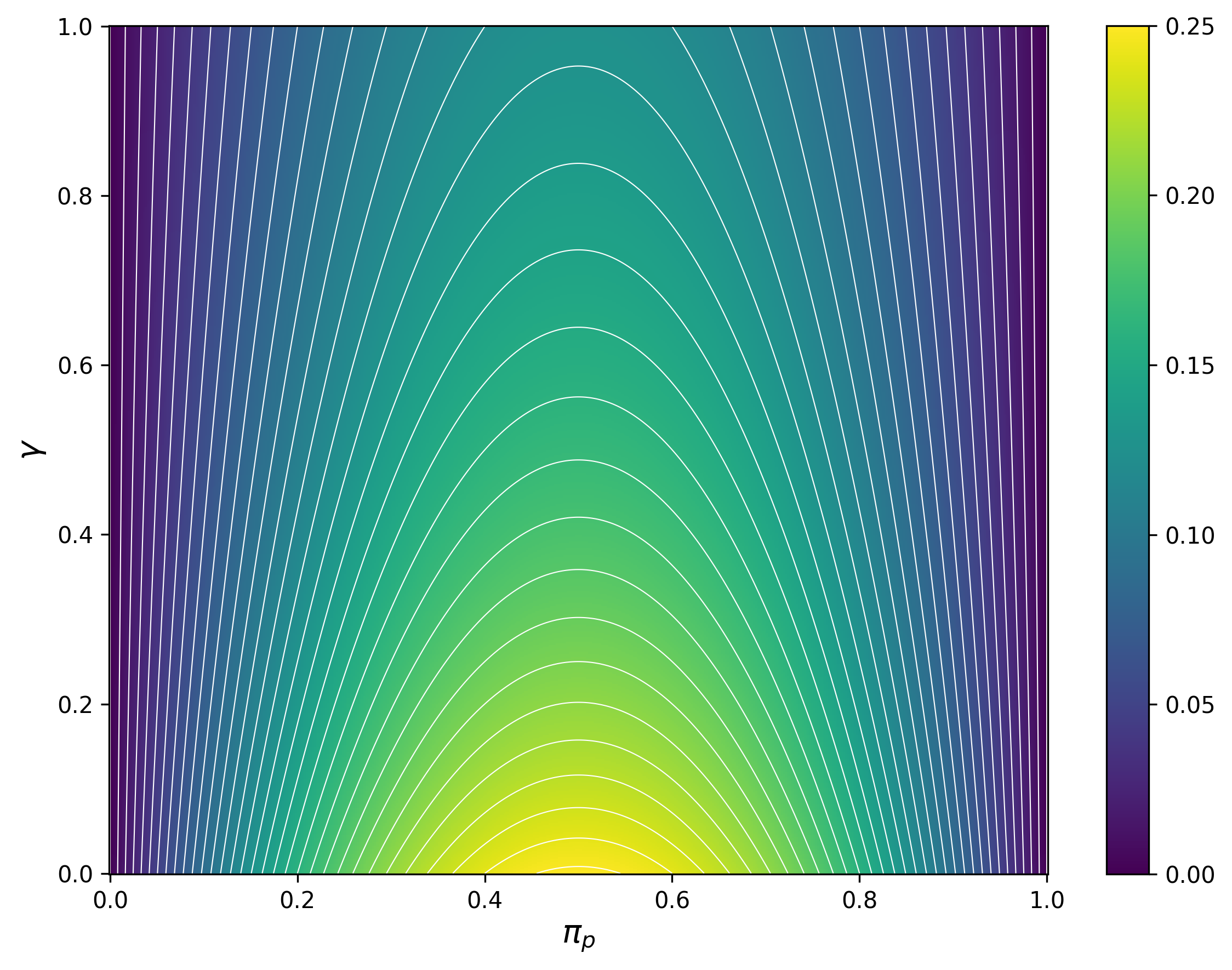}
}
\subfloat[\bf Generalization ($\kappa$)]
{\includegraphics[width=0.35\textwidth]{plots/kappa_variation/kappa_var_3D.pdf}}
\\
\subfloat[\bf Generalization ($\kappa$)]
{\includegraphics[width=0.9\textwidth]{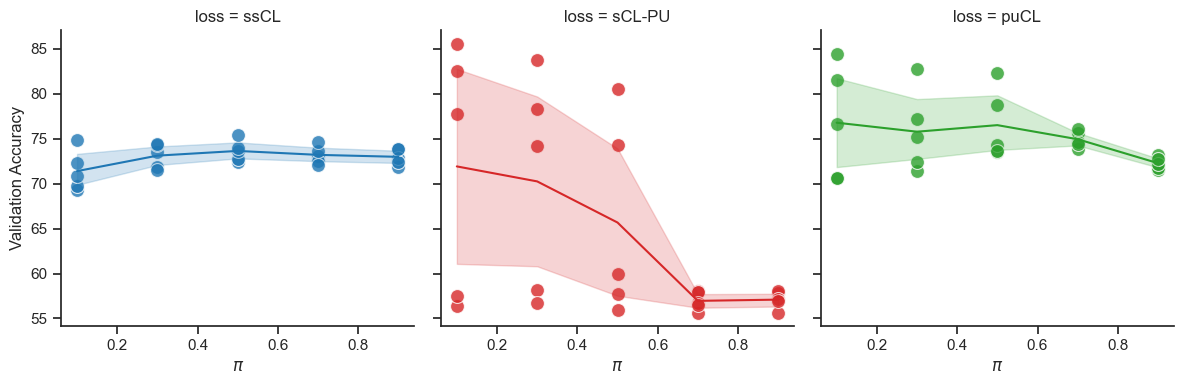}}
\caption{\footnotesize {\bf (Ablations over Varying $\kappa$)} ResNet-34 trained on ImageNet-I (a) Variation of $\kappa$ w.r.t class prior ($\pi_p$) and PU supervision ($\gamma$) (b) Generalization performance of contrastive objectives with varying $\kappa$. (c) 2D visualization of (b) across each loss.}
\label{fig:kappa_var_3D}
\end{figure}
\\
\\
However, unfortunately in PU learning~\eqref{eq:pu_dataset}, {\em since $\sN$ is intractable (as no labeled negatives are available), and only a subset of labeled positives $\sP \subseteq \sP^\ast$ is available, it is non-trivial to extend \textsc{sCL} in this setting}. 
Naive disambiguation-free adaptation~\eqref{eq:disambiguation_free_approach} of \textsc{sCL}~\eqref{eq:scl} gives: 
\begin{equation}
\begin{aligned}
    \gL_\textsc{sCL-PU} 
    = - 
    \frac{1}{|\sI|}\sum_{i\in \sI}
    \Bigg[
    \Bigg(
    \frac{\1(i\in \sP)}{|\sP \setminus i|}\sum\limits_{j \in \sP\setminus i}\vz_i \boldsymbol \cdot \vz_j
    + \frac{\1(i \in \sU) }{|\sU\setminus i|}\sum\limits_{j \in \sU\setminus i}\vz_i \boldsymbol \cdot \vz_j
    \Bigg)
    - \log Z(\vz_i)
    \Bigg] 
\end{aligned}
\label{eq:scl_pu}
\end{equation}
$\sP$ and $\sU$ denote the subset of indices in $\tilde{\gD}$ that are labeled positive and unlabeled respectively:
\begin{flalign}
    \sP= \bigg\{i \in \sI \;\big|\; \vx_i \in \gX_\textsc{P}, \rs_i = \ry_i =1\bigg\}, \; \sU = \bigg\{i \in \sI \;\big|\; \vx_i \in \gX_\textsc{U}, \rs_i = 0 \bigg\}
    \label{eq:pu_dataset_batch}
\end{flalign}
Using linearity of expectation, we can show that $\gL_\textsc{sCL-PU}$~\eqref{eq:scl_pu} suffers from statistical bias in estimating $\gL_{\textsc{CL}}^\ast$. This bias becomes increasingly pronounced as the level of available supervision decreases as characterized in~\cref{th:scl_pu_bias}.
% \begin{lemma}
% \label{lemma:scl_pu_bias}
% $\gL_{\textsc{sCL-pu}}$~\eqref{eq:scl_pu} is a biased estimator of $\gL_{\textsc{CL}}^\ast$~\eqref{eq:asymptotic_scl} characterized as follows:
% \begin{flalign}
%     \mathop{\E}_{\gX_{\textsc{PU}}} 
%     \bigg[ 
%     \gL_{\textsc{sCL-PU}}
%     \bigg]
%     - \gL_{\textsc{CL}}^\ast
%      = \frac{\pi (1 - \pi)}{ 1 + \gamma} 
%     \Bigg[ 2 \tilde{\mu}_{\textsc{pn}} - \left( \mu_{\textsc{p}}^\ast + \mu_{\textsc{n}}^\ast \right)
%     \Bigg]
% \end{flalign}
% where, $\mu_P^\ast = \mathop{\E}_{\substack{\vx_i,\vx_j \sim p(\vx | y=1)}}\big( \vz_i \boldsymbol \cdot \vz_j \big)$ and $\mu_N^\ast = \mathop{\E}_{\substack{\vx_i,\vx_j \sim p(\vx | y=0)}}\big( \vz_i \boldsymbol \cdot \vz_j \big)$ capture the proximity between samples from same class marginals and $\tilde{\mu}_{PN} = \mathop{\E}_{\substack{\vx_i,\vx_j \sim p(\vx | \ry_i \neq \ry_j)}}\big( \vz_i \boldsymbol \cdot \vz_j \big)$ captures the proximity between dissimilar samples. $\gamma = n_{\textsc{P}}/n_{\textsc{U}}$ captures the proportion of positive to unlabeled samples in the dataset. $\kappa = \pi (1 - \pi)/(1 + \gamma)$ is a dataset dependent constant.
% \end{lemma}
%%%%%%%%%%%%%%%%%%%%%%%%%
\begin{theorem}
\label{th:scl_pu_bias}
Under~\cref{assumption:aug_independence}, $\gL_{\textsc{sCL-pu}}$~\eqref{eq:scl_pu} is a biased estimator of the population risk $\gL_{\textsc{CL}}^\ast$~\eqref{eq:asymptotic_scl} characterized as follows:
\begin{flalign}
    \mathop{\E}_{\gX_{\textsc{PU}}} 
    \bigg[ 
    \gL_{\textsc{sCL-PU}}
    \bigg]
    - \gL_{\textsc{CL}}^\ast
     &= 2 \kappa
    \bigg(\rho_{intra} - \rho_{inter} \bigg)
\end{flalign}
where, $\rho_{intra}=\mathop{\E}_{\substack{\vx_i,\vx_j \sim p(\vx | y_i=y_j)}}\big( \vz_i \boldsymbol \cdot \vz_j \big)$ 
captures the concentration of embeddings of samples from same latent class marginals and 
$\rho_{inter} = \mathop{\E}_{\substack{\vx_i,\vx_j \sim p(\vx | \ry_i \neq \ry_j)}}\big( \vz_i \boldsymbol \cdot \vz_j \big)$ 
captures the expected proximity between embeddings of dissimilar samples. $\gamma = n_{\textsc{P}}/n_{\textsc{U}}$ and $\kappa = \pi (1 - \pi)/(1 + \gamma)$ are dataset dependent constants.
\end{theorem}
%%%%%%%%%%%%%%%%%%%%%%%%%
\cref{th:scl_pu_bias} reveals that the bias -- that stems from the implicit use of the unlabeled set as a surrogate negative class -- scales with $\Delta_\rho = (\rho_{intra} - \rho_{inter})$, quantifying the separability of the representation manifold and  $\kappa_\textsc{PU}$, a dataset specific parameter. 
\\
\\
$\Delta_\rho$ quantifies the separability of the representation space: low values indicate poor clustering, while high values reflect strong intra-class cohesion and inter-class separation. The bias penalizes settings where negative pairs (across classes) are not well-separated, or where same-class pairs are not tightly clustered -- both of which are common in low-data regimes or early in training.
Furthermore, even when the representation space is well clustered i.e. $\Delta_\rho$ is small, the overall bias can still be significant if $\kappa$ is sufficiently large -- when $\gamma \ll 1$, i.e., when labeled positives are scarce. In the extreme case where $\gamma \to \infty$, all examples are labeled and the bias vanishes. These theoretical observation are also supported via ablations across different dataset conditions as described in~\cref{sec:exp}. 
\\
\\
In summary,
{\em  While $\gL_{\textsc{sCL-PU}}$ suffers from significant drop in generalization performance in the low-supervision regime; it still results in significant improvements over the unsupervised $\gL_{\textsc{ssCL}}$ when sufficient labeled positives are available. This indicates a {\bf bias-variance trade-off} that can be further exploited to arrive at an improved loss.}

%%%%%%%%%%%%%%%%%%%%%%%%%%%%%%%%%%%%%%%
\subsection{Mixed Contrastive Learning (\textsc{mCL})}
%%%%%%%%%%%%%%%%%%%%%%%%%%%%%%%%%%%%%%%
\begin{figure} 
\small 
\centering
\setlength{\abovecaptionskip}{0pt} 
\subfloat[\bf \textsc{mCL}($\lambda$)]
{\includegraphics[width=0.46\textwidth]{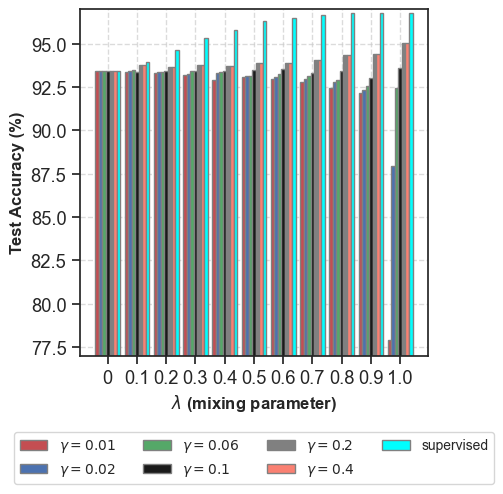}}
\subfloat[\bf Generalization ($\gamma$)]{
\includegraphics[width=0.4\textwidth]{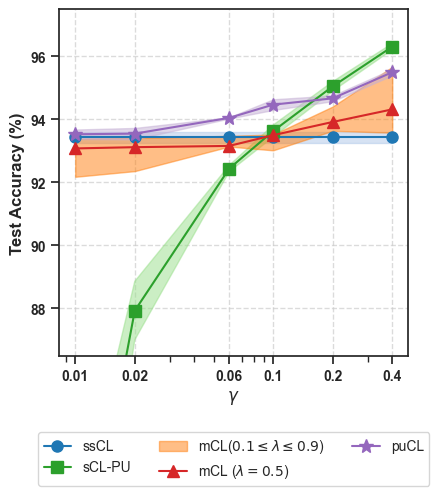}
}
\caption{\footnotesize {\bf Mixed Contrastive Learning} ResNet-18 trained on CIFAR-III (vehicle vs animal). (a) Variation of $\kappa$ w.r.t class prior ($\pi_p$) and PU supervision ($\gamma$) (b) Generalization performance of contrastive objectives with varying $\kappa$.}
\label{fig:mcl}
\end{figure}
A natural approach to interpolating between the robustness of self-supervised contrastive learning (\textsc{ssCL}) and the generalization benefits of positive-unlabeled supervised contrastive learning (\textsc{sCL-PU}) is to consider a convex combination of their respective objectives. This hybrid formulation, which we refer to as Mixed Contrastive Learning (\textsc{mCL}), is motivated by similar strategies shown to be effective in learning under label noise~\citep{cui2023rethinking}, and is well-suited to the PU learning regime. (\textsc{mCL}) is defined as follows: 
\begin{equation}
    \gL_{\textsc{mCL}}(\lambda) = \lambda \gL_{\textsc{sCL-PU}} 
    + (1 - \lambda)\gL_{\textsc{ssCL}} \; , \; 0 \leq \lambda \leq 1
    \label{eq:mcl}
\end{equation} 
Here, $\gL_{\textsc{ssCL}}$ captures unsupervised learning by encouraging consistency across augmented views of the same input (i.e., enforcing similarity between $\vz_i$ and $\vz_{a(i)}$ for all $i \in \sI$). In contrast, $\gL_{\textsc{sCL-PU}}$ injects supervision using available positive labels, guiding the model to capture semantically meaningful structures within the representation space.
However, in the PU learning setting, the supervision signal provided to $\gL_{\textsc{sCL-PU}}$ is noisy, as unlabeled examples may belong to either class. Consequently, the performance of \textsc{mCL} is sensitive to the mixing coefficient $\lambda$, which governs the trade-off between the two objectives. 
\\
\\
This sensitivity reflects a classic bias-variance trade-off: stronger reliance on supervision (larger $\lambda$) can improve generalization when labels are reliable, but may introduce bias under noisy or sparse supervision.
We empirically validate this trade-off via extensive ablation studies across a range of $\lambda$ values and supervision levels (quantified by the labeled-to-unlabeled ratio $\gamma = n_\textsc{P}/n_\textsc{U}$). As shown in Figure~\ref{fig:mcl}, we observe that:
{\em when available supervision is limited i.e. for small values of $\gamma$ a smaller value of $\lambda$ (i.e. less reliance on supervised part of the loss) is preferred. Conversely, for larger values of $\gamma$ larger contribution from the supervised counterpart is necessary suggesting the need for more careful mixing in the PU setting.}
\\
\\
These observations underscore a key limitation of the current formulation: \textsc{mCL} applies a fixed mixing coefficient $\lambda$ uniformly across all training instances. However, in practice, different samples may vary in the reliability of their supervision signal. A more adaptive approach—e.g., assigning instance-specific weights—could better capture this heterogeneity and improve learning in the PU setting.
% Since, the success of \textsc{mCL} is sensitive to the appropriate choice of $\lambda$,
% tuning which can be quite challenging and depends on the dataset and amount of available supervision, making \textsc{mCL} often less practical in the real world PU learning scenario. Thus, overall, \textsc{puCL} is a more practical method as it alleviates the need for hyper-parameter tuning and works across various PU Learning scenarios while not suffering from performance degradation. 

%%%%%%%%%%%%%%%%%%%%%%%%%%%%%%%%%%%%%%%
\subsection{Positive Unlabeled Contrastive Learning (\textsc{puCL}).}
%%%%%%%%%%%%%%%%%%%%%%%%%%%%%%%%%%%%%%%
\begin{figure*}[t]
\small 
\centering
\subfloat
[\bf \textsc{ssCL}]
{\includegraphics[width=0.25\textwidth]{plots/tsne/imagenet-ssCL.pdf}}
\subfloat
[\bf \textsc{puCL}($\gamma=0.2$)]
{\includegraphics[width=0.25\textwidth]{plots/tsne/imagenet-gamma=0.2.pdf}}
\subfloat
[\bf \textsc{puCL}($\gamma=0.5$)]
{\includegraphics[width=0.25\textwidth]{plots/tsne/imagenet-gamma=0.5.pdf}}
\subfloat
[\bf \textsc{sCL}(supervised)]
{\includegraphics[width=0.25\textwidth]{plots/tsne/imagenet-sup-sCL.pdf}}
\caption{\footnotesize  
\textbf{Embedding Quality vs. Supervision Ratio ($\gamma$).}  
We visualize the learned feature embeddings (t-SNE) from a ResNet-18 trained on the ImageNet-II dataset using different contrastive learning methods. The supervision ratio \( \gamma = n_{\textsc{P}} / n_{\textsc{U}} \) controls the proportion of labeled positives, while the total number of training samples \( N = n_{\textsc{P}} + n_{\textsc{U}} \) is held fixed. Compared to the unsupervised baseline \textsc{ssCL}, our proposed \textsc{puCL} yields substantially improved class separability, which improves consistently with increasing \(\gamma\). This highlights the benefit of incorporating even limited supervision. The {\bf fully supervised \textsc{sCL} serves as an upper bound} in terms of embedding structure with similar training hyper-parameters.
}
% \caption{\footnotesize {\bf  Embedding Quality ($\gamma$):}  We visualize the learned feature embeddings (t-SNE) from a ResNet-18 trained on the ImageNet-II dataset using different contrastive learning methods. The supervision ratio $\gamma = n_\textsc{P}/{n_\textsc{U}}$ controls the proportion of labeled positive to unlabeled data, while the total number of training samples $N = n_\textsc{P} + n_\textsc{U}$ remains fixed. Compared to the unsupervised baseline \textsc{ssCL}, \textsc{puCL} produces embeddings with markedly better class separability, and the quality improves consistently with increasing $\gamma$ underlying the importance of incorporating labeled positives. \textsc{sCL} trained on supervised data serves as an upper bound.}
\label{fig:tsne_puCL}
\end{figure*}
Motivated by the {\bf bias-variance trade-off}, we ask the natural question:
\begin{quote} {\em Can we design a contrastive loss that enjoys low variance by leveraging information about labeled positives, but avoids the bias introduced by incorrect assumptions about unlabeled examples?} \end{quote}
To this end, we propose Positive-Unlabeled Contrastive Learning (\textsc{puCL})—a simple and effective modification to the contrastive framework that strikes a principled balance between bias and variance in the PU setting. 
Unlike $\gL_{\textsc{sCL-PU}}$~\eqref{eq:scl_pu}, which introduces bias by treating the unlabeled set as a surrogate negative class, \textsc{puCL} avoids making any explicit assumptions about the labels of the unlabeled data. Instead, it retains the self-supervised assumption for unlabeled examples—namely, that different augmentations of the same image should remain close in the learned representation space. At the same time, \textsc{puCL} uses the available labeled positives to form additional attractive pairs, thereby improving the stability of the learned representations. This is in the same spirit as the semi-supervised objective~\citep{assran2020supervision}, but is tailored for the PU learning setup. 
\\
\\
In particular, the modified objective dubbed \textsc{puCL} leverages the available supervision as follows -- each labeled positive anchor is attracted closer to all other labeled positive samples in the batch, whereas an unlabeled anchor is only attracted to its own augmentation. 
\begin{equation}
    \gL_{\textsc{puCL}}
    = - 
    \frac{1}{|\sI|}\sum_{i \in \sI}
    \Biggl[\1(i\in \sU) \bigg(\vz_i \boldsymbol \cdot \vz_{a(i)}\bigg)
    + \frac{\1(i\in \sP)}{|\sP\setminus i|}\sum\limits_{j \in \sP\setminus i}\vz_i \boldsymbol \cdot \vz_j
    - \log Z(\vz_i) \Biggr]
\label{eq:puCL}
\end{equation}
where, $\sP$ and $\sU$ denote the subset of sample indices in the {\bf multi-viewed batch} $\tilde{\gD}$ that are {\bf labeled} positives and unlabeled respectively~\eqref{eq:pu_dataset_batch}. The indicator function $\1(\cdot)$ selects the appropriate term depending on whether the anchor is a labeled positive or unlabeled.
\\
\\
{\bf \textsc{puCL}\eqref{eq:puCL} can be viewed as a sample-adaptive extension of \textsc{mCL}}~\eqref{eq:mcl}, where the mixing coefficient $\lambda$ is chosen per instance. Specifically, unlabeled samples use $\gL_{\textsc{mCL}}(\lambda = 0)$ and labeled positives use $\gL_{\textsc{mCL}}(\lambda = 1)$, i.e.
\begin{flalign}
    \gL_\textsc{puCL} = \frac{1}{|\sI|}\sum_{i\in \sI} \bigg[\1(\vx_i \in \sP)\ell^i_\textsc{mCL}(1) +  \1(\vx_i \in \sU)\ell^i_\textsc{mCL}(0)\bigg]
\end{flalign}
This adaptive strategy enables \textsc{puCL} to better interpolate between supervised and unsupervised contrastive learning in a principled and data-dependent manner.
\\
\\
Under (\cref{assumption:aug_independence}), it is easy to verify that $\gL_{\textsc{puCL}}$ is an unbiased estimator of the population contrastive loss $\gL^\ast_{\textsc{CL}}$~\eqref{eq:asymptotic_scl}, unlike its biased counterpart $\gL_{\textsc{sCL-PU}}$. Moreover, by aggregating across the labeled positives, \textsc{puCL} reduces estimation variance over its unsupervised counterpart $\gL_\textsc{ssCL}$ as formalized in~\cref{lemma:pucl_unbiased}.
\begin{figure*}[t] 
  \centering
  \subfloat[{\bf CIFAR-0}]
 {\includegraphics[width=0.4\textwidth]{plots/convergence/fig.CIFAR_convergence.pdf}}
  \subfloat[{\bf ImageNet-II}]
 {\includegraphics[width=0.4\textwidth]{plots/convergence/fig.imageNet_convergence.pdf}}
  \setlength{\abovecaptionskip}{0pt} 
  \caption{\footnotesize {\bf Convergence:} Training ResNet-18 on 
  (a) {\bf CIFAR-0} (b) {\bf ImageNet-II}. Clearly, by incorporating more labeled positives \textsc{puCL} enjoys convergence speedup over \textsc{ssCL}.}
  \label{fig:puCL_convergence}
\end{figure*}
\begin{lemma}
    \label{lemma:pucl_unbiased}
    If~\cref{assumption:aug_independence} holds, then
    $\gL_{\textsc{ssCL}}$~\eqref{eq:sscl} and $\gL_{\textsc{puCL}}$~\eqref{eq:puCL} are unbiased estimators of  $\gL_{\textsc{CL}}^\ast$~\eqref{eq:asymptotic_scl}. Additionally, it holds that:
    \begin{flalign}
        \Delta_\sigma(\gamma) \geq 0 \;\;\forall \gamma \geq 0\\
        \Delta_\sigma(\gamma_1) \geq \Delta_\sigma(\gamma_2) \quad \forall \gamma_1 \geq \gamma_2 \geq 0
    \end{flalign}
    where, $\Delta_\sigma(\gamma)= 
    \mathrm{Var}(\gL_{\textsc{ssCL}}) - \mathrm{Var}(\gL_{\textsc{puCL}})$ and $\gamma = n_\textsc{P}/n_\textsc{U}$. 
\end{lemma}
This result suggests that for PU learning $\gL_{\textsc{puCL}}$ is a {\bf statistically more efficient} estimator of $\gL^\ast_\textsc{CL}$ compared to $\gL_\textsc{ssCL}$. Furthermore, this improvement is strictly monotonic in $\gamma$, meaning that \textsc{puCL} becomes increasingly favorable as the fraction of labeled positives grows. Overall, \textsc{puCL} strikes a principled balance between bias and variance in the PU setting by integrating weak supervision in a cautious manner: labeled positives are utilized to strengthen semantic cohesion, while unlabeled samples are treated conservatively to prevent bias.
\\
\\
Consequently, $\gL_{\textsc{puCL}}$ consistently results in {\bf improved generalization} over its unsupervised counterpart; as also validated by our empirical findings (\cref{fig:nP_contrastive}-\ref{fig:mcl}).
These improvements become more pronounced with increased PU supervision, as also indicated by the better separability of the resulting embedding space (\cref{fig:tsne_puCL}). 
\\
\\
We further analyze the {\bf training dynamics} of \textsc{puCL} by studying the gradient expressions and comparing them to the unsupervised baseline \textsc{ssCL} and the fully supervised contrastive loss. This analysis highlights how PU supervision reduces the sampling bias inherent in \textsc{ssCL}, and how this, in turn, improves convergence behavior.
\\
\\
The gradient of the unsupervised contrastive loss \textsc{ssCL} is given by:
\begin{equation}
    \nabla(\gL_{\textsc{ssCL}})
    = - 
    \frac{1}{|\sI|} \sum_{i \in \sI} \frac{1}{\tau} \left[
        \vz_{a(i)} \left( 1 - P_{i,a(i)} \right)
        - \sum_{j \in \sI \setminus \{i, a(i)\}} \vz_j P_{i,j}
    \right],
\end{equation}
where, \( P_{i,j} \) denotes the softmax-normalized similarity between \(\vz_i\) and \(\vz_j\).
\begin{equation}
    P_{i,j} = \frac{\exp(\vz_i\boldsymbol \cdot \vz_j)}{Z(\vz_i)}
\end{equation}
In \textsc{ssCL}, all samples other than the augmentation \( a(i) \) are implicitly treated as negatives, including true positives. This incorrect assumption introduces \emph{gradient bias}, since the model is pushed away from examples that are semantically similar to the anchor. This effect is especially pronounced in PU settings where many positive samples remain unlabeled.
\\
\\
In contrast, the gradient of \textsc{puCL} is:
\begin{equation}
    \nabla(\gL_{\textsc{puCL}})
    = - 
    \frac{1}{|\sI|} \sum_{i \in \sI} \frac{1}{\tau} \left[
        \sum_{q \in \sP} \vz_q \left( \frac{1}{|\sP|} - P_{i,q} \right)
        - \sum_{j \in \sU \setminus \{i\}} \vz_j P_{i,j}
    \right],
\end{equation}
The key distinction here is that \textsc{puCL} uses explicit supervision to attract anchors toward all available labeled positives and refrains from making strong assumptions about the unlabeled points. Instead of treating them as negatives, it softly pushes away only those that appear dissimilar, leading to better representation quality and lower bias and thereby, the gradients of \textsc{puCL} form a closer approximation to those of the ideal supervised objective(~\citep{khosla2020supervised}).
This improved gradient alignment often leads to more stable and faster convergence in practice as validated by our experiments (\cref{fig:puCL_convergence}). The gradient derivations can be found in~\cref{sec:proof.gradient}. 

\begin{figure*}[t] 
\small 
\centering
\subfloat[\bf Unsupervised: Semantic similarity obeying]{\includegraphics[width=0.7\textwidth]{plots/point_config/point_config_unsup.pdf}}
\\
\subfloat[\bf Supervised: Semantic annotation obeying]{\includegraphics[width=0.7\textwidth]{plots/point_config/point_config_sup.pdf}}
\\
\subfloat[\bf PU Supervised: Semantic similarity and annotation obeying]{\includegraphics[width=0.8\textwidth]{plots/point_config/point_config_both.pdf}}
\caption{\footnotesize \textbf{Geometric Intuition of Incorporating Supervision:} Consider 1D feature space $\vx \in \mathbb{R}$, e.g., $\vx_i = 1$ if shape: triangle (\textcolor{blue}{$\blacktriangle$}, \textcolor{red}{$\blacktriangle$}), $\vx_i = 0$ if shape: circle (\textcolor{blue}{\textbullet}, \textcolor{red}{\textbullet}). However, the labels are $y_i=1$ if color: blue (\textcolor{blue}{$\blacktriangle$},\textcolor{blue}{\textbullet}) and $y_i=1$ if color: red (\textcolor{red}{$\blacktriangle$}, \textcolor{red}{\textbullet}). We show possible configurations (other configurations are similar) of arranging these points on the vertices of unit hypercube $\gH \in \sR^2$ when \textcolor{blue}{$\blacktriangle$} is fixed at $(0, 1)$. (a) Unsupervised objectives e.g. \textsc{ssCL}~\eqref{eq:sscl} only rely on semantic similarity (feature) to learn embeddings, implying they attain minimum loss configuration (shaded) when semantically similar objects are places close to each other (neighboring vertices on $\gH^2$). (b) Supervised objectives on the other 
All the four shaded point configurations are favored by \textsc{ssCL}, since $\vx_i = \vx_j$ are placed neighboring vertices. However, the minimum loss configurations of  
  \textsc{puCL} additionally also preserves annotation consistency. 
  }
\label{fig:point_config}
\end{figure*}
%%%%%%%%%%%%%%%%%%%%%%%%%%%%%%%%%%%%%%%
\subsection{Geometric Intuition: Minimum Energy Configurations.}
%%%%%%%%%%%%%%%%%%%%%%%%%%%%%%%%%%%%%%%
In essence, the unsupervised part in \textsc{puCL} enforces consistency between representations learned via label-preserving augmentations i.e. between $\vz_i$ and $\vz_{a(i)} \; \forall i \in \sI$, whereas the supervised component injects structural knowledge derived from labeled positives. 
To further dissect understand various contrastive losses navigate the trade-off between semantic annotation (labels) and semantic similarity (features), we analyze their corresponding minimum energy configurations~\citep{graf2021dissecting, lecun2006tutorial, ranzato2007unified}.
\\
\\
Specifically, consider the following setup: $\vx_i = 1$ if the object is a triangle (\textcolor{blue}{$\blacktriangle$}, \textcolor{red}{$\blacktriangle$}), and $\vx_i = 0$ if it is a circle (\textcolor{blue}{\textbullet}, \textcolor{red}{\textbullet}). Labels, however, depend solely on color: $y_i = 1$ for blue (\textcolor{blue}{$\blacktriangle$}, \textcolor{blue}{\textbullet}) and $y_i = 0$ for red (\textcolor{red}{$\blacktriangle$}, \textcolor{red}{\textbullet}). Thus, $p(\vx)$ provides no information about $p(y \mid \vx)$.
To analyze the behavior of different variants of contrastive objectives, we embed each of the four types of samples — (\textcolor{blue}{$\blacktriangle$}, \textcolor{red}{$\blacktriangle$}, \textcolor{blue}{\textbullet}, \textcolor{red}{\textbullet}) — on the vertices of a 2D hypercube $\gH^2 \subset \mathbb{R}^2$. Each sample is assigned to a unique vertex, yielding different configurations as depicted in~\cref{fig:point_config}.  Without loss of generality, we anchor the blue triangle (\textcolor{blue}{$\blacktriangle$}) at $(0, 1)$ to eliminate rotational symmetry.
\\
\\
Unsupervised objectives, e.g.,~\textsc{ssCL}~\eqref{eq:sscl} only rely on semantic similarity (feature) to learn embeddings, implying they attain minimum loss configuration when semantically similar objects $\vx_i = \vx_j$ are placed close to each other (neighboring vertices on $\gH^2$) since this minimizes the inner product between representations of similar examples(\cref{fig:point_config}(a)).
\\
\\
Supervised objectives on the other hand, update the parameters such that the logits match the label. Thus purely supervised objectives attain minimum loss when objects sharing same annotation are placed next to each other (\cref{fig:point_config}(b)).
\\
\\
\textsc{puCL} interpolates between the supervised and unsupervised objective. Simply put, by incorporating additional positives aims at learning representations that preserve annotation consistency. Thus, the minimum loss configurations are attained at the intersection of the minimum point configurations of \textsc{ssCL} and fully supervised \textsc{sCL} (\cref{fig:point_config}(c)).

\section{Knowledge of Class Prior Estimate}
\label{sec:puNCE}
In~\cref{sec:puCL}, we showed how incorporating weak supervision—specifically, access to labeled positive examples—significantly improves contrastive learning in the Positive-Unlabeled (PU) setting. By leveraging labeled positives, while making no assumptions about the unlabeled data, \textsc{puCL} strikes a principled balance between the robustness of self-supervised learning and the semantic structure provided by weak-supervision.
\\
\\
However, in many real-world scenarios, we may also have access to additional global information about the data distribution -- most notably, the \emph{class prior} \( \pi := \Pr(y = 1 | \rx) \), which quantifies the proportion of positives in the overall population. While the true value of \( \pi \) is typically unknown, it can often be estimated with high accuracy using Mixture Proportion Estimation (MPE) techniques~\citep{ramaswamy2016mixture, yao2021rethinking, garg2021mixture}, provided sufficient unlabeled data and computational resources.
\\
\\
This raises a natural question:
\begin{quote}
\emph{Can we go beyond \textsc{puCL} by using class prior information to refine how unlabeled examples are handled during contrastive training?}
\end{quote}
In the remainder of this section, we explore this direction and show that class prior knowledge can indeed refine the treatment of unlabeled examples.
\subsection{\textsc{puNCE}: Prior Aware PU Contrastive Learning}
\begin{figure*}[t]
\centering
\subfloat[\bf \textsc{ssCL}]{\includegraphics[width=0.24\textwidth]{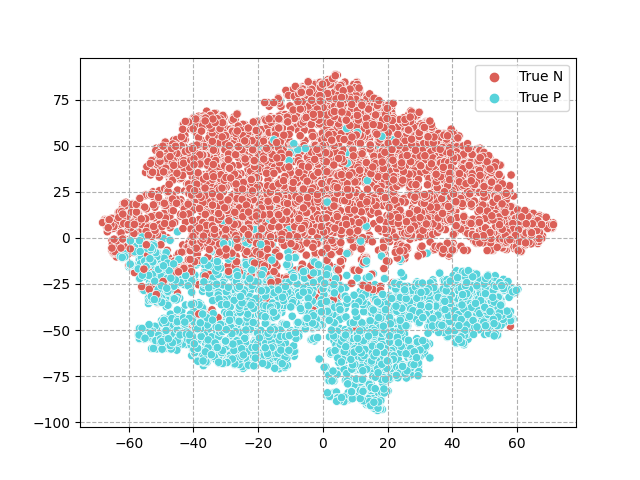}}
\subfloat[\bf \textsc{puCL} ($n_p= 3k$)]{\includegraphics[width=0.24\textwidth]{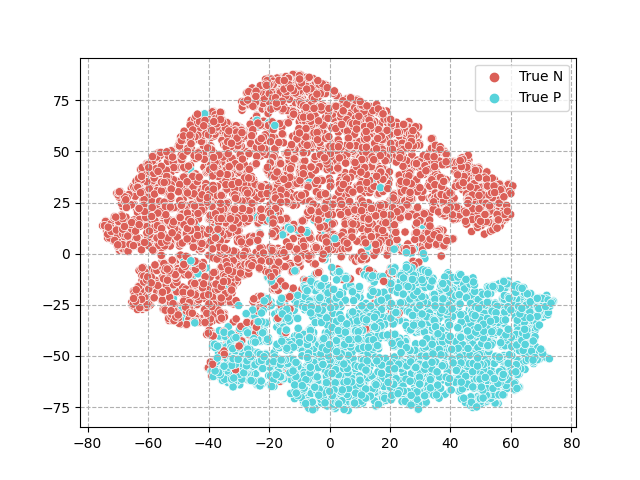}}
\subfloat[\bf \textsc{puNCE} ($n_p= 3k$)]{\includegraphics[width=0.24\textwidth]{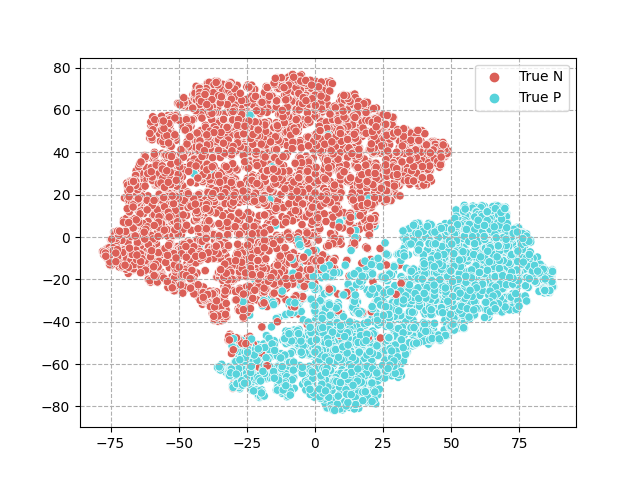}}
\subfloat[\bf \textsc{sCL} (supervised)]{\includegraphics[width=0.24\textwidth]{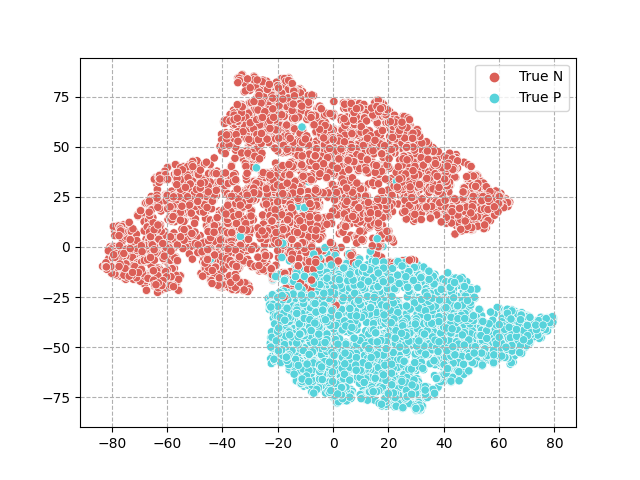}}
\caption{\footnotesize {\bf Embedding Quality.} t-SNE visualization of representations on CIFAR-III(vehicle vs animal) learned via ResNet-18. Classes are indicated by colors. Clearly the modified \textsc{puNCE} objective leads to better class separation than \textsc{puCL}, showcasing the value of incorporating class prior knowledge.}
\label{figure:puNCE_tsne}
\end{figure*}
\begin{figure*}[t]
\centering
\subfloat[\bf CIFAR-0]{\includegraphics[width=0.4\textwidth]{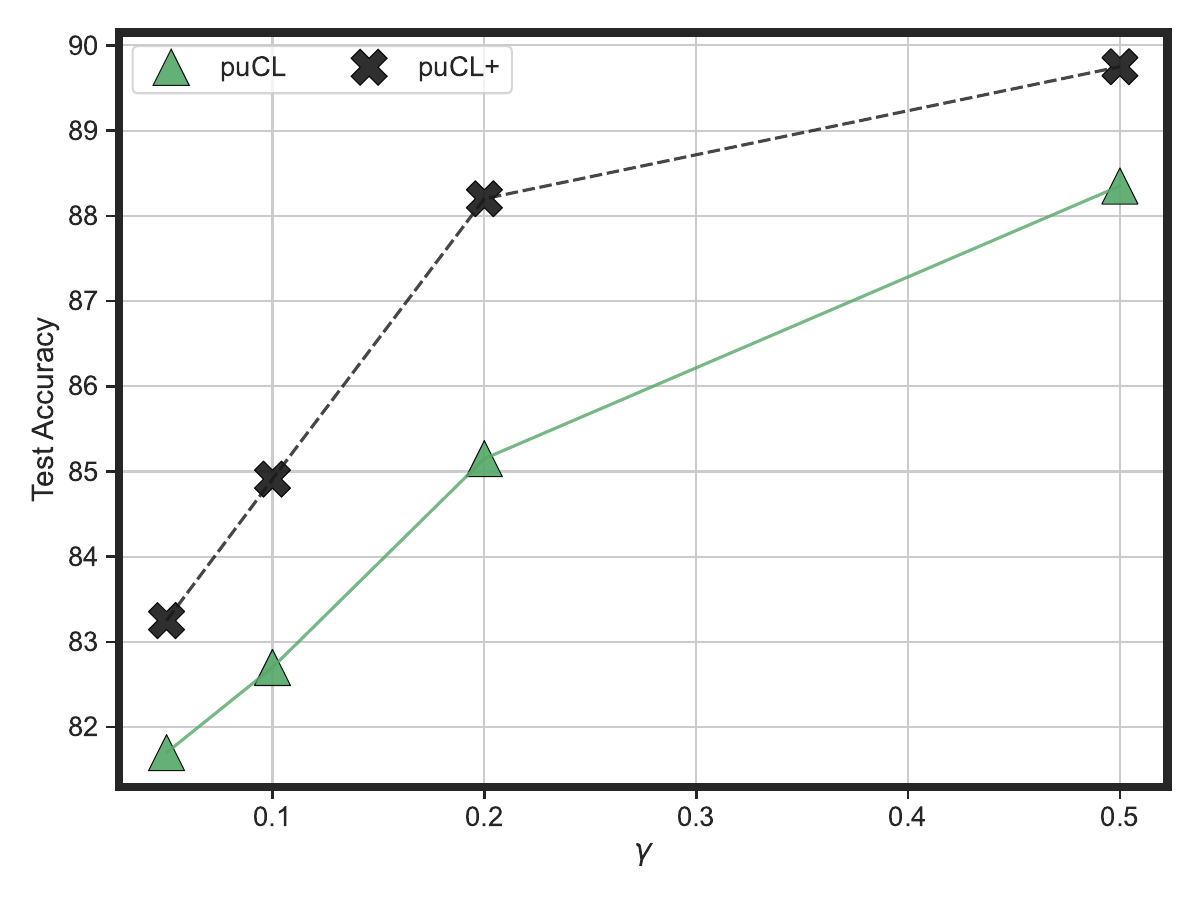}}
\subfloat[\bf Imagenet-II]{\includegraphics[width=0.4\textwidth]{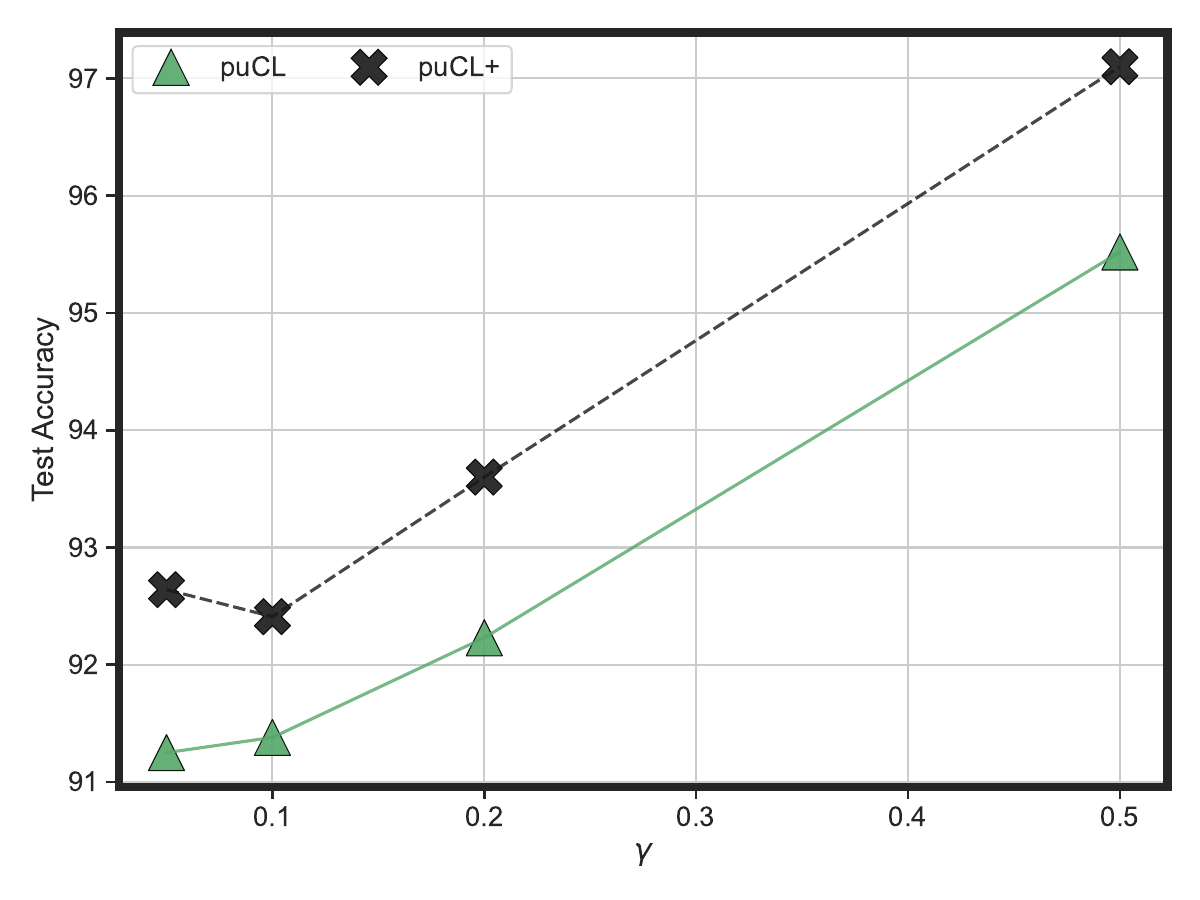}}
\caption{\footnotesize {\bf \textsc{puCL} vs \textsc{puNCE}}. ResNet-18 trained on CIFAR-0 and ImageNet-II (\cref{sec:exp}). The enhanced variant, \textsc{puNCE}, consistently outperforms the \textsc{puCL}, demonstrating the benefit of incorporating class prior information to judiciously weigh unlabeled data in addition to labeled positives.}
\label{figure:puNCE_vs_puCL}
\end{figure*}
Rather than treating all unlabeled points uniformly, as in \textsc{puCL}, we adjust the contrastive objective to reflect the underlying positive-negative mixture in the unlabeled subset in the batch. Specifically, by incorporating knowledge of \( \pi \), we construct a \emph{prior-aware} contrastive loss that accounts for the expected composition of positives and negatives among the unlabeled samples. This allows the model to weigh attraction and repulsion terms more appropriately. In doing so, it enables the model to leverage unlabeled data more effectively in the absence of dense supervision, while avoiding the pitfalls of over- or under-representation of the positive class -- leading to further improvements in both generalization and convergence.
We refer to this refined objective as Positive Unlabeled InfoNCE dubbed \textsc{puNCE}~\eqref{eq:puNCE}.
\\
\\
The main idea is to use the fact that unlabeled data is distributed as a mixture of positive and negative marginals where the mixture proportion is given by class prior $\pi$
~\citep{elkan2008learning, niu2016theoretical, du2014analysis, elkan2001foundations}. 
\begin{flalign}
\label{eq:bayes}
    p(\vx) = \pi p(\vx | y=1) + (1- \pi) p(\vx | y = -1)
\end{flalign}
\textsc{puNCE} additionally treats each unlabeled sample as a positive example and a negative example with appropriate probabilities.
\\
\\
In particular, consider a {\bf labeled anchor} $\vx_i \in \gX_\textsc{P}$; same as \textsc{puCL}, the \textsc{puNCE} risk on this sample is computed by pulling together normalized embeddings of all the available labeled samples in multi-viewed batch $\tilde{\gD}$: 
\begin{equation}
    \ell_\textsc{P}^{(i)}
    =
    \Biggl[\frac{1}{|\sP \setminus i|}\sum\limits_{j \in \sP\setminus i}\vz_i \boldsymbol \cdot \vz_j
    - \log Z(\vz_i) \Biggr]
\label{eq:puNCE.P_risk}
\end{equation}
On the other hand, {\bf unlabeled anchor} $\vx_i \in \gX_\textsc{U}$, is treated as a positive example with probability $\pi$ and as negative example with probability $(1-\pi)$. When considered positive ($\ry = 1$), all the labeled samples $\vx_i \in \gX_\textsc{P}$, along with its self-augmentation $\vx_{a(i)}$ are used as positive pairs for $\vx_i$. Whereas, when it is considered negative ($\ry = 0$), since there are no available labeled negative samples \textsc{puNCE} treats only augmentation $\vx_{a(i)}$ as positive pair. The empirical risk on unlabeled samples $\ell_\textsc{U}$ is thus computed as:   
\begin{equation}
    \ell_\textsc{U}^{(i)}  
    =\Biggl[\frac{\pi}{|\sP|+1} \sum_{j\in \{\sP,a(i)\}} \vz_i \boldsymbol \cdot \vz_j + (1-\pi)\log \bigg(\vz_i \boldsymbol \cdot \vz_{a(i)}\bigg) - \log Z(\vz_i) \Biggr]
\label{eq:puNCE.U_risk}
\end{equation}
The first term of~\eqref{eq:puNCE.U_risk} denotes the loss incurred by the positive contribution of the unlabeled samples and the second term corresponds to the negative counterpart. Combining~\eqref{eq:puNCE.P_risk}, ~\eqref{eq:puNCE.U_risk} the \textsc{puNCE} empirical risk is computed as:
\begin{equation}
\gL_{\textsc{puNCE}} = - \frac{1}{|\sI|} \sum_{i \in \sI} \Biggl[\1(\vx_i\in \gX_\textsc{P}) \ell_\textsc{P}^{(i)} + \1(\vx_i\in \gX_\textsc{U}) \ell_\textsc{U}^{(i)}\Biggr]
\label{eq:puNCE}
\end{equation}
Intuitively, all the labeled samples are given unit weight and the unlabeled samples are duplicated; one copy is labeled positive with weight $\pi$ and the other copy is labeled negative with weight $(1 - \pi)$. In this sense, the prior-aware reweighting acts as a form of importance correction, akin to the techniques used in cost-sensitive methods~\citep{elkan2008learning, du2014analysis}, where unlabeled examples are treated as soft labels with probabilistic contributions. Instead of making hard decisions about which unlabeled instances are positive or negative, \textsc{puNCE} distributes credit across both classes in expectation, using the known class prior $\pi$ as a soft surrogate for label uncertainty. This can be seen as a contrastive analogue to the EM-style reweighting or risk correction methods used in probabilistic weak supervision frameworks~\citep{elkan2008learning}.
\\
\\
Intuitively, this probabilistic treatment introduces a degree of {\bf inductive bias} that can be beneficial in practice, particularly in low-supervision regimes where the signal from labeled positives alone may be insufficient. By assigning soft labels to the unlabeled data, \textsc{puNCE} effectively leverages the entire batch for representation learning, leading to richer and more semantically meaningful embeddings. However, this comes at the cost of potentially introducing bias, especially when the estimated class prior $\pi$ deviates from the true underlying distribution. Moreover, the benefits of prior-aware weighting become especially apparent when the representation space exhibits high intra-class semantic similarity and inter-class ambiguity. In such settings, the additional soft supervision provided by $\pi$ enables \textsc{puNCE} to better disentangle clusters (Figure~\ref{figure:puNCE_tsne}), leading to improved separability and faster convergence. These results suggest that even modest estimates of the class prior can be valuable during contrastive pretraining, provided that the reweighting is performed in a stable, expectation-based manner, as done in \textsc{puNCE}.
% \\
% \\
% While \textsc{puNCE} introduces an {\bf inductive bias} that regularizes the embedding space to align with global label proportions. Unlike \textsc{puCL}, which remains agnostic to the structure of the unlabeled set, \textsc{puNCE} encourages the model to carve out regions in the representation space such that approximately $\pi$-fraction of the unlabeled points are close to labeled positives, and the rest are repelled — implicitly shaping the embedding geometry to match the assumed data distribution.
% This prior-aware constraint possibly can act as a global signal that steers learning even in the absence of labeled negatives. It helps break symmetries that might otherwise confuse the model (e.g., grouping together all unlabeled points as one ambiguous cluster) and guides the model toward a semantically more meaningful partitioning. In essence, \textsc{puNCE} trades off unbiased estimation for a structured inductive bias that reflects domain knowledge — a move that is particularly beneficial in the low-label regime, where regularization via prior knowledge can outperform purely empirical risk minimization.
% \\
% \\
% Moreover, this bias is soft and operates at the distributional level: no assumptions are made about individual unlabeled samples, but rather about the aggregate behavior of the unlabeled batch. This makes \textsc{puNCE} both more robust to noise and better aligned with practical weak supervision settings, where global distributional properties (like $\pi$) are easier to estimate than instance-level labels.
\begin{table*}[t]
\centering
\footnotesize 
\resizebox{\textwidth}{!}{
\begin{tabular}{@{}ccccccc@{}}
\toprule
{\bf Dataset}
& $g_\mB(\cdot)$
& $n_\textsc{P}$
& {\bf \textsc{ssCL}}
& {\bf \textsc{dCL}}$^\dagger$
& {\bf \textsc{puCL}}
& {\bf \textsc{puNCE}}$^\dagger$ 
\\ \midrule \midrule 
\multirow{3}{1.75 cm}{\centering MNIST-I\\(odd/even)} 
& \multirow{3}{1.25 cm}{\centering MLP} 
& 1k    
& ${94.15\textcolor{gray}{\pm 0.15}}$         
& ${94.32\textcolor{gray}{\pm 0.42}}$        
& ${94.24\textcolor{gray}{\pm 0.09}}$        
& ${\bf 96.70\textcolor{gray}{\pm 0.19}}$     
\\ 
&
& 3k    
& ${94.84\textcolor{gray}{\pm 0.25}}$         
& ${95.09\textcolor{gray}{\pm 0.36}}$        
& ${95.82\textcolor{gray}{\pm 0.18}}$       
& ${\bf 97.81\textcolor{gray}{\pm 0.21}}$      
\\ 
&
& 10k    
& ${95.15\textcolor{gray}{\pm 0.05}}$         
& ${95.45\textcolor{gray}{\pm 0.08}}$       
& ${\bf 98.29\textcolor{gray}{\pm 0.08}}$        
& ${\bf 98.27\textcolor{gray}{\pm 0.11}}$      
\\\midrule 
\multirow{3}{2.5 cm}{\centering CIFAR-III \\ (animal / vehicle)} 
& \multirow{3}{2.5 cm}{\centering ResNet18} 
& 1k    
& ${96.33\textcolor{gray}{\pm 0.14}}$       
& ${96.21\textcolor{gray}{\pm 0.11}}$       
& ${97.42\textcolor{gray}{\pm 0.07}}$       
& ${\bf 97.59 \textcolor{gray}{\pm 0.17}}$     
\\ 
&
& 3k    
& ${96.51\textcolor{gray}{\pm 0.06}}$         
& ${96.49\textcolor{gray}{\pm 0.03}}$       
& ${97.66\textcolor{gray}{\pm 0.04}}$       
& ${\bf 97.97\textcolor{gray}{\pm 0.04}}$    
\\ 
&
& 10k    
& ${96.58\textcolor{gray}{\pm 0.04}}$       
& ${96.50\textcolor{gray}{\pm 0.02}}$        
& ${97.70\textcolor{gray}{\pm 0.07}}$       
& ${\bf 98.15\textcolor{gray}{\pm 0.02}}$ 
\\\bottomrule                                          
\end{tabular}}
\caption{
\footnotesize
\textbf{Incorporating Weak Supervision:}  
\textsc{ssCL} denotes standard self-supervised contrastive learning; \textsc{dCL} applies a class-prior-based debiasing correction to the partition function; \textsc{puCL} leverages labeled positives without class prior information; and \textsc{puNCE} incorporates both labeled positives and oracle class prior knowledge. Evaluation is performed via a supervised $k$-nearest neighbor ($k$NN) classifier over the learned representations. Across all settings, \textsc{puNCE} consistently achieves the highest accuracy, highlighting the benefit of combining additional weak global supervision (class prior) with supervision from labeled positives.  
$\dagger$ indicates methods that utilize oracle class prior information.  
}

\label{tab:puNCE}
\end{table*}
Our experiments further demonstrate that the embeddings learned by \textsc{puNCE} are qualitatively and quantitatively superior to those produced by \textsc{puCL}. As shown in~\cref{figure:puNCE_tsne}, the learned representation space under \textsc{puNCE} exhibits better cluster separation and semantic alignment. This improved structure translates to consistently stronger generalization performance on downstream tasks, as evidenced by~\cref{figure:puNCE_vs_puCL} and~\cref{tab:puNCE}. Remarkably, even with a small amount of weak supervision, \textsc{puNCE} can dramatically improve the quality of learned embeddings — highlighting the effectiveness of leveraging class prior information to guide representation learning under label scarcity.

\subsection{\textsc{dCL}: Debiased Contrastive Learning}
A closely related approach to incorporate weak latent supervision into the unlabeled is by appropriately compensating for the sampling bias referred to as Debiased Contrastive Learning, dubbed \textsc{dCL}~\citep{chuang2020debiased, robinson2020contrastive}. Specifically, they note that the standard objective \textsc{ssCL} implicitly assumes a uniform prior over positives and negatives, leading to biased gradient estimates. To mitigate this, \textsc{dCL} decomposes the partition function and introduces a weighted correction that explicitly accounts for the expected number of positive pairs among the negative set. 
Specifically, in the unsupervised~\eqref{eq:sscl} setting, the partition function can be decomposed into:
\begin{flalign}
    Z(\vz_i) := \underbrace{\exp(\vz_i \boldsymbol \cdot \vz_{a(i)})}_{\text{positive pair}}+ \underbrace{\sum_{i \in \sI\setminus\{i, a(i)\}}\exp(\vz_i\boldsymbol\cdot \vz_j)}_{\text{negative pairs sum}}
\end{flalign}
Implying that the standard contrastive loss implicitly treats all non-anchor pairs as true negatives, which introduces bias when some of these pairs may, in fact, be positives. To correct for this, \textsc{dCL} introduces a class-prior-weighted debiasing term, effectively down-weighting the negative pair contributions in the partition function according to the probability that a randomly drawn pair is actually positive.
\\
\\
Formally, the debiased partition function takes the form: 
\begin{flalign} 
Z_{\textsc{dCL}}(\vz_i) := \exp(\vz_i \cdot \vz_{a(i)}) + \frac{|\sI|-2}{1- \pi}\Bigg[\sum_{i \in \sI\setminus\{i, a(i)\}}\exp(\vz_i\boldsymbol\cdot \vz_j) - \pi\exp(\vz_i \cdot \vz_{a(i)})\Bigg] 
\end{flalign} 
The resulting loss becomes:
\begin{equation}
    \gL_{\textsc{dCL}}
    = - 
    \frac{1}{|\sI|}\sum_{i \in \sI}
    \Biggl[\vz_i \boldsymbol \cdot \vz_{a(i)} - \log Z_{\textsc{dCL}}(\vz_i) \Biggr]
\label{eq:dcl}
\end{equation}
While both \textsc{puNCE} and \textsc{dCL} incorporate class prior information to correct for biases in contrastive learning, they do so from fundamentally different perspectives. {\bf \textsc{puNCE} modifies the numerator of the contrastive loss} by reweighting the similarity terms based on the assumed label distribution, effectively interpolating between positive and negative contributions in expectation. In contrast, {\bf \textsc{dCL} leaves the numerator unchanged and instead adjusts the denominator} (i.e., the partition function), debiasing the normalization term to reflect the expected prevalence of false negatives among the nominal negatives. In essence, \textsc{puNCE} directly alters how unlabeled anchors relate to their positive and negative pairs, while \textsc{dCL} implicitly adjusts the strength of repulsion from negative samples based on prior-informed estimates. 
\\
\\
This highlights a \textbf{key design choice} in weakly supervised contrastive learning: whether to inject prior knowledge by modifying pairwise similarities or by correcting the global context in which those similarities are evaluated. Both approaches are valid and potentially complementary, and future work could explore hybrid strategies that combine the strengths of numerator and denominator correction in the PU setting.
\\
\\
Empirically, we find that \textsc{puNCE} consistently outperforms \textsc{dCL} across multiple datasets and label regimes (\cref{tab:puNCE}), suggesting that leveraging both instance-level supervision (via labeled positives) and global class prior information offers a more effective inductive bias than relying on prior information alone.

\section{On Downstream PU Classification}
\label{sec:puPL}
While so far we have discussed about training the encoder using contrastive learning, resulting in an embedding space where similar examples are sharply concentrated and dissimilar objects are far apart, performing inference on this manifold is not entirely obvious.  
\\
\\
In the standard semi-supervised setting, the linear classifier can be trained using CE loss over the representations of the labeled data~\citep{assran2020supervision} to perform downstream inference. However, in PU learning, lacking any negative examples, $v_\vv(\cdot)$ should be trained with a specialized cost-sensitive PU learning objective such as \textsc{nnPU}~\citep{kiryo2017positive}(\cref{sec:background}). 
\\
\\
However, even when operating in a high-quality representation space - where contrastive pretraining has already clustered similar examples and separated dissimilar ones - standard PU risk minimization methods like \textsc{nnPU} treat the downstream classifier as if it were learning from scratch. These objectives operate purely on the labeled positives and unlabeled samples, without leveraging the rich geometric structure already encoded by the contrastive encoder.
In doing so, the downstream classifier, trained independently via a high-variance $\sim\gO(1/n_\textsc{P})$ risk estimator, may fail to capitalize on the separability and semantic organization induced during pretraining -- especially in the low-supervision regime i.e. when $\gamma = n_\textsc{P}/n_\textsc{U}$ is small. 
As a result, the final decision boundary may deviate significantly from the ideal, despite the encoder having successfully organized the data into clean, well-separated clusters which aligns with our empirical observations in~\cref{fig:puPL_gmm}.
\\
\\
This highlights the need for downstream strategies that are aligned with the geometry of the embedding space, and that can amplify the benefits of contrastive pretraining, rather than discard them.

\subsection{Positive Unlabeled Pseudo Labeling (\textsc{puPL})}
\begin{figure*}[t] 
\centering
\subfloat[\bf \footnotesize{Supervised}]
{\includegraphics[width=0.28\textwidth]{plots/puPL_GMM/clean_data.pdf}}
\subfloat[\bf \footnotesize{{PU ($\gamma=\frac{1}{4}$)}}]
{\includegraphics[width=0.28\textwidth]{plots/puPL_GMM/PU_750_All.pdf}}
\subfloat[\bf \footnotesize{{PU ($\gamma=\frac{1}{5}$)}}]
{\includegraphics[width=0.28\textwidth]{plots/puPL_GMM/PU_500_All.pdf}}

\subfloat[\bf \footnotesize{{PU ($\gamma=\frac{1}{10}$)}}]
{\includegraphics[width=0.28\textwidth]{plots/puPL_GMM/PU_250_All.pdf}}
\subfloat[\bf \footnotesize{{PU ($\gamma=\frac{1}{25}$)}}]
{\includegraphics[width=0.28\textwidth]{plots/puPL_GMM/PU_100_All.pdf}}
\subfloat[\bf \footnotesize{{PU ($\gamma=\frac{1}{50}$)}}]
{\includegraphics[width=0.28\textwidth]{plots/puPL_GMM/PU_50_All.pdf}}

\caption{\footnotesize
\textbf{Decision Boundary Deviation:} Visualization of linear classifiers trained on a separable Gaussian Mixture distribution using different PU learning strategies.  
In the fully supervised setting (a), CE$^\ast$ recovers the ideal decision boundary.  
In PU settings (b–f), naive CE (which treats unlabeled samples as negatives) suffers from increasing decision boundary deviation as the number of labeled positives decreases (i.e., smaller $\gamma$), due to biased supervision. 
\textsc{nnPU} is more robust and aligns well with CE$^\ast$ when sufficient positives are available, but degrades under extreme label sparsity due to high estimator variance.  
In contrast, CE over \textsc{puPL} leverages clustering in the representation space to recover pseudo-labels with high accuracy, yielding decision boundaries that remain well-aligned with the supervised optimum even at very low label rates — without requiring class prior knowledge.
}
\label{fig:puPL_gmm}
\end{figure*}
A natural idea for leveraging the geometry of the embedding space is to assign {\bf pseudo labels} to unlabeled data based on clustering structure. This approach has shown promise in various weakly supervised and semi-supervised settings e.g., PiCo~\citep{wang2021pico}, SCAN~\citep{van2020scan}, SeLa~\citep{asano2019self}, SwaV~\citep{caron2020unsupervised}. This indicates  that clustering can effectively recover semantic groupings in embeddings obtained via contrastive representation learning.
\\
\\
Building on this foundation, we propose a simple yet effective pseudo-labeling mechanism tailored for the Positive-Unlabeled (PU) setting. Our approach leverages the inductive bias that encoders obtained via contrastive pretraining naturally promote: a geometric separation of semantic concepts in the embedding space~\citep{parulekar2023infonce, huang2023towards}. This assumption is empirically supported by t-SNE visualizations of embeddings in~\cref{fig:tsne_puCL,figure:puNCE_tsne}.
\\
\\
To operationalize this, we combine ideas from semi-supervised clustering and $k$-means++ seeding~\citep{arthur2007k, liu2010novel, yoder2017semi}, and adopt a PU-specific pseudo-labeling procedure over the representations - 
\begin{flalign}
    \gZ_{\textsc{PU}} = \bigg\{g_{\mB}(\vx_i) \in \mathbb{R}^k : \vx_i \in \gX_{\textsc{PU}}\bigg\}.
\end{flalign}
where $g_{\mB}(\cdot)$ is the pretrained encoder and $\gX_{\textsc{PU}}$ is the combined set of labeled positives and unlabeled instances.
\begin{definition}[\textbf{Clustering}]
A clustering is defined by a set of centroids \( C = \{ \Mu_{\textsc{P}}, \Mu_{\textsc{N}} \} \subset \mathbb{R}^k \), which induces a partition of the representation space \( \gZ_{\textsc{PU}} \) into two disjoint subsets:
\begin{equation}
    \gZ_{\textsc{P}} := \left\{ \vz_i \in \gZ_{\textsc{PU}} \;\middle|\; \Mu_{\textsc{P}} = \arg\min_{\Mu \in C} \bigg\| \vz_i - \Mu \bigg\|^2 \right\}, \qquad
\gZ_{\textsc{N}} := \gZ_{\textsc{PU}} \setminus \gZ_{\textsc{P}}.
\end{equation}
\label{def:clustering}
\end{definition}
We define the quality of a clustering via the standard $k$-means potential:
\begin{definition}[\bf Potential Function]
    Given a clustering $C$ (\cref{def:clustering}), the potential function over the dataset $\gZ_{\textsc{PU}}$ is defined as:
    \begin{flalign}
    \phi(\gZ_{\textsc{PU}}, C) = \sum_{\vz_i \in \gZ_{\textsc{PU}}}\min_{\Mu \in C}\bigg\|\vz_i - \Mu\bigg\|^2
    \end{flalign}
\label{def:potential}
\end{definition}
In particular, we seek to find cluster centers $\{\Mu_\textsc{P}, \Mu_\textsc{N}\}$ on the embedding space, that approximately solves the {\em k}-means problem: 
\begin{flalign}
\label{eq:k_means}
    % \phi(\gZ_{\textsc{PU}}, \{\bm{\Mu}_\textsc{P}, \bm{\Mu}_\textsc{N}\}) = 
    \Mu^\ast 
    = \{\Mu_\textsc{P} , \Mu_\textsc{N}\}
    := \argmin_{\Mu_\textsc{P} , \Mu_\textsc{N} \in \sR^d}
    \phi\bigg(\gZ_{\textsc{PU}}, \{\Mu_\textsc{P} , \Mu_\textsc{N}\}\bigg) 
\end{flalign}
Solving~\eqref{eq:k_means}, is known to be NP-hard via reduction from the Partition problem -- both in high dimensions~\citep{aloise2009np} and even under very restrictive settings: when the dimension is fixed ($\sR^2$), the number of clusters is small ($k=2$), and the distance metric is the standard squared Euclidean distance~\citep{mahajan2012planar}.
In practice, the most widely adopted heuristic for locally minimizing the $k$-means objective is Lloyd’s algorithm~\citep{lloyd1982least}, often coupled with $k$-means++ initialization~\citep{arthur2007k}.
\\
\\
In the PU setting, however, we can improve upon the standard unsupervised initialization by leveraging the available positive labels. Instead of initializing both centroids randomly, we initialize the positive centroid to be the centroid of the representations, labeled positive.
\begin{flalign}
    \Mu_\textsc{P}^{(0)} = \frac{1}{n_\textsc{P}}\sum_{\vx_i \in \gX_\textsc{P}}g_\mB(\vx_i).
\end{flalign}
The negative centroid $\Mu_{\textsc{N}}^{(0)}$ is initialized using the standard $k$-means++ seeding procedure applied over the unlabeled portion of the dataset. Specifically, we compute the squared Euclidean distance of the unlabeled samples $\vz_i \in \gZ_{\textsc{U}}$ from the positive centroid:
\begin{flalign}
    D^2(\vz_i) := \bigg\| \vz_i - \Mu_{\textsc{P}}^{(0)} \bigg\|^2 \;\; \forall \vz_i \in \gZ_{\textsc{U}}
\end{flalign}
and then sample $\Mu_{\textsc{N}}^{(0)}$ from the set $\gZ_{\textsc{U}}$ with probability proportional to $D^2(\vz_i)$:
\begin{flalign}
\Pr\bigg[\Mu_{\textsc{N}}^{(0)} = \vz_i\bigg] = \frac{D^2(\vz_i)}{\sum_{\vz_j \in \gZ_{\textsc{U}}} D^2(\vz_j)}.
\end{flalign}
This strategy improves upon random initialization by increasing the likelihood that initial centers are well-separated, thereby reducing the chance of poor local minima.
\\
\\
After clustering, pseudo-labels $\tilde{y}_i$ are assigned according to the proximity of each embedding $\vz_i$ to the estimated centroids $\{\hat{\Mu}_\textsc{P}, \hat{\Mu_\textsc{N}}\}$ as:
\begin{equation}
    \forall \vz_i \in \gZ_{\textsc{U}} : \tilde{y}_i =
\begin{cases}
    1, & \text{if } \hat{\Mu}_{\textsc{P}} = \arg\min_{\hat{\Mu} \in \{\hat{\Mu}_\textsc{P}, \hat{\Mu_\textsc{N}}\}} \|\vz_i - \hat{\Mu}\|^2 \\
    0, & \text{otherwise}
\end{cases}
\end{equation}
These pseudo-labels can then be used to train a downstream classifier using the standard supervised cross-entropy (CE) loss. We refer to this overall procedure as {\bf Positive Unlabeled Pseudo Labeling (\textsc{puPL})}, formally described in~\cref{algo:puPL}. 

\subsection{Convergence Guarantee}
\begin{table*}[t]
\footnotesize
\centering
\resizebox{\textwidth}{!}
{
\begin{tabular}{llccccccc}
\toprule
\multicolumn{2}{c}{\bf Methods} 
& \multicolumn{6}{c}{\bf Datasets} 
& \multirow{3}{*}{\bf Mean }
\\ 
\cmidrule(lr){1-2}
\cmidrule(lr){3-8}
\multirow{2}{*}{\bf PIRL}
& \multirow{2}{*}{\bf LP}
& {\bf F-MNIST-I}
& {\bf F-MNIST-II}
& {\bf CIFAR-I}
& {\bf CIFAR-II} 
& {\bf STL-I}
& {\bf STL-II} 
&                         
\\
&
& ($\pi_p^*=0.3$)
& ($\pi_p^*=0.7$)
& ($\pi_p^*=0.4$)
& ($\pi_p^*=0.6$)
& ($\hat{\pi}_p=0.51$)
& ($\hat{\pi}_p=0.49$)
&                         
\\
\midrule \midrule
&
&\multicolumn{6}{c}{$n_\textsc{P}=1000$} 
&                   
\\
\cmidrule{3-8}   
\textsc{ssCL}
&\textsc{nnPU$^\dagger$}
& $89.5\textcolor{gray}{\pm 0.9}$
& $85.9\textcolor{gray}{\pm 0.5}$
& $91.7\textcolor{gray}{\pm 0.3}$
& $90.0\textcolor{gray}{\pm 0.4}$ 
& $81.1\textcolor{gray}{\pm 1.2}$
& $81.4\textcolor{gray}{\pm 0.8}$
& 86.6
\\
\textsc{pu-sCL}
&\textsc{nnPU$^\dagger$}
& $73.0\textcolor{gray}{\pm 4.9}$
& $81.8\textcolor{gray}{\pm 0.5}$
& $88.4\textcolor{gray}{\pm 2.1}$
& $63.7\textcolor{gray}{\pm 5.3}$
& $59.2\textcolor{gray}{\pm 8.1}$
& $68.8\textcolor{gray}{\pm 3.1}$
& 72.5
\\
\textsc{puCL}
&\textsc{nnPU$^\dagger$}
& $90.0\textcolor{gray}{\pm 0.1}$
& $86.8\textcolor{gray}{\pm 0.4}$
& $91.8\textcolor{gray}{\pm 0.2}$
& $90.3\textcolor{gray}{\pm 0.5}$  
& $81.5\textcolor{gray}{\pm 0.7}$
& $82.6\textcolor{gray}{\pm 0.4}$
& ${87.2}$
\\
\rowcolor{light-gray}
\textsc{ssCL}
&\textsc{puPL}
& $91.4\textcolor{gray}{\pm 1.2}$
& $86.2\textcolor{gray}{\pm 0.6}$
& $91.6\textcolor{gray}{\pm 0.9}$
& $90.7\textcolor{gray}{\pm 0.4}$ 
& $81.2\textcolor{gray}{\pm 1.6}$
& $81.3\textcolor{gray}{\pm 0.7}$
& ${\bf 87.1}$
\\
\rowcolor{light-gray}
\textsc{pu-sCL}
&\textsc{puPL}
& $77.8\textcolor{gray}{\pm 0.3}$
& $82.5\textcolor{gray}{\pm 4.1}$
& $90.1\textcolor{gray}{\pm 1.2}$
& $68.9\textcolor{gray}{\pm 7.5}$ 
& $58.5\textcolor{gray}{\pm 8.2}$
& $73.9\textcolor{gray}{\pm 1.2}$
& $\bf 75.3$
\\
\rowcolor{light-gray}
\textsc{puCL}
&\textsc{puPL}
& $\bf 91.8\textcolor{gray}{\pm 0.8}$
& $\bf 89.2\textcolor{gray}{\pm 0.3}$
& $\bf 92.3\textcolor{gray}{\pm 1.9}$
& $\bf 91.2\textcolor{gray}{\pm 0.5}$
& $\bf 83.8\textcolor{gray}{\pm 1.4}$
& $\bf 84.5\textcolor{gray}{\pm 0.7}$
& $\bf 88.8$
\\
\midrule
&
&\multicolumn{4}{c}{\bf $n_\textsc{P}=3000$} 
&\multicolumn{2}{c}{\bf $n_\textsc{P}=2500$} 
&
\\
\cmidrule(lr){3-6}
\cmidrule(lr){7-8}
\textsc{ssCL}
&\textsc{nnPU$^\dagger$}
& $89.6\textcolor{gray}{\pm 0.1}$
& $85.0\textcolor{gray}{\pm 0.4}$
& $92.3\textcolor{gray}{\pm 0.3}$
& $92.7\textcolor{gray}{\pm 0.3}$ 
& $81.6\textcolor{gray}{\pm 0.9}$
& $84.2\textcolor{gray}{\pm 1.0}$
& $87.6$
\\
\textsc{pu-sCL}
&\textsc{nnPU$^\dagger$}
& $85.7\textcolor{gray}{\pm 0.3}$
& $82.1\textcolor{gray}{\pm 0.2}$
& $90.5\textcolor{gray}{\pm 3.1}$
& $88.6\textcolor{gray}{\pm 0.5}$ 
& $83.2\textcolor{gray}{\pm 0.8}$
& $84.8\textcolor{gray}{\pm 1.4}$
& $85.8$
\\
\textsc{puCL}
&\textsc{nnPU$^\dagger$}
& $90.3\textcolor{gray}{\pm 0.1}$
& $87.0\textcolor{gray}{\pm 0.7}$
& $93.2\textcolor{gray}{\pm 0.1}$
& $92.9\textcolor{gray}{\pm 0.1}$
& $84.9\textcolor{gray}{\pm 0.7}$
& $85.1\textcolor{gray}{\pm 0.7}$
& $88.9$
\\
\rowcolor{light-gray}
\textsc{ssCL}
&\textsc{puPL}
& $90.1\textcolor{gray}{\pm 0.2}$
& $88.8\textcolor{gray}{\pm 0.6}$
& $92.7\textcolor{gray}{\pm 1.3}$
& $92.9\textcolor{gray}{\pm 0.8}$
& $82.0\textcolor{gray}{\pm 1.6}$
& $84.3\textcolor{gray}{\pm 0.2}$
& $\bf 88.5$
\\
\rowcolor{light-gray}
\textsc{pu-sCL}
&\textsc{puPL}
& $85.9\textcolor{gray}{\pm 1.6}$
& $84.8\textcolor{gray}{\pm 2.4}$
& $92.4\textcolor{gray}{\pm 0.9}$
& $93.4\textcolor{gray}{\pm 1.2}$ 
& $83.1\textcolor{gray}{\pm 2.9}$
& $\bf 85.5\textcolor{gray}{\pm 0.6}$
& $\bf 87.5$
\\
\rowcolor{light-gray}
\textsc{puCL}
&\textsc{puPL}
& $\bf 92.0\textcolor{gray}{\pm 0.7}$
& $\bf 89.6\textcolor{gray}{\pm 1.2}$
& $\bf 93.5\textcolor{gray}{\pm 0.8}$
& $\bf 93.8\textcolor{gray}{\pm 0.4}$
& $\bf 85.0\textcolor{gray}{\pm 0.9}$
& $85.2\textcolor{gray}{\pm 2.1}$
& $\bf 89.9$
\\
\bottomrule
\end{tabular}}
\caption{\footnotesize
{\bf Effectiveness of \textsc{puPL}.}
To demonstrate the efficacy of \textsc{puPL} 
% across representations obtained via different contrastive objectives
, we train a downstream linear classifier using \textsc{puPL(CE)} and \textsc{nnPU} ( run with $\pi^\ast$ ).
% i.e. pseudo-label the representations via \textsc{puPL} - followed by training the linear layer with CE loss using these pseudo-labels and popular cost-sensitive approach \textsc{nnPU} ( run with $\pi_p^\ast$ ). The 
over embeddings obtained via different contrastive objectives - \textsc{ssCL, sCL-PU} and \textsc{puPL}. % Experiments across multiple datasets demonstrate that \textsc{puPL} is consistently able to outperform \textsc{nnPU} by better leveraging the clustering properties of the embedding space; making it an attractive choice for downstream PU classification.
}
\label{tab:puPL_LP}
\end{table*}
We now provide a theoretical guarantee for the quality of the clustering obtained after a single iteration of \cref{algo:puPL}.
\begin{theorem}[\textbf{Clustering Quality of \textsc{puPL}}]
\label{th:puPL}
Let $\hat{\Mu} = \{ \hat{\Mu}_{\textsc{P}}, \hat{\Mu}_{\textsc{N}} \}$ denote the centroids obtained after one iteration of \textsc{puPL}(\cref{algo:puPL}). Then, we have:
\begin{flalign}
    \E\bigg[\phi(\gZ_{\textsc{PU}}, \hat{\Mu})\bigg] \leq 16 \cdot \phi(\gZ_{\textsc{PU}}, \Mu^\ast)
\end{flalign}
where, $\phi(\gZ, \Mu)$ denotes the $k$-means potential(\cref{def:potential}), and $\Mu^\ast$ is the optimal set of centroids minimizing this potential~\eqref{eq:k_means}. This is an improvement over $k$-means++ with:
\begin{flalign}
    \E\bigg[\phi(\gZ_{\textsc{PU}}, \tilde{\Mu})\bigg] \leq 21.55 \cdot \phi(\gZ_{\textsc{PU}}, \Mu^\ast)
\end{flalign}
$\tilde{\Mu} = \{ \tilde{\Mu}_{\textsc{P}}, \tilde{\Mu}_{\textsc{N}} \}$ denote the centroids obtained via standard $k$-means++.
\end{theorem}
This result implies that the {\bf PU supervision aware initialization strategy} employed by \textsc{puPL} yields a provably better clustering quality relative to standard $k$-means++. Notably, this guarantee holds for the first step alone, and the $k$-means potential can only decrease in subsequent iterations of Lloyd’s algorithm.
\\
\\
Intuitively,~\cref{th:puPL} suggests that \textsc{puPL} can recover the true clustering structure of the embedding space within a constant-factor multiplicative error. This approximation holds under natural assumptions: the feature space exhibits clustering structure (i.e., positive and negative instances form distinct regions), and the labeled positives are drawn i.i.d.\ from the true positive distribution.
\\
\\
Crucially, this recovery is achieved without requiring any class prior information—unlike many classical PU learning methods. The improved guarantee over $k$-means++ is a direct consequence of the supervision-informed initialization, which significantly reduces the variance of the resulting clustering and eliminates the randomness of selecting the first center.
\\
\\
Experiments on 2D Gaussian mixtures (\cref{fig:puPL_gmm}) and multiple PU learning benchmarks (\cref{tab:puPL_LP}) validate this result. We find that even when only a small fraction of positive labels are available (i.e., small $\gamma$), training a classifier on pseudo-labels obtained from \textsc{puPL} yields decision boundaries that closely align with those learned under fully supervised training.
\\
\\
Together, these findings establish \textsc{puPL} as a simple, effective, and theoretically grounded approach for PU learning over well-structured embedding spaces. It leverages geometric inductive bias, avoids class prior estimation, and converges rapidly—making it a computationally and statistically attractive solution in low-supervision regimes.

\section{Generalization Guarantee}
\label{sec:generalization}
Next, we theoretically explore the generalization ability of the overall contrastive approach to PU Learning -- training $g_{\mB}(\cdot)$ using \textsc{puCL} (\cref{algo:puCL}) -- followed by pseudo-labeling (\cref{algo:puPL}); the pseudo labels are then used to train the linear classification head $v_\vv$ -- on a binary (P vs N) classification task. 
\\
\\
We build on the recent theoretical framework~\citep{huang2023towards} to 
study generalization performance of our approach in terms of the concentration of augmented data. Let, $C_\textsc{P} \cap C_\textsc{N}$ denote the clustering (\cref{def:clustering})
induced by the true class labels (unobserved). In absence of supervision, contrastive learning relies on a set of augmentations $\gT(\cdot)$ to learn the underlying clustering. 
\begin{definition}[{\bf($\sigma, \delta$) Augmentation}] $\gT(\cdot)$ is called $(\sigma, \delta)$ augmentation if \;$\forall \ell\in \{0, 1\}: \; \exists S_\ell\subseteq C_\ell$, such that $P(\vx \in S_\ell) \geq \sigma P(\vx \in C_\ell)$ where $0 < \sigma \leq 1$ and additionally it holds that: 
\begin{flalign}
    \sup_{\vx,\vx'\in S_\ell}d_\gT(\vx, \vx') \leq \delta.
\end{flalign} 
where, $d_\gT(\vx_i, \vx_j)$ denotes the augmentation distance(\cref{def:aug_distance}) between samples from $\gT(\cdot)$. 
\label{def:aug_delta_sigma}
\end{definition}
\begin{definition}[\bf Augmentation Distance]
For an augmentation set $\gT$, augmentation distance between two samples is defined as the minimum distance between all possible augmented views of the samples. 
    \begin{flalign}
    d_\gT(\vx_i, \vx_j) = \min_{\vx_i'\in \gT(\vx_{i}),\vx_j' \in \gT(\vx_j)}\bigg\|\vx_i' - \vx_j'\bigg\|
\end{flalign}
\label{def:aug_distance}
\end{definition}
Intuitively, {\bf $(\delta, \sigma)$ measures the concentration of augmented data}. A large $\sigma$ and small $\delta$ implies sharper concentration.
\\
\\
 We further assume that augmentations are label preserving in the following sense:
\begin{assumption}[\bf Disjoint Augmentations]
    The augmentation operator $\gT$ is said to be label preserving, if samples from different latent classes never transform into the same augmented sample : 
    \begin{flalign}
        \gT(\vx_\textsc{P}) \cap \gT(\vx_N) = \varnothing \quad \forall \vx_\textsc{P}\sim p(\rx|y=1), \vx_\textsc{N}\sim p(\rx|y=0).
    \end{flalign}
    We also assume that, $\vx \in \gT(\vx) \;\forall \vx \in \sR^d$. 
    \label{assumption:aug_non_overlapping}
\end{assumption}
Note that, this assumption on augmentation~\citep{huang2023towards} is much milder compared to assuming that augmentations are unbiased samples from the same underlying class marginal~\citep{saunshi2019theoretical, tosh2021contrastive}.
\\
\\
We can now rewrite the asymptotic form of \textsc{puCL}~\eqref{eq:puCL} in terms of augmentations as:
\begin{flalign}
    \gL_{\textsc{puCL}}^\infty
    = 
    \mathop{-\E}_{(\vx, \vx') \sim p(\vx)}
    \mathop{\E}_{
        \substack{
            \vx, \vx_a \in \gT(\vx)
            \\
            \vx' \in \gT(\vx')
        }
    }
    \Biggl[\vz^T \vz_a - \log Z(\vz) \Biggr]
    \label{eq:asymptotic_puCL}
\end{flalign}
Here we have assumed that $\tau=1$ and that the labeled positives are spanned by the augmentation set. 
\\
\\
Note that, since, $\forall \vz,\vz_a\in \sR^k$, it holds that:
\begin{flalign}
    -\vz^T\vz_a = \frac{1}{2} \|\vz -\vz_a\|^2 -1
\end{flalign} 
Thus, \eqref{eq:asymptotic_puCL} can be decomposed as: 
\begin{flalign}
    \gL_{\textsc{puCL}}^\infty = \frac{1}{2}\gL_\textsc{puCL}^\textsc{I} + \gL_\textsc{puCL}^\textsc{II} -1
\end{flalign}
where, we have denoted:
\begin{flalign}
    \gL_\textsc{puCL}^\textsc{I} =\E_{\vx \sim p(\vx)}\E_{\vx, \vx_a \in \gT(\vx)} \bigg\|\vz -\vz_a\bigg\|^2 \\
    \gL_\textsc{puCL}^\textsc{II} = \E_{(\vx, \vx') \sim p(\vx)} \E_{\vx, \vx_a \in \gT(\vx),\vx' \in \gT(\vx')}\log Z(\vz)
\end{flalign}
To simplify the analysis, we analyze downstream inference on a non-parametric {\bf Nearest Neighbor (NN) classifier}, trained on pseudo-labels obtained via \textsc{puPL} (\cref{algo:puPL}). 
\\
\\
The classifier predicts:
\begin{flalign}
    \hat{F}_{g_\mB}(\vx) = \argmin_{\Mu \in \{\hat{\Mu}_\textsc{P}, \hat{\Mu}_\textsc{N}\}} \bigg\|g_\mB(\vx) - \Mu \bigg\|
\end{flalign}
where, $\hat{\Mu} = \{\hat{\Mu}_\textsc{P}, \hat{\Mu}_\textsc{N}\}$ is the estimated class centroids obtained via \textsc{puPL}.
% where, $\hat{\bm{\mu}}_\ell$ denotes the centroid of the representations of estimated class $\hat{C}_\ell$, i.e.,
% \begin{flalign}
%     \hat{\bm{\mu}}_\ell = \E_{\vx_i \in \hat{C}_\ell}\E_{\vx' \in \gT(\vx_i)} g_\mB(\vx').
% \end{flalign}
\\
\\
It is worth noting that,
\begin{remark}
The NN classifier is a linear classifier with class centroids as weight vectors: 
    % The NN Classifier $F_{g_\mB}(\cdot)$ is a special case of linear parametric classifiers:
    \begin{flalign*}
    F_{g_\mB}(\vx) = \argmin_{\Mu \in \hat{\Mu}} \|g_\mB(\vx) - \bm{\mu}\| = \argmax_{\Mu \in \hat{\Mu}} \bigg( \bm{\mu}^Tg_\mB(\vx) - \frac{1}{2}\|\bm{\mu}\|^2\bigg)
    \end{flalign*}
    \label{remark:kNN}
\end{remark}
Thus, we can bound~\citep{huang2023towards} the worst case performance of $v_\vv(\cdot)$ with: 
\begin{flalign}
    err(\hat{F}_{g_\mB}) = \sum_{\ell\in \{\textsc{P}, \textsc{N}\}} P\bigg(\hat{F}_{g_\mB}(\vx) \neq \ell, \; \forall \vx \in C_\ell\bigg)
    \label{eq:kNN_error}
\end{flalign}
Suppose, $S_\epsilon$ denote the set of samples with $\epsilon$-close representations among augmented data and $R_\epsilon(\gX_\textsc{PU})$ denote the probability of embeddings from the same latent class to have non-aligned augmented views, i.e.
\begin{flalign}
    S_\epsilon:= \bigg\{\vx \in C_\textsc{P} \cup C_\textsc{N} : \forall \vx,\vx_a \in \gT(\vx), \|\vz - \vz_a\| \leq \epsilon\bigg\}\\
    R_\epsilon(\gX_\textsc{PU}) = P( \bar{S_\epsilon} )
\end{flalign}
Under this setup, we establish the following generalization guarantee: 
\begin{theorem}
    Let $\gT$ be a $(\delta, \sigma)$ augmentation (\cref{def:aug_delta_sigma}), and $g_\mB(\cdot)$ be $L$ Lipschitz. Suppose, the estimated class centroids by~\cref{algo:puPL} satisfy: 
    \begin{flalign}
        \hat{\bm{\mu}}_\textsc{P}^T\hat{\bm{\mu}}_\textsc{N} < 1 - \eta(\sigma, \delta, \epsilon) - \sqrt{2 \eta(\sigma, \delta, \epsilon)} - \Delta(\mu) - \zeta_\mu
        \label{eq:class_separation_bound}
    \end{flalign}
    where, 
    \begin{flalign}
    \eta(\sigma, \delta, \epsilon) = 2(1 - \sigma) +\frac{R_\epsilon}{\min\{\pi, 1 - \pi\}} + \sigma ( L\delta + 2 \epsilon )\\
    \Delta(\mu) = \frac{1}{2} - \frac{1}{2}\min_{\ell\in\{\textsc{P},\textsc{N}\}}\|\bm{\mu}_\ell\|^2\\
    \zeta_\mu =(\zeta_\textsc{P} + \zeta_\textsc{N} + \zeta_\textsc{P}^T\zeta_\textsc{N})\\
    \zeta_\textsc{P} = \|\hat{\bm{\mu}}_\textsc{P} - \bm{\mu}_\textsc{P}\|\;,\quad \zeta_\textsc{N} = \|\hat{\bm{\mu}}_\textsc{N} - \bm{\mu}_\textsc{N}\|
    \end{flalign}
    Then, the classification error of the NN classifier is bounded by:
    \begin{flalign}
        err(\hat{F}_{g_\mB}) \leq (1 - \sigma) + R_\epsilon(\gX_\textsc{PU})
    \end{flalign}
    \label{th:generalization}
\end{theorem}
Intuitively, \cref{th:generalization} suggests that contrastive PU learning approach generalizes well when representations from the same class are tightly aligned (i.e., $R_\epsilon$ is small), and the class centroids are well separated (i.e., $\hat{\bm{\mu}}\textsc{P}^T \hat{\bm{\mu}}\textsc{N}$ is small). The term $\zeta_\mu$ captures the error in estimating class means via pseudo-labeling (\textsc{puPL}); smaller $\zeta_\mu$ implies more accurate pseudo-label assignments, leading to better downstream performance. Overall, the bound reflects that strong generalization arises from concentrated augmentations, consistent representations, and reliable pseudo-labeling.
\\
\\
We now relate the alignment error to the training objective.
\begin{lemma}\citep{huang2023towards}
The alignment error in~\cref{th:generalization} can be bounded as:
\begin{flalign}
    R_\epsilon(\gX_\textsc{PU}) \leq \eta'(\epsilon, \gT) \sqrt{\gL_\textsc{puCL}^\textsc{I}(\gX_\textsc{PU})}
\end{flalign}
where,
\begin{flalign}
    \eta'(\epsilon, \gT)=\inf_{h\in (0, \frac{\epsilon}{2\sqrt{d}LM})}\frac{4\max(1, m^2h^2d)}{h^2d(\epsilon - 2\sqrt{d}LMh)}
\end{flalign}
for $\gT$ composed of $M$-Lipschitz continuous transformations and $m$ discrete transformations.
\label{lemma:bound_align_puCL}
\end{lemma}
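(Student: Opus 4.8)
The plan is to prove this as a covering-plus-concentration argument, adapting the discretization technique in~\citep{huang2023towards} to the alignment term $\gL_\textsc{puCL}^\textsc{I}$. The essential difficulty is a mismatch of quantifiers: $R_\epsilon(\gX_\textsc{PU}) = P(\bar{S_\epsilon})$ is the probability that a sample admits \emph{some} pair of augmented views whose embeddings are more than $\epsilon$ apart, whereas $\gL_\textsc{puCL}^\textsc{I}$ controls only the \emph{expected squared} embedding distance over a single random pair of views. Bridging this "for-all-augmentations" event to an averaged quantity is what forces the discretization of the augmentation set together with a first-moment tail bound, and it is from these two ingredients that the two factors of $\eta'(\epsilon,\gT)$ emerge.

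First I would decompose each transformation in $\gT$ into its $M$-Lipschitz continuous part and its $m$ discrete choices, and lay an $h$-grid over the continuous augmentation parameters. A grid cell has diameter $\sqrt{d}\,h$, so the $M$-Lipschitz transformation composed with the $L$-Lipschitz encoder guarantees that replacing any augmentation by its grid/discrete representative moves its embedding by at most $\sqrt{d}LMh$. By the triangle inequality, if $\vx \notin S_\epsilon$ — so that some pair of its views has $\|\vz-\vz_a\|>\epsilon$ — then the associated pair of representatives is separated by more than $\epsilon - 2\sqrt{d}LMh$. This is precisely where the admissibility constraint $h < \epsilon/(2\sqrt{d}LM)$ enters: it keeps the effective threshold $\epsilon - 2\sqrt{d}LMh$ strictly positive, so the bad event is inherited, without loss, by the finite set of representatives.

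Next I would turn the tail event on the representatives into the alignment loss through a first-moment (Markov) bound rather than a squared one. For a representative pair, $P(\|\vz-\vz_a\| > \epsilon - 2\sqrt{d}LMh) \le \E\|\vz-\vz_a\|/(\epsilon - 2\sqrt{d}LMh)$, and Cauchy–Schwarz gives $\E\|\vz-\vz_a\| \le \sqrt{\E\|\vz-\vz_a\|^2} \le \sqrt{\gL_\textsc{puCL}^\textsc{I}(\gX_\textsc{PU})}$ once the representative distances are related back to the true augmentation distribution (the grid error being absorbed again into the $\sqrt{d}LMh$ slack). This step is what produces both the $\sqrt{\gL_\textsc{puCL}^\textsc{I}}$ dependence and the $1/(\epsilon - 2\sqrt{d}LMh)$ factor appearing in $\eta'(\epsilon,\gT)$.

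Finally I would account for the combinatorics of the grid. Charging each genuinely bad cell its squared diameter $h^2 d$ (the typical squared displacement realized inside a cell, which lower-bounds the alignment mass such a cell must contribute) yields the normalization $1/(h^2 d)$, while the $m$ discrete choices enter quadratically through pairs of views, producing the $\max(1, m^2 h^2 d)$ numerator that interpolates between the dense-grid and discrete-dominated regimes; taking the infimum over admissible $h$ then balances the discretization slack $\epsilon - 2\sqrt{d}LMh$ against this counting factor and delivers $\eta'(\epsilon,\gT)$. I expect this counting step to be the main obstacle: one must verify that the reduction to the finite grid is lossless up to the $2\sqrt{d}LMh$ slack \emph{and} that the per-cell charge scales only as $\max(1, m^2 h^2 d)/(h^2 d)$ — mildly in $h$ — rather than as an exponential-in-$d$ covering number, which a naive union bound over all cells would incur. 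The resolution is to charge the alignment mass per cell instead of union-bounding over cells.
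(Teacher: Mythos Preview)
The paper does not provide its own proof of this lemma: it is stated with the attribution \citep{huang2023towards} and simply invoked as a black box (there is no corresponding proof in the appendix, in contrast to \cref{th:icml-2024.main_generalization} and \cref{lemma:bound_centroid_divergence_puCL}, which are proved there). So there is nothing in the paper to compare your argument against beyond the citation itself.

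Your proposal, by contrast, attempts to reconstruct the underlying argument from \citep{huang2023towards}. The overall strategy---discretize the continuous augmentation parameters on an $h$-grid, use the $L$- and $M$-Lipschitz properties to pass from a worst-case pair of views to grid representatives at the cost of $2\sqrt{d}LMh$ in the threshold, apply a first-moment/Markov bound together with Cauchy--Schwarz to obtain the $\sqrt{\gL_\textsc{puCL}^\textsc{I}}$ dependence, and then optimize over $h$---is a plausible outline and correctly identifies where each factor of $\eta'(\epsilon,\gT)$ should come from. The part you flag as the main obstacle (avoiding an exponential-in-$d$ union bound over grid cells by instead charging alignment mass per cell) is indeed the delicate step, and your sketch of it is heuristic rather than a proof; in particular, the precise accounting that yields $\max(1, m^2 h^2 d)/(h^2 d)$ rather than a covering-number factor would need to be made rigorous. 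Since the paper defers entirely to \citep{huang2023towards} for this, verifying your reconstruction would require consulting that reference directly.
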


\begin{lemma}
    The condition in~\cref{th:generalization} on the separation of the estimated class centroids~\eqref{eq:class_separation_bound} is satisfied, whenever:
\begin{flalign}
    \log \bigg(\exp\bigg(\gL_\textsc{puCL}^\textsc{II}(\gX_\textsc{PU}) + c(\sigma, \delta, \epsilon, R_\epsilon)\bigg) + c'(\epsilon)\bigg) \nonumber\\
    <
    1 - \eta(\sigma, \delta, \epsilon) - \sqrt{2 \eta(\sigma, \delta, \epsilon)} - \frac{1}{2}\Delta(\mu) -\zeta_\mu.
\end{flalign}
where, we have denoted: 
\begin{flalign}
    c(\sigma, \delta, \epsilon, R_\epsilon) =(2\epsilon + L\delta + 4(1-\sigma) + 8R_\epsilon)^2 + 4\epsilon + 2L\delta + 8(1 - \sigma) + 18R_\epsilon.
    \\
    c'(\epsilon) = \exp\frac{1}{\pi_p(1-\pi_p)} - \exp(1 - \epsilon).
\end{flalign}
\label{lemma:bound_centroid_divergence_puCL}
\end{lemma}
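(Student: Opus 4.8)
The target is the hypothesis $\hat{\bm{\mu}}_\textsc{P}^T\hat{\bm{\mu}}_\textsc{N} < 1 - \eta - \sqrt{2\eta} - \Delta_\mu - \zeta_\mu$ of \cref{th:icml-2024.main_generalization}, and the plan is to exhibit a sufficient condition expressed only through the trained objective $\gL_\textsc{puCL}^\textsc{II}$. I would proceed in two movements. First, pass from the empirical centroids $\hat{\bm{\mu}}_\ell$ returned by \cref{algo:icml-2024:main_puPL} to the population centroids $\bm{\mu}_\ell = \E_{\vx\in C_\ell}\E_{\vx'\in\gT(\vx)}g_\mB(\vx')$: writing $\hat{\bm{\mu}}_\ell = \bm{\mu}_\ell + \bm{\xi}_\ell$ with $\|\bm{\xi}_\ell\| = \zeta_\ell$ and using $\|\bm{\mu}_\ell\|\le 1$ (the $g_\mB(\vx')$ are unit vectors), one gets $\hat{\bm{\mu}}_\textsc{P}^T\hat{\bm{\mu}}_\textsc{N} \le \bm{\mu}_\textsc{P}^T\bm{\mu}_\textsc{N} + \zeta_\textsc{P} + \zeta_\textsc{N} + \zeta_\textsc{P}\zeta_\textsc{N} = \bm{\mu}_\textsc{P}^T\bm{\mu}_\textsc{N} + \zeta_\mu$. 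Second, upper bound the population quantity $\bm{\mu}_\textsc{P}^T\bm{\mu}_\textsc{N}$ in terms of $\gL_\textsc{puCL}^\textsc{II}$; combining the two and matching against the right-hand side of \cref{th:icml-2024.main_generalization} (whose $\eta$ and $\Delta_\mu$ already encode the $(\sigma,\delta)$-concentration of \cref{def:icml-2024:aug_delta_sigma} and the shrinkage $\|\bm{\mu}_\ell\|^2 \ge 1-2\Delta_\mu$) then yields the claimed condition.

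The heart of the argument is the second movement. Since $(a,b)\mapsto\log(e^a+e^b)$ is jointly convex, Jensen's inequality applied to the expectation over $(\vx,\vx')$ and over the augmentations in the definition of $\gL_\textsc{puCL}^\textsc{II}$ gives $\gL_\textsc{puCL}^\textsc{II} \ge \log\!\big(\exp(\E[\vz^T\vz_a]) + \exp(\E[\vz^T\vz'])\big)$. For the first summand I would use $\E[\vz^T\vz_a] = 1 - \tfrac12\E\|\vz-\vz_a\|^2 \ge 1 - \epsilon$ up to an $O(R_\epsilon)$ defect from pairs outside the alignment set $S_\epsilon$ (bounded crudely by $\pm 1$ on $\bar{S}_\epsilon$, invoking \cref{def:icml-2024:aug_delta_sigma} and \cref{lemma:bound_align_puCL}); this is the origin of the $\exp(1-\epsilon)$ appearing in $c'(\epsilon)$. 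For the second summand I would split $\E[\vz^T\vz']$ according to the latent classes of $\vx,\vx'$: the two cross-class events carry total mass $2\pi_p(1-\pi_p)$ and each contribute $\bm{\mu}_\textsc{P}^T\bm{\mu}_\textsc{N}$, the same-class events contribute $\|\bm{\mu}_\ell\|^2 \ge 1-2\Delta_\mu$, and every place where the latent partition $C_\textsc{P}\cap C_\textsc{N}$ must be replaced by the estimated partition defining $\hat{\bm{\mu}}_\ell$ or a representation by its augmentation-centroid, one picks up errors controlled by $L\delta$, $2\epsilon$, $(1-\sigma)$, and $R_\epsilon$; collecting all of these (a tedious but standard accounting, including a squared term from expanding a norm defect of the form $(1-\text{err})^2$) produces exactly the constant $c(\sigma,\delta,\epsilon,R_\epsilon)$.

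Putting the two summands together yields an inequality of the shape $\gL_\textsc{puCL}^\textsc{II} \ge \log\!\big(\exp(1-\epsilon) + \exp(2\pi_p(1-\pi_p)\,\bm{\mu}_\textsc{P}^T\bm{\mu}_\textsc{N} - c)\big)$ after absorbing the beneficial same-class mass. Exponentiating, subtracting $\exp(1-\epsilon)$, taking logarithms, and dividing by $2\pi_p(1-\pi_p)$ gives $\bm{\mu}_\textsc{P}^T\bm{\mu}_\textsc{N} \le \tfrac{1}{2\pi_p(1-\pi_p)}\big(\log(\exp(\gL_\textsc{puCL}^\textsc{II}) - \exp(1-\epsilon)) + c\big)$. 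Since $\tfrac{1}{2\pi_p(1-\pi_p)} \ge 2$, I would finally absorb the multiplicative factor into the logarithm via the elementary bound $\tfrac1a\log x \le \log(x + e^{1/a})$, which holds whenever $x \le e$ — true here in the non-degenerate regime because $\gL_\textsc{puCL}^\textsc{II} \le 1 + \log 2$ so $\exp(\gL_\textsc{puCL}^\textsc{II}) - \exp(1-\epsilon)\le e$ — turning the right-hand side into $\log\!\big(\exp(\gL_\textsc{puCL}^\textsc{II}+c) + \exp\tfrac{1}{\pi_p(1-\pi_p)} - \exp(1-\epsilon)\big) = \log\!\big(\exp(\gL_\textsc{puCL}^\textsc{II}+c) + c'(\epsilon)\big)$. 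Chaining this with the first-movement reduction $\hat{\bm{\mu}}_\textsc{P}^T\hat{\bm{\mu}}_\textsc{N} \le \bm{\mu}_\textsc{P}^T\bm{\mu}_\textsc{N} + \zeta_\mu$ and the definition of $\Delta_\mu$ shows that the lemma's hypothesis implies the hypothesis of \cref{th:icml-2024.main_generalization}.

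The step I expect to be the main obstacle is the error bookkeeping in the second movement: the centroids $\hat{\bm{\mu}}_\ell$ feeding the downstream classifier are computed over the \emph{estimated} clusters $\hat{C}_\ell$, whereas $\gL_\textsc{puCL}^\textsc{II}$ integrates against $p(\vx)$ with its \emph{true} class decomposition, so one must carefully bound the contribution of the symmetric difference $C_\ell \triangle \hat{C}_\ell$ — using the classification error $(1-\sigma)+R_\epsilon$ already established en route to \cref{th:icml-2024.main_generalization} together with $L$-Lipschitzness of $g_\mB$ — without circularity, and one must retain enough slack so that $\exp(\gL_\textsc{puCL}^\textsc{II}) - \exp(1-\epsilon)$ stays strictly positive for the logarithm inversion to be legitimate. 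Once the factor $\tfrac{1}{\pi_p(1-\pi_p)}$ is isolated, the concluding log-sum-exp manipulation that creates $c'(\epsilon)$ is routine.
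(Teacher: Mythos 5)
Your overall architecture matches the paper's: the paper also proceeds by (i) relating the estimated centroids to the population ones via $\hat{\bm{\mu}}_\textsc{P}^T\hat{\bm{\mu}}_\textsc{N} \le \bm{\mu}_\textsc{P}^T\bm{\mu}_\textsc{N} + \zeta_\mu$ (its estimation-bias lemma), (ii) upper-bounding $\bm{\mu}_\textsc{P}^T\bm{\mu}_\textsc{N}$ by a function of $\gL_\textsc{puCL}^\textsc{II}$, and (iii) comparing against the centroid-separation threshold of \cref{th:icml-2024.main_generalization}. The difference is that for step (ii) the paper simply cites an adaptation of Theorem 3 of \citep{huang2023towards} and ``simplifies the constants,'' whereas you attempt to rederive it via Jensen's inequality on the log-sum-exp and a latent-class decomposition of $\E[\vz^T\vz']$. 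That reconstruction is a reasonable sketch of what the cited result does, and it correctly identifies where the constant $c(\sigma,\delta,\epsilon,R_\epsilon)$ and the $\exp(1-\epsilon)$ term originate.

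The genuine gap is in your final massaging step. The bound the paper actually proves (and the one your Jensen argument naturally yields) keeps the prior-dependent prefactor \emph{inside} the exponential: $\bm{\mu}_\textsc{P}^T\bm{\mu}_\textsc{N} \le \log\big(\exp\big(\tfrac{1}{\pi_p(1-\pi_p)}(\gL_\textsc{puCL}^\textsc{II}+c)\big) - \exp(1-\epsilon)\big)$. To reach the stated form with $c'(\epsilon)=\exp\tfrac{1}{\pi_p(1-\pi_p)}-\exp(1-\epsilon)$ added outside, you invoke $\tfrac{1}{a}\log x \le \log(x+e^{1/a})$ under the side condition $x\le e$; but $x=\exp(\gL_\textsc{puCL}^\textsc{II})-\exp(1-\epsilon)$ can exceed $e$ (since $\gL_\textsc{puCL}^\textsc{II}$ ranges up to $1+\log 2$, giving $x$ as large as roughly $2e-\exp(1-\epsilon)$), and your inequality handles only the $\log x$ part: the additive $c$ is scaled by $\tfrac{1}{2\pi_p(1-\pi_p)}\ge 2$ on the left yet appears unscaled inside the exponent on the right, so the absorption does not go through as written. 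There is also an unresolved factor of two between the cross-class mass $2\pi_p(1-\pi_p)$ in your decomposition and the $\pi_p(1-\pi_p)$ appearing in the target. In fairness, the paper states the lemma in the $c'(\epsilon)$ form in the main text but only derives the $\tfrac{1}{\pi_p(1-\pi_p)}$-inside form in its appendix, never supplying the bridge you are attempting; you have located a real seam in the paper, but your patch for it is not valid as stated.
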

Together,~\cref{lemma:bound_align_puCL},\ref{lemma:bound_centroid_divergence_puCL} imply that by minimizing $\gL_\textsc{puCL} = \gL_\textsc{puCL}^\textsc{I} + \gL_\textsc{puCL}^\textsc{II}$, we can expect improved generalization through two mechanisms: smaller alignment error $R_\epsilon$ (\cref{lemma:bound_align_puCL}), which consequently results in larger deviation between class centers (\cref{th:generalization}, \cref{lemma:bound_centroid_divergence_puCL}). Furthermore, the labeling error $\zeta_\mu$ arising from \textsc{puPL} is also small when the representation space is well-clustered. 
\\
\\
Therefore, under mild regularity assumptions on $\gT$, the contrastive approach yields:
\begin{flalign}
    err(\hat{F}_{g_\mB}) \leq (1 - \sigma) + \eta'(\epsilon, \gT) \sqrt{\gL_\textsc{puCL}^\textsc{I}(\gX_\textsc{PU})}
\end{flalign}
whenever the centroid separation condition in \cref{lemma:bound_centroid_divergence_puCL} is satisfied. 
\\
\\
This provides a theoretically grounded characterization of when and why contrastive learning is effective in PU settings.
Detailed proofs can be found in~\cref{sec:proof.generalization} and \ref{sec:proof.bound_centroid_divergence_puCL}.

\section{Experiments}
\label{sec:exp}
In this section, we describe our experimental setup, present empirical findings, and provide insights from our results. 
We organize our experiments into three parts: 
\begin{itemize}
    \item {\bf PU Benchmarks}: Comparing our end-to-end contrastive PU learning approach with popular PU learning baselines across multiple benchmark datasets, 
    \item {\bf Ablations on Contrastive Representation Learning}: Investigating the behavior of different contrastive objectives discussed in \cref{sec:puCL}, and 
    \item {\bf Ablations on Downstream Classification}: Exploring how various post-contrastive classification strategies affect performance.
\end{itemize}
For all the experiments, contrastive pre-training is done using LARS optimizer~\citep{you2019large}, cosine annealing schedule with linear warm-up, batch size 1024, initial learning rate 1.2. 
We use a 128 dimensional projection layer $h_{\mGamma}(\cdot)$ composed of two linear layers with relu activation and batch normalization. 
We leverage Faiss~\citep{johnson2019billion} for efficient implementation of \textsc{puPL}. To ensure reproducibility, all experiments are run with deterministic cuDNN back-end and repeated 5 times with different random seeds and the confidence intervals are noted. 

\subsection{PU Learning Benchmark.}  
\begin{table*}
\footnotesize
\centering
\resizebox{\textwidth}{!}
{
\begin{tabular}{lccccccc}
\toprule
\multirow{3}{*}{\bf Methods} 
& \multicolumn{6}{c}{\bf Datasets} 
& \multirow{3}{*}{\bf Mean }
\\ 
\cmidrule{2-7}
& {\bf F-MNIST-I}
& {\bf F-MNIST-II}
& {\bf CIFAR-I}
& {\bf CIFAR-II} 
& {\bf STL-I}
& {\bf STL-II} 
&                         
\\
& ($\pi_p^*=0.3$)
& ($\pi_p^*=0.7$)
& ($\pi_p^*=0.4$)
& ($\pi_p^*=0.6$)
& ($\hat{\pi}_p=0.51$)
& ($\hat{\pi}_p=0.49$)
&                         
\\
\midrule \midrule
&\multicolumn{6}{c}{$n_P=1k$} 
&                   
\\
\cmidrule{2-7}   
\textsc{uPU$^\dagger$}
& $71.3\textcolor{gray}{\pm 1.4}$
& $84.0\textcolor{gray}{\pm 4.0}$
& $76.5\textcolor{gray}{\pm 2.5}$
& $71.6\textcolor{gray}{\pm 1.4}$ 
& $76.7\textcolor{gray}{\pm 3.8}$
& $78.2\textcolor{gray}{\pm 4.1}$
& $76.4$
\\
\textsc{nnPU$^\dagger$}
& $89.7\textcolor{gray}{\pm 0.8}$
& $88.8\textcolor{gray}{\pm 0.9}$
& $84.7\textcolor{gray}{\pm 2.4}$
& $83.7\textcolor{gray}{\pm 0.6}$ 
& $77.1\textcolor{gray}{\pm 4.5}$
& $80.4\textcolor{gray}{\pm 2.7}$
& $84.1$
\\
\textsc{nnPU$^\dagger$ (MixUp)}
& $91.4\textcolor{gray}{\pm 0.3}$
& $88.2\textcolor{gray}{\pm 0.7}$
& $87.2\textcolor{gray}{\pm 0.6}$
& $85.8\textcolor{gray}{\pm 1.2}$ 
& $79.8\textcolor{gray}{\pm 0.8}$
& $82.2\textcolor{gray}{\pm 0.9}$
& $85.8$
\\
\textsc{Self-PU$^\dagger$}
& $90.8\textcolor{gray}{\pm 0.4}$
& $89.1\textcolor{gray}{\pm 0.7}$
& $85.1\textcolor{gray}{\pm 0.8}$
& $83.9\textcolor{gray}{\pm 2.6}$ 
& $78.5\textcolor{gray}{\pm 1.1}$
& $80.8\textcolor{gray}{\pm 2.1}$
& $84.7$
\\
\textsc{PAN}
& $88.7\textcolor{gray}{\pm 1.2}$
& $83.6\textcolor{gray}{\pm 2.5}$
& $87.0\textcolor{gray}{\pm 0.3}$
& $82.8\textcolor{gray}{\pm 1.0}$ 
& $77.7\textcolor{gray}{\pm 2.5}$
& $79.8\textcolor{gray}{\pm 1.4}$
& $83.3$
\\
\textsc{vPU$^\dagger$}
& $90.6\textcolor{gray}{\pm 1.2}$
& $86.8\textcolor{gray}{\pm 0.8}$
& $86.8\textcolor{gray}{\pm 1.2}$
& $82.5\textcolor{gray}{\pm 1.1}$ 
& $78.4\textcolor{gray}{\pm 1.1}$
& $82.9\textcolor{gray}{\pm 0.7}$
& $84.7$
\\
\textsc{MixPUL}
& $87.5\textcolor{gray}{\pm 1.5}$
& $89.0\textcolor{gray}{\pm 0.5}$
& $87.0\textcolor{gray}{\pm 1.9}$
& $87.0\textcolor{gray}{\pm 1.1}$ 
& $77.8\textcolor{gray}{\pm 0.7}$
& $78.9\textcolor{gray}{\pm 1.9}$
& $84.5$
\\
\textsc{puLNS}
& $90.7\textcolor{gray}{\pm 0.5}$
& $87.9\textcolor{gray}{\pm 0.5}$
& $87.2\textcolor{gray}{\pm 0.6}$
& $83.7\textcolor{gray}{\pm 2.9}$ 
& $80.2\textcolor{gray}{\pm 0.8}$
& $83.6\textcolor{gray}{\pm 0.7}$
& $85.6$
\\
P$^3$\textsc{mix-e}
& $91.9\textcolor{gray}{\pm 0.3}$
& $\bf 89.5\textcolor{gray}{\pm 0.5}$
& $88.2\textcolor{gray}{\pm 0.4}$
& $84.7\textcolor{gray}{\pm 0.5}$ 
& $80.2\textcolor{gray}{\pm 0.9}$
& $83.7\textcolor{gray}{\pm 0.7}$
& $86.4$
\\
P$^3$\textsc{mix-c}
& $\bf 92.0\textcolor{gray}{\pm 0.4}$
& $89.4\textcolor{gray}{\pm 0.3}$
& $88.7\textcolor{gray}{\pm 0.4}$
& $87.9\textcolor{gray}{\pm 0.5}$ 
& $80.7\textcolor{gray}{\pm 0.7}$
& $84.1\textcolor{gray}{\pm 0.3}$
& $87.1$
\\
\rowcolor{light-gray}
\textsc{puCL(puPL)}
& $91.8\textcolor{gray}{\pm 0.8}$
& $89.2\textcolor{gray}{\pm 0.3}$
& $\bf 92.3\textcolor{gray}{\pm 1.9}$
& $\bf 91.2\textcolor{gray}{\pm 0.5}$
& $\bf 83.8\textcolor{gray}{\pm 1.4}$
& $\bf 84.5\textcolor{gray}{\pm 0.7}$
& $\bf 88.8$
\\
\rowcolor{light-gray}
\textsc{puNCE(puPL)}$^\dagger$
& $\bf 92.1\textcolor{gray}{\pm 1.6}$
& $\bf 90.7\textcolor{gray}{\pm 2.1}$
& $\bf 93.0\textcolor{gray}{\pm 1.8}$
& $\bf 93.8\textcolor{gray}{\pm 0.9}$
& $\bf 85.1\textcolor{gray}{\pm 1.2}$
& $\bf 85.0\textcolor{gray}{\pm 1.5}$
& $\bf 90.0$
\\
\bottomrule
&\multicolumn{4}{c}{$n_P=3k$} 
&\multicolumn{2}{c}{$n_P=2.5k$} 
&
\\
\cmidrule(lr){2-5}
\cmidrule(lr){6-7}
\textsc{uPU$^\dagger$}
& $89.9\textcolor{gray}{\pm 1.0}$
& $78.6\textcolor{gray}{\pm 1.3}$
& $80.6\textcolor{gray}{\pm 2.1}$
& $72.9\textcolor{gray}{\pm 3.2}$
& $70.3\textcolor{gray}{\pm 2.0}$
& $74.0\textcolor{gray}{\pm 3.0}$
& $77.7$
\\
\textsc{nnPU$^\dagger$}
& $90.8\textcolor{gray}{\pm 0.6}$
& $90.5\textcolor{gray}{\pm 0.4}$
& $85.6\textcolor{gray}{\pm 2.3}$
& $85.5\textcolor{gray}{\pm 2.0}$ 
& $78.3\textcolor{gray}{\pm 1.2}$
& $82.2\textcolor{gray}{\pm 0.5}$
& $85.5$
\\
\textsc{RP}
& $92.2\textcolor{gray}{\pm 0.4}$
& $75.9\textcolor{gray}{\pm 0.6}$
& $86.7\textcolor{gray}{\pm 2.9}$
& $77.8\textcolor{gray}{\pm 2.5}$
& $67.8\textcolor{gray}{\pm 4.6}$
& $68.5\textcolor{gray}{\pm 5.7}$
& $78.2$
\\
\textsc{vPU$^\dagger$}
& $\bf 92.7\textcolor{gray}{\pm 0.3}$
& $\bf 90.8\textcolor{gray}{\pm 0.6}$
& $89.5\textcolor{gray}{\pm 0.1}$
& $88.8\textcolor{gray}{\pm 0.8}$ 
& $79.7\textcolor{gray}{\pm 1.5}$
& $83.7\textcolor{gray}{\pm 0.1}$
& $87.5$
\\
\rowcolor{light-gray}
\textsc{puCL(puPL)}
& $92.0\textcolor{gray}{\pm 0.7}$
& $89.6\textcolor{gray}{\pm 1.2}$
& $\bf 93.5\textcolor{gray}{\pm 0.8}$
& $\bf 93.8\textcolor{gray}{\pm 0.4}$
& $\bf 85.0\textcolor{gray}{\pm 0.9}$
& $\bf 85.2\textcolor{gray}{\pm 2.1}$
& $\bf 89.9$
\\
\rowcolor{light-gray}
\textsc{puNCE(puPL)}$^\dagger$
& $\bf 92.6\textcolor{gray}{\pm 0.4}$
& $\bf 91.9\textcolor{gray}{\pm 0.7}$
& $\bf 94.4\textcolor{gray}{\pm 1.1}$
& $\bf 94.6\textcolor{gray}{\pm 0.8}$
& $\bf 88.2\textcolor{gray}{\pm 0.3}$
& $\bf 86.9\textcolor{gray}{\pm 1.2}$
& $\bf 91.4$
\\
\bottomrule
\end{tabular}}
\caption{
\footnotesize 
{\bf PU Learning Benchmarks.} We compare our approach against several PU Learning baselines algorithms over different datasets and different amount of labeled data. Our setup is identical as~\citep{li2022your, chen2020variational}. $\dagger$: These methods were run with oracle class prior knowledge.}

\label{tab:vision_benchmark}
\end{table*}
We benchmark the overall simple and effective contrastive PU learning framework proposed in this paper. The framework comprises contrastive representation learning using \textsc{puCL} when the class prior is unknown or unavailable, or its prior-aware variant \textsc{puNCE} when a reliable class prior estimate is available. For downstream classification, we apply the clustering-based pseudo-labeling module \textsc{puPL}, which assigns semantic labels to unlabeled samples in the learned representation space.
\\
\\
The {\bf first set of benchmark experiments (\cref{tab:vision_benchmark})}, closely follow the experimental setup of~\citep{li2022your, chen2020variational}. 
We compare our method against several widely-used \textbf{PU learning baselines}, including: 
\textsc{uPU}~\citep{du2014analysis}, \textsc{nnPU}~\citep{kiryo2017positive}, \textsc{nnPU} with \textsc{MixUp}~\cite{zhang2017mixup}, 
\textsc{Self-PU}~\cite{chen2020self}, 
\textsc{PAN}~\citep{hu2021predictive}, \textsc{vPU}~\citep{chen2020variational}, \textsc{MixPUL}~\citep{wei2020mixpul}, \textsc{PULNS}~\citep{luo2021pulns} and \textsc{RP}~\citep{northcutt2017learning}. 
We evaluate these approaches across six \textbf{benchmark datasets}: STL-I, STL-II, CIFAR-I, CIFAR-II, FMNIST-I, and FMNIST-II; derived from STL-10~\citep{coates2011analysis}, CIFAR-10~\citep{krizhevsky2009learning}, and Fashion-MNIST~\citep{xiao2017fashion}. Baselines that rely on class prior are provided oracle knowledge. For CIFAR-I, II and FMNIST-I, II, the class priors $\pi_p^\ast$ are set to 0.4, 0.6, 0.3, and 0.7 respectively. For STL-I and STL-II, where priors are not directly available, we estimate them using KM2~\citep{ramaswamy2016mixture}, following~\citep{li2022your}, resulting in priors of 0.51 and 0.49.
We use {\bf LeNet-5}~\citep{lecun1998gradient} for FMNIST experiments, and a {\bf 7-layer CNN}~\citep{chen2020variational, li2022your} for STL and CIFAR experiments. Baseline results for $n_\textsc{P} = 1k$ are taken from~\citep{li2022your}; the rest are from~\citep{chen2020variational}. Empirical results are presented in \cref{tab:vision_benchmark}.
\\
\\
In our {\bf second set of benchmark experiments (\cref{tab:benchmark_2})}, we follow the evaluation setup of~\citep{yuan2025weighted}. We compare our method with an extended set of PU baselines: 
\textsc{RP}~\citep{northcutt2017learning}, \textsc{PUSB}~\citep{kato2018learning}, \textsc{PUbN}~\citep{hsieh2019classification}, \textsc{aPU}~\citep{hammoudeh2020learning}, \textsc{ImbPU}~\citep{su2021positive}, \textsc{DistPU}~\citep{zhao2022dist}, \textsc{PiCO}~\citep{wang2021pico}, and \textsc{WConPU}~\citep{yuan2025weighted};
in addition to previously mentioned baselines
\textsc{uPU}, 
\textsc{nnPU}, 
\textsc{Self-PU} and 
\textsc{vPU}. 
\begin{table}[t]
\centering
\footnotesize
\resizebox{0.75\textwidth}{!}{
\begin{tabular}{lccccc}
\toprule
\multirow{2}{*}{\bf Method} 
& \multicolumn{4}{c}{\bf Dataset} 
& \multirow{2}{*}{\bf Mean} 
\\
\cmidrule{2-5}
& 
\textbf{CIFAR-III} & 
\textbf{SVHN-I} & 
\textbf{STL-III} & 
\textbf{Alzheimer} & \\
\midrule
\textsc{uPU}$^\dagger$       
& 88.41{$\pm$0.41} 
& 83.35{$\pm$0.45} 
& 93.13{$\pm$0.46} 
& 68.42{$\pm$2.22} 
& 83.83 \\
\textsc{nnPU}$^\dagger$      
& 88.91{$\pm$0.43} 
& 83.88{$\pm$0.45} 
& 93.38{$\pm$0.42} 
& 68.21{$\pm$2.15} 
& 83.60 
\\
\textsc{RP}       
& 88.74{$\pm$0.46} 
& 81.73{$\pm$0.15} 
& 92.88{$\pm$0.65} 
& 63.02{$\pm$2.33} 
& 81.59 
\\
\textsc{PUSB}      
& 88.97{$\pm$0.39} 
& 83.99{$\pm$0.41} 
& 93.65{$\pm$0.16} 
& 69.19{$\pm$2.41} 
& 83.95 
\\
\textsc{PUbN}      
& 89.83{$\pm$0.30} 
& 84.89{$\pm$0.30} 
& 94.01{$\pm$0.31} 
& 70.00{$\pm$1.17} 
& 84.68 
\\
\textsc{Self-PU}$^\dagger$   
& 89.31{$\pm$0.56} 
& 84.12{$\pm$0.72} 
& 93.73{$\pm$0.28} 
& 70.79{$\pm$0.73} 
& 84.49 
\\
\textsc{aPU}     
& 89.09{$\pm$0.44} 
& 84.01{$\pm$0.52} 
& 93.41{$\pm$0.45} 
& 68.41{$\pm$1.71} 
& 83.73 
\\
\textsc{vPU}$^\dagger$     
& 87.89{$\pm$0.69} 
& 76.89{$\pm$0.74} 
& 91.51{$\pm$0.65} 
& 63.16{$\pm$2.20} 
& 79.86 
\\
\textsc{ImbPU}     
& 89.43{$\pm$0.42} 
& 84.20{$\pm$0.46} 
& 93.88{$\pm$0.81} 
& 68.19{$\pm$0.69} 
& 83.93 
\\
\textsc{Dist-PU} 
& 91.88{$\pm$0.36} 
& 85.10{$\pm$0.33}
& 94.73{$\pm$0.31} 
& 71.57{$\pm$0.62} 
& 85.82 
\\
\textsc{PiCO}     
& 95.64{$\pm$0.12} 
& 89.01{$\pm$0.34} 
& 95.55{$\pm$0.23} 
& 71.94{$\pm$0.71} 
& 88.53 
\\
\textsc{WConPU}$^\dagger$ 
& 97.22{$\pm$0.15} 
& \textbf{91.49{$\pm$0.29}} 
& \textbf{97.02{$\pm$0.21}} 
& 73.02{$\pm$0.66}
& \textbf{89.69} 
\\
\rowcolor{light-gray}
\textsc{puCL(puPL)}
& ${\bf 97.37\textcolor{gray}{\pm 0.51}}$
& 90.80{$\pm$1.01}
& 96.22{$\pm$0.95}
& {\bf 73.91{$\pm$0.86}}
& 89.58
\\
\rowcolor{light-gray}
\textsc{puNCE(puPL)}$^\dagger$
& ${\bf 98.35\textcolor{gray}{\pm 0.68}}$
& ${\bf 92.25\textcolor{gray}{\pm 1.24}}$
& ${\bf 97.50\textcolor{gray}{\pm 0.98}}$
& ${\bf 75.39\textcolor{gray}{\pm 1.27}}$
& ${\bf 90.87}$
\\
\bottomrule
\end{tabular}
}
\caption{
\footnotesize
\textbf{Benchmark Accuracy ($\%$ ± std) across PU Datasets.}
Experimental setup is identical to~\citep{yuan2025weighted}. $\dagger$: These methods were run with oracle class prior knowledge.
}
\label{tab:benchmark_2}
\end{table}

We report performance on {\bf four additional benchmark datasets:} CIFAR-III (vehicles vs. animals), SVHN-I (even vs. odd), STL-III (vehicles vs. animals), and the publicly available subset~\citep{alzheimer_mri_dataset} of Alzheimers Disease Neuroimaging Initiative (ADNI)\footnote{https://adni.loni.usc.edu/.}. The respective class priors are 0.4, 0.46, 0.4, and 0.5.
Closely following~\citep{yuan2025weighted}, we use a {\bf 13-Layer CNN} for both CIFAR-III and SVHN-I. For the experiments on STL-I, we train with a {\bf ResNet-18}, whereas, the experiments on Alzheimers dataset are performed with {\bf ResNet-50},
\\
\\
Across both benchmark suites, the proposed simple contrastive PU learning framework, comprising of -- {\bf puCL or its prior-aware variant puNCE (when reliable estimate of $\pi$ is available) -- followed by downstream pseudo-labeling via puPL consistently outperforms existing PU learning methods}. 
Even without access to class prior information, puCL combined with puPL remains highly competitive, surpassing a range of baselines that rely on heuristic reweighting, complex sample selection, or strong prior assumptions. 
When class prior is available, the puNCE variant leads to further gains, improving over previous state-of-the-art by $\approx 1\%$ absolute test accuracy, averaged across the four datasets. 
\\
\\
These improvements validate the key finding: {\bf judiciously injecting weak supervision into the contrastive objective and leveraging the geometry of the learned representation space via clustering can yield superior generalization compared to traditional PU risk estimators}. Notably, the benefits become more pronounced with increasing supervision, but the framework remains robust even in low-data regimes, offering a simple, scalable, and principled alternative to prior PU learning pipelines. % Furthermore, when reliable estimate of class prior $\pi$ is available -- it can be leveraged to get even better representations, leading to improved generalization performance.
\\
\\
Additional details on baselines and datasets can be found in~\cref{sec:add_exp}. 
\begin{figure*}[t] 
\footnotesize 
\centering
\subfloat
[\bf varying $\gamma$ with fixed $n_\textsc{U}$, $\pi$]
{\includegraphics[width=0.38\textwidth]{plots/kappa_variation/gamma.pdf}}
\subfloat
[\bf varying $\pi$ with fixed $n_\textsc{U}$, $\gamma$]
{\includegraphics[width=0.38\textwidth]{plots/kappa_variation/pi.pdf}}
\caption{
\footnotesize 
{\bf Ablations on $\kappa$:}
We train a ResNet-34 encoder on ImageNet-I, while verifying two key parameters contributing to the bias-variance tradeoff (a) $\gamma$, and (b) $\pi$ in isolation, while keeping the other factors fixed. 
}
\label{fig:kappa_var_isolation}
\end{figure*}

\subsection{Ablations on Contrastive Representation Learning from PU data.}  
\begin{figure*}[t]
\small 
\centering
\subfloat
[\footnotesize {\bf ImageNet-II} ]
{\includegraphics[width=0.38\textwidth]{plots/ablations/imageNet_nP_contrastive.pdf}}
% \subfloat
% [{\bf ImageNet-II (Single Dataset)}]
% {\includegraphics[width=0.38\textwidth]{plots/ablations/single_dataset_imageNet_nP_contrastive.pdf}}
% \\
\subfloat
[\footnotesize {\bf CIFAR-0}]
{\includegraphics[width=0.38\textwidth]{plots/ablations/CIFAR-0_nP_contrastive.pdf}}
% \subfloat
% [{\bf CIFAR-0 (Single Dataset)}]
% {\includegraphics[width=0.38\textwidth]{plots/ablations/CIFAR-0_nP_contrastive.pdf}}
\caption{\footnotesize {\bf  Generalization ($\gamma$): }  In this experiment we train a ResNet-18 on CIFAR-0 (Subset of Dogs and Cats) and ImageNet-II (ImageWoof vs ImageNette). Number of unlabeled samples $n_\textsc{U}$ is kept fixed, while we vary the number of labeled positives $n_\textsc{P}$. 
% On the other hand for Single Dataset setting we keep the total number of samples fixed $N = n_\textsc{P} + n_\textsc{U}$ while varying $n_\textsc{P}$. 
In both settings, we find \textsc{puCL} to remain robust across different levels of supervision while consistently outperforming its unsupervised counterpart \textsc{ssCL} and being competitive with \textsc{sCL-PU} even in high supervision regimes. While, \textsc{sCL-PU} suffers from large degradation especially in the low-supervision regime. 
}
\label{fig:nP_contrastive}
\end{figure*}
A central goal of our ablation experiments is to systematically understand how different contrastive objectives behave in the Positive-Unlabeled (PU) setting, particularly with respect to incorporating weak supervision to {\bf improve generalization while ensuring robustness}. We primarily compare \textsc{puCL}~\eqref{eq:puCL} and its prior-aware variant \textsc{puNCE}~\eqref{eq:puNCE} with two natural baselines - unsupervised \textsc{ssCL}~\eqref{eq:sscl}, and supervised \textsc{sCL-PU}~\eqref{eq:scl_pu}. 
We also discuss other weakly supervised objectives including \textsc{dCL}~\citep{chuang2020debiased}~\eqref{eq:dcl}, \textsc{mCL}~\citep{cui2023rethinking}~\eqref{eq:mcl} 
and compare them with \textsc{puCL}. 
\\
\\
Our theoretical analysis (\cref{th:scl_pu_bias}) identifies a dataset specific quantity:
\begin{equation}
    \kappa = {\pi (1 - \pi)}/({ 1 + \gamma})
\end{equation}
playing a crucial role in the bias variance trade-off of incorporating weak supervision. 
\\
\\
To gain deeper insights into the role of weak supervision, we conduct systematic ablations across different settings of the PU-specific parameter $\kappa$, and evaluate downstream generalization performance. We perform experiments on {\bf three additional datasets} -- {\bf ImageNet-I}: a subset of dog (P) vs non-dog (N) images sampled from ImageNet-1k~\citep{hua2021feature, robustness}; {\bf ImageNet-II}: Imagewoof (P) vs ImageNette (N) -- two subsets from ImageNet-1k~\citep{fastai_imagenette} and {\bf CIFAR-0}: dog (P) vs cat (N), two semantically similar classes of CIFAR-10. 

\subsection*{Generalization (\(\gamma\)).}
To isolate the effect of labeled positives, we fix $n_\textsc{U}$ and $\pi$, while varying $n_\textsc{P}$, resulting in a range of values of $\gamma ={n_\textsc{P}}/{n_\textsc{U}}$ across experiments. In{\bf ~\cref{fig:kappa_var_isolation}(a)}, we train a ResNet-34 over ImageNet-I 
In {\bf \cref{fig:nP_contrastive}}, we repeat the experiment for training a ResNet-18 over CIFAR-0 and Imagenet-I. 
\\
\\
Both of our experiments suggest, {\bf \textsc{ssCL}} remains robust across different levels of supervision but shows minimal improvement as $\gamma$ increases, due to its inability to utilize labeled data.
On the other hand, {\bf \textsc{sCL-PU}} significantly outperforms \textsc{ssCL} when $\gamma$ is large, leveraging abundant positives effectively. However, it exhibits severe performance degradation in the low-supervision regime, where bias in the supervised objective becomes detrimental. 
{\bf \textsc{puCL}}, in contrast, smoothly interpolates between these two extremes. It matches the performance of \textsc{sCL-PU} under high supervision, while retaining the robustness of \textsc{ssCL} under scarce labeled data. Notably, the performance gap between \textsc{puCL} and \textsc{ssCL} increases as $\gamma$ increases, highlighting \textsc{puCL}'s ability to effectively leverage even moderate amounts of supervision without compromising robustness.

\subsection*{Generalization($\pi$)}
We next investigate how the class prior $\pi$ affects downstream performance. To isolate the effect, 
in {\bf \cref{fig:kappa_var_isolation}(b)},  we fix $\gamma$ and $n_\textsc{U}$ while using different $\pi$. The unlabeled set is constructed by mixing $\pi_p n_\textsc{U}$ positives and $(1 - \pi) n_\textsc{U}$ negatives~\footnote{Positives and negatives are sub-sampled from known ground-truth labels solely for experimental evaluation; {\bf the model has no access to this information}.}.
\\
\\
As predicted by our theory in{\bf ~\cref{th:scl_pu_bias}}, the robustness of supervised contrastive learning in the PU setting deteriorates as $\kappa \propto \pi(1-\pi)$ increases and should be maximum when $\pi=1/2$. Furthermore, perhaps more interestingly, we observe that as $\pi_p \to 1$, the performance of \textsc{sCL-PU} collapses. We hypothesize that this degradation stems from the scarcity of hard negatives in the unlabeled set at high $\pi$. The supervised contrastive loss overestimates intra-class similarity due to the lack of inter-class contrast, resulting in a larger discrepancy $(\rho_{intra} - \rho_{inter})$. This causes the learned representations to collapse or become poorly clustered. In contrast, both \textsc{puCL} and \textsc{ssCL}, being unbiased and not reliant on potentially misleading pseudo-negatives, 
maintain stable performance even in such imbalanced scenarios. 

\subsection*{Generalization($\pi, \gamma$)}
Finally, in{\bf ~\cref{fig:kappa_var_3D}}, reports generalization, when both $\gamma$ and $\pi$ are jointly varied. The resulting 3D visualization in panel (b), along with its 2D projections in panel (c), illustrates how the effectiveness of each contrastive objective evolves across different regions of the $(\pi, \gamma)$ space, parameterized via $\kappa$. Notably, while \textsc{sCL-PU} suffers sharp degradation in high $\pi$,low $\gamma$ regimes, \textsc{puCL} maintains consistently strong performance, validating its robustness across a wide spectrum of PU scenarios.
\\
\\
These results underscore the fragility of supervised objectives under extreme class imbalance, and demonstrate the advantage of PU-aware formulations like \textsc{puCL}, which remain robust without relying on strong assumptions about the unlabeled data distribution.

\subsection*{Convergence($\gamma$)}
Incorporating available positives in the loss, not only improves the generalization performance, it also improves the convergence of representation learning from PU data. As verified empirically in {\bf \cref{fig:puCL_convergence}}, \textsc{puCL} exhibits substantially faster convergence compared to \textsc{ssCL}, particularly as the supervision ratio $\gamma$ increases.
\\
\\
We attribute this to reduced sampling bias~\citep{chuang2020debiased}, as suggested by our gradient analysis in~\cref{sec:proof.gradient}. By leveraging multiple labeled positives during contrastive pair construction, \textsc{puCL} produces a more representative and stable set of positive anchors, thereby reducing gradient variance and improving learning stability. This results in faster and more consistent convergence, even under limited supervision.

\subsection*{Hard to Distinguish Classes}
\begin{figure*}[t] 
\footnotesize
\centering
\subfloat
[\bf \textsc{puCL($\gamma$)}]
{\includegraphics[width=0.33\textwidth]{plots/hard-to-classify/cifar_hardness_nP.pdf}}
\subfloat[\bf Gains (PU supervision)]
{\includegraphics[width=0.33\textwidth]{plots/hard-to-classify/cifar_hardness_gains.pdf}}
\subfloat
[\bf Gains (full supervision)]
{\includegraphics[width=0.33\textwidth]{plots/hard-to-classify/cifar_hardness_gains_sup.pdf}}
\caption{
\footnotesize 
{\bf  Grouping dissimilar objects together:}  We train a ResNet-18 on CIFAR-hard -- (airplane, cat) vs (bird, dog), CIFAR-easy -- (airplane, bird) vs (cat,  dog) and CIFAR-medium -- (airplane, cat, dog) vs bird. Note that,  airplane and bird are semantically similar, also dog-cat are semantically closer to each other. We repeat the experiments across different supervision levels - amount of supervision is measured with $\gamma = \frac{n_P}{n_U}$. 
We keep the total number of samples $N = n_\textsc{P} + n_\textsc{U}$ fixed, while varying $n_\textsc{P}$. 
Observe that, (a) shows generalization of \textsc{puCL} across different $\gamma$. (b), (c) denote the performance gains of \textsc{puCL} and fully supervised \textsc{sCL} over unsupervised \textsc{ssCL}. Clearly, in the hard setting, \textsc{ssCL} i.e. \textsc{puCL}($\gamma =0$), suffers from large performance degradation. However, given enough supervision signal \textsc{puCL} is still able to learn representations that preserves class label obeying linear separability.   
}
\label{fig:hard-to-classify}
\end{figure*}
Distinguishing between semantically similar objects i.e. when $p(\rx)$ contains insufficient information about $p(\ry|\rx)$ is a difficult task, especially for unsupervised learning e.g. "cats vs dogs" is harder than "dogs vs table". Indeed, this implies that the augmentations are weakly concentrated, resulting in potentially poor generalization as suggested by \cref{th:generalization}. 
\\
\\
In order to investigate this scenario more closely, we experiment with three CIFAR subsets: CIFAR-hard (airplane, cat vs. bird, dog), CIFAR-easy (airplane, bird vs. cat, dog), and CIFAR-medium (airplane, cat, dog vs. bird); carefully crafted to simulate varying degrees of classification difficulty based on semantic proximity. Notably, airplanes and birds, as well as dogs and cats, are semantically close.
\\
\\
We train a ResNet-18 on each of these settings, keeping the total number of samples fixed $N = n_\textsc{P} + n_\textsc{U}$ fixed, while varying $n_\textsc{P}$. As evidenced in {\bf \cref{fig:hard-to-classify}}, 
As shown in~\cref{fig:hard-to-classify}, the benefit of incorporating labeled positives via \textsc{puCL} is significantly more pronounced in harder settings, where semantic similarity alone is insufficient to reliably separate classes. In particular: Panel (a) reports the generalization performance of \textsc{puCL} across different values of $\gamma$, Panels (b) and (c) show the performance gain of \textsc{puCL} and fully supervised \textsc{sCL}, respectively, over the unsupervised baseline \textsc{ssCL}. Interestingly, the advantage from incorporating the additional weak supervision is more pronounced in scenarios where distinguishing between positive and negative instances based solely on semantic similarity proves insufficient. Furthermore, when an adequate number of labeled positives is available, the generalization gains are comparable to those achieved with full supervision.  

\subsection*{Gains from incorporating class prior knowledge}
In {\bf \cref{sec:puNCE}}, we proposed {\bf \textsc{puNCE}} -- a prior aware variant of the contrastive objective, where in addition to incorporating the labeled positives, we also leverage additional side information $\pi$ to improve representation. 
\\
\\
We investigate the gains from introducing such inductive bias in {\bf ~\cref{figure:puNCE_tsne},\ref{figure:puNCE_vs_puCL},~\cref{tab:puNCE}} across 4 PU datasets (\cref{sec:add_exp}): MNIST-I, CIFAR-0, III, ImageNet-I. To isolate the contribution of class prior knowledge, we compare \textsc{puNCE} to \textsc{puCL} (which does not use $\pi$), \textsc{ssCL} (fully unsupervised), and \textsc{dCL} (which also incorporates $\pi$ but via partition function debiasing). We ensure fair comparison by using identical augmentations, optimizer settings, batch sizes, and temperature hyper-parameters across all methods. All models are trained for a fixed number of epochs and evaluated under the same protocol.  We observe that incorporating such inductive bias consistently improves both the quality of the learned representations and downstream classification performance. 
\begin{figure*}[t]
\centering
\subfloat{\includegraphics[width=0.45\textwidth]{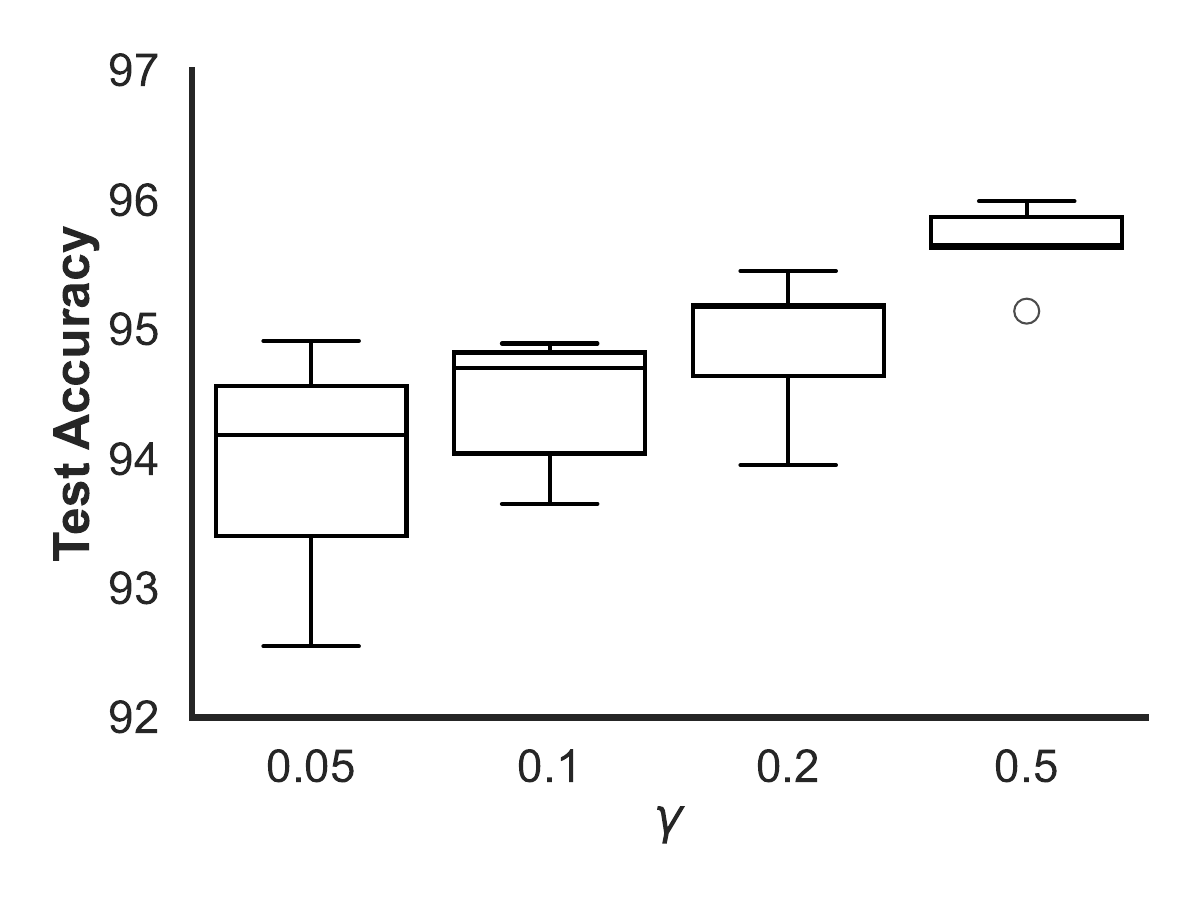}}
\caption{
\footnotesize 
{\bf \textsc{puNCE} under class prior misspecification}. 
ResNet-18 trained on ImageNet-II (\cref{sec:exp}) via \textsc{puNCE}~\eqref{eq:puNCE} when class prior estimate is noisy $\hat{\pi} \in (1 + \epsilon)\pi^\ast$ where, $\epsilon \in (0, 0.3)$.   
}
\label{figure:puNCE_pi}
\end{figure*}
\\
\\
In {\bf ~\cref{figure:puNCE_pi}}, we evaluate the robustness of \textsc{puNCE} to {\bf class prior misspecification} on the ImageNet-II benchmark. Specifically, we perturb the true class prior $\pi^\ast$ by introducing multiplicative noise $\epsilon \in (0, 0.2)$, evaluating performance with $\hat{\pi} \in {(1 \pm \epsilon)\pi^\ast}$. Despite the injected noise, \textsc{puNCE} maintains competitive performance across all values of $\gamma$, with only mild degradation in accuracy under extreme prior shifts. Notably, the variance is highest in the low-supervision regime ($\gamma = 0.05$), where the model is more reliant on the prior to compensate for scarce labeled positives. As supervision increases, the accuracy stabilizes and the impact of prior perturbation diminishes — highlighting its capacity to gracefully absorb moderate inaccuracies in prior estimates.

\subsection{Ablations on Downstream Classification}
\begin{figure}[t]
\footnotesize 
\centering
\subfloat
[\bf ImageNet-II]
{\includegraphics[width=0.4\textwidth]{plots/puPL_LP/imageNet-ii_LP.pdf}}
\subfloat
[\bf CIFAR-Hard]
{\includegraphics[width=0.4\textwidth]{plots/puPL_LP/cifar_hard_LP.pdf}}
\caption{\footnotesize {\bf Linear Probing ($\gamma$) :} In order to study downstream classification in isolation, we
perform Logistic Regression on pretrained (frozen) ResNet-18 embeddings obtained via \textsc{puCL}(\cref{algo:puCL} on (a) ImageNet-II and (b) CIFAR-Hard (\cref{sec:exp}). The proposed approach (\cref{algo:puPL}) is compared against several popular PU Learning baselines. 
}
\label{fig:pupl_lp}
\end{figure}
Finally, we evaluate the effectiveness of our simple clustering based downstream PU classification strategy (\cref{algo:puPL}) in isolation by training a linear classifier over {\bf frozen} pretrained embeddings from (\cref{algo:puCL}). 
\\
\\
In{\bf~\cref{fig:pupl_lp}}, we compare \textsc{puPL} strategy against logistic regression baselines trained on CIFAR-Hard, ImageNet-II, with popular different objectives such as -- standard Cross-Entropy (CE), nnPU, variants like MixUp, vPU. In all cases, the encoder is frozen during the downstream classification phase, isolating the effect of the decoder on fixed representations. 
Additionally, in{\bf ~\cref{tab:puPL_LP}}, we ablate across different pre-training objectives and compare the performance when downstream classification is performed using nnPU vs puPL. We perform these experiment over all the six benchmark datasets: CIFAR-I,II, FMNIST-I,II, STL-I,II. 
Both sets of experiments are repeated across varying levels of PU supervision i.e. across different values of $\gamma$. These experiments highlight the effectiveness of our clustering-based strategy: \textsc{puPL} not only consistently improves downstream performance over nnPU-style decoding but also exhibits particular strength in low-label settings, where traditional PU losses struggle with high estimator variance. The simplicity and robustness of \textsc{puPL}, combined with its independence from class prior estimation, make it a compelling default choice for downstream classification over pretrained PU embeddings.
\\
\\
To understand the empirical success of \textsc{puPL}, we examine its behavior on a synthetic binary Gaussian Mixture model in{\bf ~\cref{fig:puPL_gmm}}. This setting allows for clear visualization of decision boundary deviation under varying PU strategies. In the fully supervised case (Panel a), CE$^\ast$ yields the Bayes-optimal linear separator. As the supervision ratio $\gamma$ decreases (Panels b–f), naive CE—treating all unlabeled samples as negatives—induces increasingly biased decision boundaries, reflecting the accumulation of systematic label noise. While \textsc{nnPU} mitigates this bias via importance reweighting, its performance deteriorates under extreme label sparsity due to high estimator variance and limited effective sample size.
\\
\\
In contrast, \textsc{puPL-CE} maintains a decision boundary closely aligned with the supervised optimum across all $\gamma$, including $\gamma = \tfrac{1}{50}$. This robustness arises from the fact that clustering in representation space acts as a form of structure-aware regularization. By exploiting the geometric separability of contrastive embeddings, \textsc{puPL} produces stable pseudo-labels that reflect latent class structure, even in regimes where standard risk-based estimators fail. These observations support the hypothesis that contrastive pretraining induces clusterable manifolds that can be effectively decoded without explicit reliance on class prior estimates.

\section{Conclusion}
\label{sec:conclusion}
In summary, this work developed a unified theoretical and algorithmic framework for contrastive representation learning under the PU learning paradigm, where supervision is partial and asymmetric. Classical contrastive objectives—either fully supervised or unsupervised—fail to balance the inherent bias-variance trade-off in this regime. Our proposed method, \textsc{puCL}, addresses this by leveraging labeled positives to reduce variance, while treating unlabeled data conservatively to avoid bias. We showed that \textsc{puCL} yields an unbiased estimator of the population contrastive loss with variance decreasing monotonically in the supervision ratio. We further introduced \textsc{puNCE}, a prior-aware extension that incorporates soft supervision using class prior information. This generalizes importance weighting in contrastive settings, improving sample efficiency and generalization in low-label regimes, while remaining robust to moderate prior misspecification. For downstream classification, we proposed \textsc{puPL}, a pseudo-labeling algorithm that exploits embedding geometry via PU-aware clustering. We established provable guarantees on its clustering quality and showed that it enables effective classification without requiring labeled negatives. Our analysis draws from augmentation concentration, gradient dynamics, and robust estimation theory. Together, \textsc{puCL}, \textsc{puNCE}, and \textsc{puPL} form a modular pipeline for PU contrastive learning with both theoretical guarantees and strong empirical performance. This framework lays a foundation for future study at the intersection of contrastive learning, weak supervision, and statistical robustness.

\clearpage
\bibliography{bibs/pu, 
bibs/contrastive_learning, 
bibs/misc_ml, 
bibs/label_noise,
bibs/mpe,
bibs/clustering,
bibs/pseudo_label
}

\clearpage
\appendix
\section{Notations and Abbreviations}
\label{sec:notations}
\bgroup
\def\arraystretch{1.5}
\begin{tabular}{p{1in}p{3.25in}}
$\displaystyle a$ & A scalar (integer or real)\\
$\displaystyle \va$ & A vector\\
$\displaystyle \mA$ & A matrix\\
$\displaystyle \ra$ & A scalar random variable\\
$\displaystyle \rva$ & A vector-valued random variable\\
$\displaystyle \sA$ & A set\\
$\displaystyle \{0, 1\}$ & The set containing 0 and 1 \\
$\displaystyle \{0, 1, \dots, n \}$ & The set of all integers between $0$ and $n$\\
$\displaystyle [a, b]$ & The real interval including $a$ and $b$\\
$\displaystyle \sA \backslash \sB$ & Set subtraction, i.e., the set containing the elements of $\sA$ that are not in $\sB$\\
$\displaystyle \erva_i$ & Element $i$ of the random vector $\rva$ \\
$\displaystyle f: \sA \rightarrow \sB$ & The function $f$ with domain $\sA$ and range $\sB$\\
$\displaystyle f \circ g $ & Composition of the functions $f$ and $g$ \\
  $\displaystyle f(\vx ; \vtheta) $ & A function of $\vx$ parametrized by $\vtheta$.
  (Sometimes we write $f(\vx)$ and omit the argument $\vtheta$ to lighten notation) \\
$\displaystyle || \vx ||_p $ & $\normlp$ norm of $\vx$ \\
$\displaystyle \1(condition)$ & is 1 if the condition is true, 0 otherwise\\
$\displaystyle \textsc{PU}$ & Positive Unlabeled\\
$\displaystyle p(\ra)$ & probability measure over a continuous variable, or over
a variable whose type has not been specified.\\
$\displaystyle \gamma$ & $\frac{n_P}{n_U}$\\
$\displaystyle \textsc{ssCL}$ & Self Supervised Contrastive Learning\\
$\displaystyle \textsc{sCL-PU}$ & Naive PU adaptation of Supervised Contrastive Learning\\
$\displaystyle \textsc{puCL}$ & Positive Unlabeled Contrastive Learning\\
$\displaystyle \textsc{puPL}$ & Positive Unlabeled Pseudo Labeling\\
\end{tabular}
\egroup

\section{Additional Experimental Details}
\label{sec:add_exp}
\subsection{Datasets}
Consistent with recent literature on PU Learning~\citep{li2022your, chen2020variational} we conduct our experiments on six benchmark datasets: STL-I, STL-II, CIFAR-I, CIFAR-II, FMNIST-I, and FMNIST-II, obtained via modifying STL-10~\citep{coates2011analysis}, CIFAR-10~\citep{krizhevsky2009learning}, and Fashion MNIST~\citep{xiao2017fashion}, respectively. The specific definitions of labels (“positive” vs “negative”) are as follows: 
\begin{itemize}
  \item {\bf FMNIST-I}: The labels "positive" correspond to the classes "1, 4, 7", while the labels "negative" correspond to the classes "0, 2, 3, 5, 6, 8, 9".
  
  \item {\bf FMNIST-II}: The labels "positive" correspond to the classes "0, 2, 3, 5, 6, 8, 9", while the labels "negative" correspond to the classes "1, 4, 7".

\item {\bf CIFAR-0}: "positive" dog vs "negative" cat. 
    
  \item {\bf CIFAR-I}: The labels "positive" correspond to the classes "0, 1, 8, 9", while the labels "negative" correspond to the classes "2, 3, 4, 5, 6, 7".
  
  \item {\bf CIFAR-II}: The labels "positive" correspond to the classes "2, 3, 4, 5, 6, 7", while the labels "negative" correspond to the classes "0, 1, 8, 9".

    \item {\bf CIFAR-III} : images of
vehicles (i.e., “airplanes,” “automobiles,” “ships,” and
“trucks”) as the positive class and images of
animals (i.e., “birds,” “cats,” “deer,” “dogs,” “frogs,” and
“horses”) as the negative class.

  \item {\bf CIFAR-hard} : "positive" airplane, cat vs. "negative" bird, dog.

  \item {\bf CIFAR-medium}: "positive" airplane, cat, dog vs. "negative" bird
  
  \item {\bf CIFAR-easy} : "positive" airplane, bird vs. "negative" cat, dog. 
  
  \item {\bf STL-I}: The labels "positive" correspond to the classes "0, 2, 3, 8, 9", while the labels "negative" correspond to the classes "1, 4, 5, 6, 7".
  
  \item {\bf STL-II}: The labels "positive" correspond to the classes "1, 4, 5, 6, 7", while the labels "negative" correspond to the classes "0, 2, 3, 8, 9".

  \item {\bf STL-III}: vehicles (i.e.,
“airplanes,” “cars,” “ships,” and “trucks”) as the positive
class, and the animals (i.e., “birds,” “cats,” “deer,”
“dogs,” “horses,” and “monkeys”) as the negative class. 

  \item {\bf ImageNet-I}: a subset of dog (P) vs non-dog (N) images sampled from ImageNet-1k~\citep{hua2021feature, robustness}; 
  
  \item {\bf ImageNet-II}: Imagewoof (P) vs ImageNette (N) -- two subsets of ImageNet-1k~\citep{fastai_imagenette}; 

  \item {\bf SVHN-I} The even numbers (i.e., “0,”
“2,” “4,” “6,” and “8”) are regarded as positive class and
the odd numbers (i.e., “1,” “3,” “5,” “7,” and “9”) are
regarded as negative class from the SVHN -- a collection of colored images of
street view house numbers.

\item {\bf Alzheimer} dataset contains the MRI images for
identifying the Alzheimer’s Disease. The MRI images
of demented patients are recognized as positive class
and the MRI images of healthy people are recognized
as negative class.
  
\end{itemize}

\subsection{Baseline Algorithms}
Next, we describe the PU baselines
\begin{itemize}
  \item {\bf Unbiased PU learning (\textsc{uPU})}~\citep{du2014analysis}: A foundational method that estimates the classification risk in an unbiased manner using positive and unlabeled data. It incorporates cost-sensitivity but may lead to negative risk values in practice. 
  
  \item {\bf Non-negative PU learning (\textsc{nnPU})}~\citep{kiryo2017positive}: An extension of uPU that prevents overfitting by clipping the negative part of the empirical risk to zero, ensuring non-negativity. It is cost-sensitive and widely adopted in practice. Suggested settings: $\beta = 0$ and $\gamma = 1.0$.
  
  \item {\bf \textsc{nnPU w Mixup}}~\cite{zhang2017mixup} : This cost-sensitive method combines the nnPU approach with the mixup technique. It performs separate mixing of positive instances and unlabeled ones.
  
  \item {\bf \textsc{Self-PU}}~\cite{chen2020self}: Incorporates a self-supervision mechanism with curriculum learning. Confident samples are iteratively added to the labeled set based on a self-paced thresholding scheme. Suggested settings: $\alpha = 10.0$, $\beta = 0.3$, $\gamma = \frac{1}{16}$, Pace1 = 0.2, and Pace2 = 0.3.
  
  \item {\bf Predictive Adversarial Networks (\textsc{pan}) }~\citep{hu2021predictive}: This method is based on GANs and specifically designed for PU learning. Suggested settings: $\lambda = 1e-4$.
  
  \item {\bf Variational PU learning (\textsc{vPU})} ~\citep{chen2020variational}: This approach is based on the variational principle and is tailored for PU learning. The public code from net.9 was used for implementation. Suggested settings: $\alpha = 0.3$, $\beta \in \{1e-4, 3e-4, 1e-3, \ldots, 1, 3\}$.
  
  \item {\bf \textsc{MixPUL}}~\citep{wei2020mixpul}: This method combines consistency regularization with the mixup technique for PU learning. The implementation utilizes the public code from net.10. Suggested settings: $\alpha = 1.0$, $\beta = 1.0$, $\eta = 1.0$.
  
  \item {\bf Positive-Unlabeled Learning with effective Negative sample Selector (\textsc{PULNS})}~\citep{luo2021pulns}: This approach incorporates reinforcement learning for sample selection. We implemented a custom Python code with a 3-layer MLP selector, as suggested by the paper. Suggested settings: $\alpha = 1.0$ and $\beta \in \{0.4, 0.6, 0.8, 1.0\}$.

  \item {\bf P$^3$\textsc{mix-c/e}}~\citep{li2022your}: Denotes the heuristic mixup based approach. 

  \item {\bf Reweighted PU (\textsc{RP})}~\citep{northcutt2017learning}
ranks the training data by confidence and selects
the most confident examples as positive or negative.

\item {\bf PU learning with Sample Bias (\textsc{PUSB})}~\citep{kato2018learning}
proposes a threshold estimation algorithm to
deal with the selection bias during the labeling process.

\item {\bf PU learning with biased Negative (\textsc{PUbN})}~\citep{hsieh2019classification}
first pretrains a model with nnPU algorithm
to classify some reliable positive data, negative data, and
unlabeled data, and then minimizes a risk approximated
by the above three partitions.

\item{\bf Arbitrary PU (\textsc{aPU})}~\citep{hammoudeh2020learning}
deals with the arbitrary positive shift between
source and target distributions.
\item {\bf Imbalanced PU (\textsc{ImbPU})}~\citep{su2021positive} re-designs the nnPU estimator to enable the
learning from imbalanced data.
\item {\bf \textsc{PiCO}}~\citep{wang2021pico} introduces a prototypical label disambiguation algorithm for addressing the PLL problem. 
\end{itemize}

\section{Gradient Analysis}
\label{sec:proof.gradient}
To gain deeper understanding into the behavior of the training dynamics we 
derive the gradient expressions for \textsc{ssCL} and \textsc{puCL}
\subsection{Gradient of \textsc{ssCL}: }
Recall that, \textsc{ssCL} takes the following form for any random sample from the multi-viewed batch indexed by $i \in \sI$
\begin{equation}
    \begin{aligned}
    \ell_i 
        & = - \log \frac{\exp(\vz_i \boldsymbol \cdot \vz_{a(i)})}{Z(\vz_i)}\;;\;\forall i\in \sI \\
        &= - \frac{\vz_i^T\vz_{a(i)}}{\tau} + \log Z(\vz_i)
    \end{aligned}  
    \label{eq:nt_xent_simplified}
\end{equation}
Furher, recall that the partition function $Z(\vz_i)$ is defined as : 
\[
Z(\vz_i) = \sum_{j \in \sI} \1(j \neq i)\exp(\vz_i\boldsymbol \cdot \vz_j)
\]
Note that, $\vz_i = g_{\vw}(\vx_i)$ where we have consumed both encoder and projection layer into $\vw$, and thus by chain rule we have, 
\begin{equation}
    \frac{\partial \ell_i}{\partial \vw} = \frac{\partial\ell_i}{\partial\vz_i} \cdot \frac{\partial\vz_i}{\partial \vw}
    \label{eq:chain_rule}
\end{equation}
Since, the second term depends on the encoder and fixed across the losses, the first term is sufficient to compare the gradients resulting from different losses. Thus, taking the differential of ~\eqref{eq:nt_xent_simplified} w.r.t representation $\vz_i$ we get: 
\begin{equation}
    \begin{aligned}
        \frac{\partial\ell_i}{\partial\vz_i} 
        & =  - \frac{1}{\tau}\left[\vz_{a(i)} - \frac{\sum_{j \in \sI \setminus \{i\}} \vz_j \exp(\vz_i\boldsymbol \cdot \vz_j)}{Z(\vz_i)} \right]\\
        & = - \frac{1}{\tau}\left[\vz_{a(i)} - \frac{\vz_{a(i)} \exp(\vz_i\boldsymbol \cdot \vz_{a(i)}) + \sum_{j \in \sI \setminus \{i, a(i)\}} \vz_j \exp(\vz_i\boldsymbol \cdot \vz_j)}{Z(\vz_i)} \right]\\
        & = - \frac{1}{\tau}\left[ \vz_{a(i)} \left(1 - \frac{\exp(\vz_i\boldsymbol \cdot \vz_{a(i)})}{Z(\vz_i)}\right) - \sum_{j \in \sI \setminus \{i, a(i)\}}\vz_j\frac{ \exp(\vz_i\boldsymbol \cdot \vz_j)}{Z(\vz_i)}\right]\\
        & = - \frac{1}{\tau}\left[ \vz_{a(i)} \left(1 - \frac{\exp(\vz_i\boldsymbol \cdot \vz_{a(i)})}{Z(\vz_i)}\right) - \sum_{j \in \sI \setminus \{i, a(i)\}}\vz_j\frac{ \exp(\vz_i\boldsymbol \cdot \vz_j)}{Z(\vz_i)}\right]\\
        & = - \frac{1}{\tau}\left[ \vz_{a(i)} \left( 1 - P_{i,a(i)} \right) - \sum_{j \in \sI \setminus \{i, a(i)\}}\vz_j P_{i,j}\right]
    \end{aligned}
\end{equation}
where, the functions $P_{i,j}$ are defined as: 
\begin{equation}
    P_{i,j} = \frac{\exp(\vz_i\boldsymbol \cdot \vz_j)}{Z(\vz_i)}
\end{equation}

\subsection{Gradient of \textsc{puCL}.}
Recall that, given a randomly sampled mini-batch $\gD$, \textsc{puCL} takes the following form for any sample $i \in \sI$ where $\sI$ is the corresponding multi-viewed batch. Let, $\sP(i) = \sP \setminus i$ i.e. all the other positive labeled examples in the batch w/o the anchor.
\begin{equation}
    \begin{aligned}
    \ell_i
    & = - \frac{1}{|\sP(i)|}\sum_{q\in \sP(i)}\log \frac{\exp{(\vz_i} \boldsymbol \cdot \vz_q)}{Z(\vz_i)}\;;\;\forall i\in \sI \\
    &= - \frac{1}{|\sP(i)|}\sum_{q\in \sP(i)} \left [ \frac{\vz_i^T\vz_q}{\tau} - \log Z(\vz_i)) \right]
    \end{aligned}  
    \label{eq:supcon_simplified}
\end{equation}
where $Z(\vz_i)$ is defined as before. Then, we can compute the gradient w.r.t representation $\vz_i$ as: 

\begin{equation}
    \begin{aligned}
        \frac{\partial\ell_i}{\partial\vz_i}
        & = - \frac{1}{|\sP(i)|}\sum_{q\in \sP(i)}\left[ \frac{\vz_q}{\tau} - \frac{\partial Z(\vz_i)}{Z(\vz_i)}\right]\\
        & = - \frac{1}{\tau|\sP(i)|}\sum_{q\in \sP(i)}\left[ \vz_q  - \frac{\sum_{j \in \sI \setminus \{i\}} \vz_j \exp(\vz_i\boldsymbol \cdot \vz_j)}{Z(\vz_i)}\right]\\
        & = - \frac{1}{\tau|\sP(i)|}\sum_{q\in \sP(i)}\left[ \vz_q  - \sum_{q'\in \sP(i)}\vz_{q'} P_{i,q'} - \sum_{j \in \sU(i)} \vz_j P_{i,j} \right]\\
        & = - \frac{1}{\tau|\sP(i)|}\left[ \sum_{q\in \sP(i)} \vz_q - \sum_{q\in \sP(i)} \sum_{q'\in \sP(i)}\vz_{q'} P_{i,q'} - \sum_{q\in \sP(i)} \sum_{j \in \sU(i)} \vz_j P_{i,j} \right]\\
        & = - \frac{1}{\tau|\sP(i)|} \left[ \sum_{q\in \sP(i)} \vz_q - \sum_{q'\in \sP(i)}|\sP(i)|\vz_{q'} P_{i,q'}  - \sum_{j \in \sU(i)}|\sP(i)|\vz_j P_{i,j} \right]\\
        & = - \frac{1}{\tau} \left[ \frac{1}{|\sP(i)|}\sum_{q\in \sP(i)}\vz_q - \sum_{q\in \sP(i)}\vz_{q} P_{i,q} - \sum_{j \in \sU(i)}\vz_j P_{i,j}\right]\\
        & = - \frac{1}{\tau} \left[ \sum_{q\in \sP(i)} \vz_q \left( \frac{1}{|\sP(i)|} - P_{i,q} \right) - \sum_{j \in \sU(i)}\vz_j P_{i,j} \right]
    \end{aligned}
\end{equation}
where we have defined $\sU(i) = \sI\setminus\{i, \sP(i)\}$ i.e. $\sU(i)$ is the set of all samples in the batch that are unlabeled. 
\\
\\
In case of fully supervised setting we would similarly get: 
\begin{equation}
    \begin{aligned}
        \frac{\partial\ell_i}{\partial\vz_i}
        = - \frac{1}{\tau} \left[ \sum_{q\in \sP^\ast(i)} \vz_q \left( \frac{1}{|\sP^\ast(i)|} - P_{i,q} \right) - \sum_{j \in \sN(i)}\vz_j P_{i,j} \right]
    \end{aligned}
\end{equation}
% % Since, in the fully supervised setting $\sI - \sP(i) = \sN(i)$.
Comparing the three gradient expressions, it is clear that \textsc{puCL} enjoys lower gradient bias compared to \textsc{ssCL} with respect to fully supervised counterpart.

\section{Complete Proofs.}
For theoretical analysis, we define some additional notation over~\cref{sec:problem_setup}. 
\\
\\
$\gX_{\textsc{PU}}$ is generated from the underlying supervised dataset $\gX_{\textsc{PN}} = \gX_\textsc{P} \cup \gX_\textsc{N}$ i.e. labeled positives $\gX_{\textsc{P}_\textsc{L}}$ is a subset of $n_{\textsc{P}_\textsc{L}}$ elements chosen uniformly at random from all subsets of $\gX_\textsc{P}$ of size $n_\textsc{L}$, i.e.  
\begin{equation}
    \gX_{\textsc{P}_\textsc{L}} \subseteq \gX_\textsc{P} = \bigg\{\vx_i \in \sR^d \sim p(\rx | \ry=1)\bigg\}_{i=1}^{n_\textsc{P}}.
\end{equation} 
Further, denote the set positive and negative examples that are unlabeled as $\gX_{\textsc{P}_\textsc{U}}$ and $\gX_{\textsc{N}_\textsc{U}}$.
\begin{flalign}
    \gX_\textsc{PU} = \gX_{\textsc{P}_\textsc{L}} \cup \gX_{\textsc{P}_\textsc{U}} \cup \gX_{\textsc{N}_\textsc{U}} \\
    \gX_\textsc{P} = \gX_{\textsc{P}_\textsc{L}} \cup \gX_{\textsc{P}_\textsc{U}} \\
    \gX_\textsc{U} = \gX_{\textsc{P}_\textsc{U}} \cup \gX_{\textsc{N}_\textsc{U}}
    \label{eq:decomposed_pu_dataset}
\end{flalign}

%%%%%%%%%%%%%%%%%%%%%
\subsection{Proof of \cref{th:scl_pu_bias}.}
\label{sec:proof.scl_pu_bias}
%%%%%%%%%%%%%%%%%%%%%%%
We restate~\cref{th:scl_pu_bias} for convenience - 
$\gL_{\textsc{sCL-pu}}$~\eqref{eq:scl_pu} is a biased estimator of $\gL_{\textsc{CL}}^\ast$~\eqref{eq:asymptotic_scl} characterized as follows:
\begin{flalign*}
    \mathop{\E}_{\gX_{\textsc{PU}}} 
    \bigg[ 
    \gL_{\textsc{sCL-PU}}
    \bigg]
    - \gL_{\textsc{CL}}^\ast
     = 2 \kappa
    \bigg(\rho_{intra} - \rho_{inter}\bigg).
\end{flalign*}
where, $\rho_{intra}=\mathop{\E}_{\substack{\vx_i,\vx_j \sim p(\vx | y_i=y_j)}}\big( \vz_i \boldsymbol \cdot \vz_j \big)$ 
captures the concentration of embeddings of samples from same latent class marginals and 
$\rho_{inter} = \mathop{\E}_{\substack{\vx_i,\vx_j \sim p(\vx | \ry_i \neq \ry_j)}}\big( \vz_i \boldsymbol \cdot \vz_j \big)$ 
captures the expected proximity between embeddings of dissimilar samples. $\kappa = \frac{\pi (1 - \pi)}{ 1 + \gamma}$ is PU dataset specific constant where, $\gamma = \frac{n_{\textsc{P}}}{n_{\textsc{U}}}$ and $\pi=p(y=1|x)$. 
\\
\\
\begin{proof}
Now, we can establish the result by carefully analyzing the bias of $\gL_\textsc{sCL-PU}$~\eqref{eq:scl_pu} in estimating the ideal contrastive loss~\eqref{eq:asymptotic_scl} over each of these subsets. 
\\
\\
For the {\bf labeled positive subset} $\gX_{\textsc{P}_\textsc{L}}$ the bias can be computed as: 
\begin{flalign}
    \gB_{\gL_\textsc{sCL-PU}}(\vx_i \in \gX_{\textsc{P}_\textsc{L}}) 
    &= -\E_{\vx_i \in \gX_{\textsc{P}_\textsc{L}}}\Bigg[ \frac{1}{n_{\textsc{P}_\textsc{L}}}\sum\limits_{\vx_j \in \gX_{\textsc{P}_\textsc{L}}}\vz_i \boldsymbol \cdot \vz_j \Bigg] 
    + \E_{\vx_i, \vx_j \sim p(\rx|\ry=1)} \Bigg(\vz_i \boldsymbol \cdot \vz_j\Bigg) = 0.
\end{flalign}
For the {\bf unlabeled positive subset }$\gX_{\textsc{P}_\textsc{U}}\subseteq \gX_\textsc{PU}$ the bias can be computed as: 
\begin{flalign*}
    - \gB_{\gL_\textsc{sCL-PU}}(\vx_i \in \gX_{\textsc{P}_\textsc{U}}) 
    &= \E_{\vx_i \in \gX_{\textsc{P}_\textsc{U}}}\Bigg[ \frac{1}{n_\textsc{U}}\sum\limits_{\vx_j \in \gX_\textsc{U}}\vz_i \boldsymbol \cdot \vz_j \Bigg] 
    - \E_{\vx_i, \vx_j \sim p(\rx|\ry=1)} \Bigg(\vz_i \boldsymbol \cdot \vz_j\Bigg)\\
    &=\E_{\vx_i \in \gX_{\textsc{P}_\textsc{U}}}
    \Bigg[ 
    \pi \E_{\vx_j \in \gX_{\textsc{P}_\textsc{U}}} \Bigg(\vz_i \boldsymbol \cdot \vz_j \Bigg) 
    + (1 - \pi) \E_{\vx_j \in \gX_{\textsc{N}_\textsc{U}}}\Bigg(\vz_i \boldsymbol \cdot \vz_j\Bigg)
    \Bigg] 
    - \rho_\textsc{P}\\
    &=\pi \rho_\textsc{P} + (1 - \pi) \E_{\vx_i \in \gX_{\textsc{P}_\textsc{U}}}\Bigg[ \E_{\vx_j \in \gX_{\textsc{N}_\textsc{U}}}\Bigg(\vz_i \boldsymbol \cdot \vz_j\Bigg) \Bigg] - \rho_\textsc{P}\\
    &=(1 - \pi)\E_{\vx_i \sim p(\rx|y=1)}\Bigg[ \E_{\vx_j \sim p(\rx|y=0)}\Bigg(\vz_i \boldsymbol \cdot \vz_j\Bigg) \Bigg] - (1 - \pi)\rho_\textsc{P}\\
    &=(1 - \pi)\E_{\vx_i,\vx_j \sim p(\rx|y_i \neq y_j)}\Bigg(\vz_i \boldsymbol \cdot \vz_j\Bigg) - (1 - \pi)\rho_\textsc{P}\\
    &=(1 - \pi)\rho_{inter} - (1 - \pi)\rho_\textsc{P}\\
\end{flalign*}
where, we denote $\rho_\textsc{P} = \E_{\vx_i, \vx_j \sim p(\rx|\ry=1)} \Bigg(\vz_i \boldsymbol \cdot \vz_j\Bigg)$ and  $\rho_{inter} = \E_{\vx_i,\vx_j \sim p(\rx|y_i \neq y_j)}\Bigg(\vz_i \boldsymbol \cdot \vz_j\Bigg)$.
\\
\\
Finally, for the {\bf negative unlabeled subset}:
\begin{flalign*}
    -\gB_{\gL_\textsc{sCL-PU}}(\vx_i \in \gX_{\textsc{N}_\textsc{U}}) 
    &= \E_{\vx_i \in \gX_{\textsc{N}_\textsc{U}}}\Bigg[ \frac{1}{n_\textsc{U}}\sum\limits_{\vx_j \in \gX_\textsc{U}}\vz_i \boldsymbol \cdot \vz_j \Bigg] 
    - \E_{\vx_i, \vx_j \sim p(\rx|\ry=0)} \Bigg(\vz_i \boldsymbol \cdot \vz_j\Bigg)\\
    &=\E_{\vx_i \in \gX_{\textsc{N}_\textsc{U}}}
    \Bigg[ 
    \pi \E_{\vx_j \in \gX_{\textsc{P}_\textsc{U}}} \Bigg(\vz_i \boldsymbol \cdot \vz_j \Bigg) 
    + (1 - \pi) \E_{\vx_j \in \gX_{\textsc{N}_\textsc{U}}}\Bigg(\vz_i \boldsymbol \cdot \vz_j\Bigg)
    \Bigg] 
    - \rho_\textsc{N}\\
    &=\pi\E_{\vx_i, \vx_j \sim p(\rx|\ry_i\neq \ry_j)}\Bigg(\vz_i \boldsymbol \cdot \vz_j\Bigg) + (1 - \pi)\rho_\textsc{N}   - \rho_\textsc{N}\\
    &=\pi\rho_{inter} - \pi\rho_\textsc{N}
\end{flalign*}
where, $\rho_\textsc{N} = \E_{\vx_i, \vx_j \sim p(\rx|\ry=1)} \Bigg(\vz_i \boldsymbol \cdot \vz_j\Bigg)$.
\\
\\
Now, using the fact that the unlabeled examples are sampled uniformly at random from the mixture distribution with positive mixture weight $\pi$ we can compute the total bias as follows: 
\begin{flalign*}
    \gB_{\gL_\textsc{sCL-PU}}(\vx_i \in \gX_\textsc{PU}) 
    &= \frac{\pi}{1 + \gamma}\gB_{\gL_\textsc{sCL-PU}}(\vx_i \in \gX_{\textsc{P}_\textsc{U}}) 
    + 
    \frac{1 - \pi}{1 + \gamma}\gB_{\gL_\textsc{sCL-PU}}(\vx_i \in \gX_{\textsc{N}_\textsc{U}})
\end{flalign*}
where, $\gamma = {|\gX_{\textsc{P}_\textsc{L}}|}/{|\gX_\textsc{U}|}$. 
plugging in the bias of the subsets:
\begin{flalign*}
    \gB_{\gL_\textsc{sCL-PU}}(\vx_i \in \gX_\textsc{PU}) 
    &=-\frac{\pi}{1 + \gamma}\Bigg[(1 - \pi)\rho_{inter} - (1 - \pi)\rho_\textsc{P} \Bigg] - \frac{1 - \pi}{1 + \gamma}\Bigg[\pi\rho_{inter} - \pi\rho_\textsc{N}\Bigg]\\
    &=\frac{1}{1 + \gamma}\Bigg[\pi(1 -\pi)\bigg(\rho_\textsc{P} + \rho_\textsc{N}\bigg) - 2\pi(1 -\pi)\rho_{inter}\Bigg]\\
    &=\frac{2\pi(1 -\pi)}{1 + \gamma}\Bigg[\frac{1}{2}\bigg(\rho_\textsc{P} + \rho_\textsc{N}\bigg) - \rho_{inter}
    \Bigg]\\
    &=2\kappa\bigg( \rho_{intra} - \rho_{inter} \bigg).
\end{flalign*}
This completes the proof.
\end{proof}

%%%%%%%%%%%%%%%%%%%%%%%
\subsection{Proof of~\cref{lemma:pucl_unbiased}}
\label{sec:proof.pucl_unbiased}
%%%%%%%%%%%%%%%%%%%%%%%
We restate~\cref{lemma:pucl_unbiased} for convenience - 
\\
\\
If~\cref{assumption:aug_independence} holds, then
$\gL_{\textsc{ssCL}}$~\eqref{eq:sscl} and $\gL_{\textsc{puCL}}$~\eqref{eq:puCL} are unbiased estimators of  $\gL_{\textsc{CL}}^\ast$~\eqref{eq:asymptotic_scl}. Additionally, it holds that:
\begin{flalign}
    \Delta_\sigma(\gamma) \geq 0 \;\;\forall \gamma \geq 0\\
    \Delta_\sigma(\gamma_1) \geq \Delta_\sigma(\gamma_2) \quad \forall \gamma_1 \geq \gamma_2 \geq 0
\end{flalign}
where, $\Delta_\sigma(\gamma)= 
\mathrm{Var}(\gL_{\textsc{ssCL}}) - \mathrm{Var}(\gL_{\textsc{puCL}})$ and $\gamma = n_\textsc{P}/n_\textsc{U}$. 
\\
\\
\begin{proof}
We first prove that both $\gL_{\textsc{ssCL}}$and $\gL_{\textsc{puCL}}$ are unbiased estimators of  $\gL_{\textsc{CL}}^\ast$. 
\\
\\
{\bf For the labeled positive} subset $\gX_{\textsc{P}_\textsc{L}}$ the bias can be computed as: 
\begin{flalign*}
    \gB_{\gL_\textsc{puCL}}(\vx_i \in \gX_{\textsc{P}_\textsc{L}}) 
    &= \E_{\vx_i \in \gX_{\textsc{P}_\textsc{L}}}\Bigg[ \frac{1}{n_{\textsc{P}_\textsc{L}}}\sum\limits_{\vx_j \in \gX_{\textsc{P}_\textsc{L}}}\vz_i \boldsymbol \cdot \vz_j \Bigg] 
    - \E_{\vx_i, \vx_j \sim p(\rx|\ry=1)} \Bigg[\vz_i \boldsymbol \cdot \vz_j\Bigg] = 0
\end{flalign*}
Here we have used the fact that labeled positives are drawn i.i.d from the positive marginal.
\\
\\
{\bf For the unlabeled samples }:
\begin{flalign*}
    \gB_{\gL_\textsc{puCL}}(\vx_i \in \gX_\textsc{U}) 
    &= \E_{\vx_i \in \gX_\textsc{U}}\Bigg[ \vz_i \boldsymbol \cdot \vz_{a(i)} \Bigg] 
    - \E_{\vx_i, \vx_j \sim p(\rx|\ry_i=\ry_j)} \Bigg[\vz_i \boldsymbol \cdot \vz_j\Bigg]\\
    &=\E_{\vx_i, \vx_j \sim p(\rx|\ry_i=\ry_j)} \Bigg[\vz_i \boldsymbol \cdot \vz_j\Bigg]
    - \E_{\vx_i, \vx_j \sim p(\rx|\ry_i=\ry_j)} \Bigg[\vz_i \boldsymbol \cdot \vz_j\Bigg]=0
\end{flalign*}
Thus $\gL_{\textsc{puCL}}$ is an unbiased estimator of $\gL_{\textsc{CL}}^\ast$. 
Similarly, $\gL_\textsc{ssCL}$ is also an unbiased estimator.
\\
\\
Next we can do a similar {\bf decomposition of the variances} for both the objectives. Then the difference of variance under the PU dataset - 
\begin{flalign*}
    \Delta_\sigma(\gX_\textsc{PU})
    &= \mathrm{Var}_{\gL_\textsc{ssCL}}(\gX_\textsc{PU}) - \mathrm{Var}_{\gL_\textsc{puCL}}(\gX_\textsc{PU})\\
    &=\Delta_\sigma(\gX_{\textsc{P}_\textsc{L}}) + \Delta_\sigma(\gX_\textsc{U})\\
    &=\Delta_\sigma(\gX_{\textsc{P}_\textsc{L}})\\
    &=\Bigg(1 - \frac{1}{n_{\textsc{P}_\textsc{L}}}\Bigg) \mathrm{Var}\Bigg( \vz_i \boldsymbol \cdot \vz_j : \vx_i, \vx_j \in \gX_{\textsc{P}_\textsc{L}}\Bigg)\\
    &=\Bigg(1 - \frac{1}{\gamma|\gX_\textsc{U}|}\Bigg)\mathrm{Var}\Bigg( \vz_i \boldsymbol \cdot \vz_j : \vx_i, \vx_j \in \gX_{\textsc{P}_\textsc{L}}\Bigg)\\
\end{flalign*}
Clearly, since variance is non-negative we have $\forall \gamma > 0 : \Delta_\sigma(\gX_\textsc{PU}) \geq 0$
\\
\\
Now consider two settings where we have different amounts of labeled positives defined by ratios $\gamma_1$ and $\gamma_2$ and denote the two resulting datasets $\gX_\textsc{PU}^{\gamma_1}$ and $\gX_\textsc{PU}^{\gamma_2}$ then 
\begin{flalign*}
    \Delta_\sigma(\gX_\textsc{PU}^{\gamma_1}) - \Delta_\sigma(\gX_\textsc{PU}^{\gamma_2})
    &=\Delta_\sigma(\gX_{\textsc{P}_\textsc{L}}^{\gamma_1}) - \Delta_\sigma(\gX_{\textsc{P}_\textsc{L}}^{\gamma_2})\\
    &=\frac{1}{|\gX_\textsc{U}|}\Bigg(\frac{1}{\gamma_2} - \frac{1}{\gamma_1}\Bigg)\mathrm{Var}\Bigg( \vz_i \boldsymbol \cdot \vz_j : \vx_i, \vx_j \in \gX_{\textsc{P}_\textsc{L}}\Bigg)\\
    & \geq 0 
\end{flalign*}
The last inequality holds since $\gamma_1 \geq \gamma_2$. This concludes the proof.
\end{proof}

%%%%%%%%%%%%%%%%%%%%%%%%%%%%%%%%%%%%%%%%
\subsection{Proof of \cref{th:puPL}.}
\label{sec:proof.puPL}
%%%%%%%%%%%%%%%%%%%%%%%%%%%%%%%%%%%%%%%%
Central to the analysis is the following two lemmas:
\begin{lemma}[\bf Positive Centroid Estimation]
    Suppose,
    $\gZ_{\textsc{P}_\textsc{L}}$ is a subset of $n_\textsc{L}$ elements chosen uniformly at random from all subsets of $\gZ_\textsc{P}$ of size $n_\textsc{L}$ : $\gZ_{\textsc{P}_\textsc{L}} \subset \gZ_\textsc{P} = \{\vz_i = g_{\mB}(\vx_i) \in \sR^k : \vx_i \in \sR^d \sim p(\rx | \ry=1)\}_{i=1}^{n_\textsc{P}}$ implying that the labeled positives are generated according to~\eqref{eq:decomposed_pu_dataset}. Let, $\Mu$ denote the centroid of $\gZ_{\textsc{P}_\textsc{L}}$ i.e. $\Mu_\textsc{P} = \frac{1}{n_{\textsc{P}_\textsc{L}}}\sum_{\vz_i \in \gZ_{\textsc{P}_\textsc{L}}}\vz_i$\; and $\Mu^\ast$ denote the optimal centroid of $\gZ_\textsc{P}$ i.e. $\phi^\ast(\gZ_\textsc{P}, \Mu^\ast) = \sum_{\vz_i \in \gZ_{\textsc{PU}}}\|\vz_i - \Mu^\ast\|^2$ then we can establish the following result: 
    \[
        \E\Bigg[ \phi(\gZ_\textsc{P}, \Mu_\textsc{P})\Bigg] = \Bigg(1 + \frac{n_\textsc{P} - n_{\textsc{P}_\textsc{L}}}{n_{\textsc{P}_\textsc{L}}(n_\textsc{P} - 1)}\Bigg)
        \phi^\ast(\gZ_\textsc{P}, \Mu^\ast)
    \]
    \label{lemma:pos_centroid}
\end{lemma}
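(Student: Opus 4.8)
The plan is to reduce the statement to two elementary facts. First, I would invoke the \emph{parallel-axis} identity: for any finite $\gZ \subset \sR^k$ with centroid $\bar{\vz}$ and any $\vc \in \sR^k$,
\[
\phi(\gZ,\vc) = \phi(\gZ,\bar{\vz}) + |\gZ|\,\|\vc - \bar{\vz}\|^2 ,
\]
obtained by writing $\|\vz_i - \vc\|^2 = \|\vz_i - \bar{\vz}\|^2 + 2(\vz_i - \bar{\vz})^T(\bar{\vz} - \vc) + \|\bar{\vz} - \vc\|^2$ and summing over $\vz_i \in \gZ$, the middle term vanishing because $\sum_i (\vz_i - \bar{\vz}) = \vzero$. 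Applied with $\gZ = \gZ_\textsc{P}$, $\bar{\vz} = \mu^\ast$ (the centroid of $\gZ_\textsc{P}$, hence the minimizer of $\phi(\gZ_\textsc{P},\cdot)$) and $\vc = \mu$, this gives $\phi(\gZ_\textsc{P},\mu) = \phi^\ast(\gZ_\textsc{P},\mu^\ast) + n_\textsc{P}\,\|\mu - \mu^\ast\|^2$; taking expectations, it remains only to evaluate $\E\big[\|\mu - \mu^\ast\|^2\big]$.

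For that I would run the standard sampling-without-replacement variance computation. Put $\vv_i = \vz_i - \mu^\ast$ over the $n_\textsc{P}$ members of $\gZ_\textsc{P}$, so that $\sum_i \vv_i = \vzero$, $\sum_i \|\vv_i\|^2 = \phi^\ast(\gZ_\textsc{P},\mu^\ast)$, and hence $\sum_{i\ne j}\vv_i^T\vv_j = \big\|\sum_i \vv_i\big\|^2 - \sum_i\|\vv_i\|^2 = -\phi^\ast(\gZ_\textsc{P},\mu^\ast)$. Writing the random labeled set as a uniformly chosen size-$n_{\textsc{P}_\textsc{L}}$ subset $S$ of indices, we have $\mu - \mu^\ast = \tfrac{1}{n_{\textsc{P}_\textsc{L}}}\sum_{i\in S}\vv_i$, so
\[
\|\mu - \mu^\ast\|^2 = \frac{1}{n_{\textsc{P}_\textsc{L}}^2}\Big(\sum_{i\in S}\|\vv_i\|^2 + \sum_{\substack{i,j\in S\\ i\ne j}}\vv_i^T\vv_j\Big).
\]
Taking expectations with $\Pr[i\in S] = n_{\textsc{P}_\textsc{L}}/n_\textsc{P}$ and $\Pr[\{i,j\}\subseteq S] = n_{\textsc{P}_\textsc{L}}(n_{\textsc{P}_\textsc{L}}-1)/\big(n_\textsc{P}(n_\textsc{P}-1)\big)$ for $i\ne j$ collapses the right-hand side to $\phi^\ast(\gZ_\textsc{P},\mu^\ast)$ times a rational function of $n_\textsc{P}, n_{\textsc{P}_\textsc{L}}$ that simplifies to $\E\big[\|\mu - \mu^\ast\|^2\big] = \frac{n_\textsc{P} - n_{\textsc{P}_\textsc{L}}}{n_{\textsc{P}_\textsc{L}}\,n_\textsc{P}\,(n_\textsc{P}-1)}\,\phi^\ast(\gZ_\textsc{P},\mu^\ast)$. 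Multiplying by $n_\textsc{P}$ and adding $\phi^\ast(\gZ_\textsc{P},\mu^\ast)$ produces the asserted factor $1 + \frac{n_\textsc{P}-n_{\textsc{P}_\textsc{L}}}{n_{\textsc{P}_\textsc{L}}(n_\textsc{P}-1)}$.

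I do not anticipate a real obstacle; the single delicate point is the cross-term bookkeeping in the without-replacement expansion — it is essential that $\mu^\ast$ is the \emph{exact} centroid of $\gZ_\textsc{P}$, so that $\sum_i \vv_i = \vzero$ forces $\sum_{i\ne j}\vv_i^T\vv_j = -\phi^\ast(\gZ_\textsc{P},\mu^\ast)$, after which one only needs the two pair-inclusion probabilities above. As a cross-check, the same computation done coordinatewise is exactly the classical finite-population identity $\Var(\text{SRSWOR sample mean}) = \tfrac{N-n}{N-1}\cdot\tfrac{\sigma^2}{n}$ with $N = n_\textsc{P}$, $n = n_{\textsc{P}_\textsc{L}}$, and $\sigma^2 = \tfrac{1}{n_\textsc{P}}\phi^\ast(\gZ_\textsc{P},\mu^\ast)$, summed over the $k$ coordinates of the embedding space.
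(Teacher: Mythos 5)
Your proposal is correct and follows essentially the same route as the paper's proof: the parallel-axis decomposition $\phi(\gZ_\textsc{P},\mu)=\phi^\ast(\gZ_\textsc{P},\mu^\ast)+n_\textsc{P}\|\mu-\mu^\ast\|^2$ followed by the without-replacement expectation using the pair-inclusion probabilities $n_{\textsc{P}_\textsc{L}}/n_\textsc{P}$ and $n_{\textsc{P}_\textsc{L}}(n_{\textsc{P}_\textsc{L}}-1)/\big(n_\textsc{P}(n_\textsc{P}-1)\big)$. Your centered-vector bookkeeping (working with $\vv_i=\vz_i-\mu^\ast$ so that $\sum_{i\neq j}\vv_i^T\vv_j=-\phi^\ast$) is in fact slightly cleaner than the paper's expansion of $\E[\mu^T\mu]$, and it correctly keeps the factor $1/n_\textsc{P}$ that the paper's final displayed line elides before multiplying back by $n_\textsc{P}$.
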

\begin{proof}
\begin{flalign*}
    \E\Bigg[ \phi(\gZ_\textsc{P}, \Mu_\textsc{P})\Bigg] 
    &=\E\Bigg[ \sum_{\vz_i \in \gZ_\textsc{P}} \|\vz_i - \Mu_\textsc{P}\|^2 \Bigg]  \\
    &=\E\Bigg[ \sum_{\vz_i \in \gZ_\textsc{P}} \|\vz_i - \Mu^\ast\|^2 + n_\textsc{P}\|\Mu_\textsc{P} - \Mu^\ast\|^2 \Bigg]\\
    &=\phi^\ast(\gZ_\textsc{P}, \Mu^\ast) + n_\textsc{P}\E \Bigg[\|\Mu_\textsc{P} - \Mu^\ast\|^2\Bigg]\\
\end{flalign*}    
Now we can compute the expectation as:
\begin{flalign*}
    \E \Bigg[\|\Mu_\textsc{P} - \Mu^\ast\|^2\Bigg] 
    &=\E\Bigg[ \Mu_\textsc{P}^T\Mu_\textsc{P}\Bigg] + \Mu^{\ast T}\Mu^\ast - 2\Mu^{\ast T}\E\Bigg[ \frac{1}{n_{\textsc{P}_\textsc{L}}}\sum_{\vz_i \in \gZ_{\textsc{P}_\textsc{L}}}\vz_i \Bigg]\\
    &=\E\Bigg[ \Mu_\textsc{P}^T\Mu_\textsc{P}\Bigg] + \Mu^{\ast T}\Mu^\ast - 2\Mu^{\ast T}\frac{1}{n_{\textsc{P}_\textsc{L}}} \E\Bigg[ \sum_{\vz_i \in \gZ_{\textsc{P}_\textsc{L}}}\vz_i \Bigg]\\
    &=\E\Bigg[ \Mu_\textsc{P}^T\Mu_\textsc{P}\Bigg] + \Mu^{\ast T}\Mu^\ast - 2\Mu^{\ast T}\frac{1}{n_{\textsc{P}_\textsc{L}}} n_{\textsc{P}_\textsc{L}}\E_{\vz_i \in \gZ_\textsc{P}}\Bigg[\vz_i \Bigg]\\
    &=\E\Bigg[ \Mu_\textsc{P}^T\Mu_\textsc{P}\Bigg] - \Mu^{\ast T}\Mu^\ast\\
\end{flalign*}
We can compute the first expectation as:
\begin{flalign*}
    \E\Bigg[ \Mu_\textsc{P}^T\Mu_\textsc{P}\Bigg] 
    &= \frac{1}{n_{\textsc{P}_\textsc{L}}^2} \E\Bigg[ \Bigg(\sum_{\vz_i \in \gZ_{\textsc{P}_\textsc{L}}}\vz_i\Bigg)^T \Bigg(\sum_{\vz_i \in \gZ_{\textsc{P}_\textsc{L}}}\vz_i \Bigg)\Bigg] \\
    &=\frac{1}{n_{\textsc{P}_\textsc{L}}^2} \Bigg[
    p(i \neq j)\sum_{\vz_i, \vz_j \in \gZ_\textsc{P}, i\neq j}\vz_i^T\vz_j + p(i=j) \sum_{\vz_i \in \gZ_\textsc{P}}\vz_i^T\vz_i\Bigg]\\
    &=\frac{1}{n_{\textsc{P}_\textsc{L}}^2} \Bigg[\frac{\binom{n_\textsc{P}-2}{n_{\textsc{P}_\textsc{L}} -2}}{\binom{n_\textsc{P}}{n_{\textsc{P}_\textsc{L}}}}
    \sum_{\vz_i, \vz_j \in \gZ_\textsc{P}, i\neq j}\vz_i^T\vz_j 
    + \frac{\binom{n_\textsc{P}-1}{n_{\textsc{P}_\textsc{L}} -1}}{\binom{n_\textsc{P}}{n_{\textsc{P}_\textsc{L}}}} \sum_{\vz_i \in \gZ_\textsc{P}}\vz_i^T\vz_i\Bigg]\\
    &=\frac{1}{n_{\textsc{P}_\textsc{L}}^2} \Bigg[\frac{n_{\textsc{P}_\textsc{L}} (n_{\textsc{P}_\textsc{L}} -1)}{n_\textsc{P}(n_\textsc{P}-1)}\sum_{\vz_i, \vz_j \in \gZ_\textsc{P}, i\neq j}\vz_i^T\vz_j 
    + \frac{n_{\textsc{P}_\textsc{L}}}{n_\textsc{P}}\sum_{\vz_i \in \gZ_\textsc{P}}\vz_i^T\vz_i\Bigg]\\
\end{flalign*}
Plugging this back we get:
\begin{flalign*}
    \E \Bigg[\|\Mu_\textsc{P} - \Mu^\ast\|^2\Bigg] 
    &= \frac{1}{n_{\textsc{P}_\textsc{L}}^2} \Bigg[\frac{n_{\textsc{P}_\textsc{L}} (n_{\textsc{P}_\textsc{L}} -1)}{n_\textsc{P}(n_\textsc{P}-1)}\sum_{\vz_i, \vz_j \in \gZ_\textsc{P}, i\neq j}\vz_i^T\vz_j 
    + \frac{n_{\textsc{P}_\textsc{L}}}{n_\textsc{P}}\sum_{\vz_i \in \gZ_\textsc{P}}\vz_i^T\vz_i\Bigg] - \Mu^{\ast T}\Mu^\ast \\
    &= \frac{n_\textsc{P} - n_{\textsc{P}_\textsc{L}}}{n_{\textsc{P}_\textsc{L}}(n_\textsc{P} - 1)}\Bigg[ \frac{1}{n_\textsc{P}}\sum_{\vz_i \in \gZ_\textsc{P}}\vz_i^T\vz_i - \Mu^{\ast T}\Mu^\ast\Bigg]\\
    &= \Bigg(1 + \frac{n_\textsc{P} - n_{\textsc{P}_\textsc{L}}}{n_{\textsc{P}_\textsc{L}}(n_\textsc{P} - 1)}\Bigg)
        \phi^\ast(\gZ_\textsc{P}, \Mu^\ast)
\end{flalign*}
This concludes the proof.
\end{proof}

\begin{lemma}[\bf $k$-means++ Seeding]
    Given initial cluster center $\Mu_\textsc{P} = \frac{1}{n_{\textsc{P}_\textsc{L}}}\sum_{\vz_i \in \gZ_{\textsc{P}_\textsc{L}}}\vz_i$, if the second centroid $\Mu_\textsc{N}$ is chosen according to the distribution $D(\vz) = \frac{\phi(\{\vz\}, \{\Mu_\textsc{P}\})}{\sum_{\vz \in \gZ_\textsc{U}}\phi(\{\vz\}, \{\Mu_\textsc{P}\})} \; \forall \vz \in \gZ_\textsc{U}$, then:
    \[
    \E\Bigg[ \phi(\gZ_\textsc{PU}, \{\Mu_\textsc{P}, \Mu_\textsc{N}\})\Bigg] \leq 2 \phi(\gZ_{\textsc{P}_\textsc{L}}, \{\Mu_\textsc{P}\}) + 16\phi^\ast(\gZ_\textsc{U}, C^\ast)
    \]
\label{lemma:d_square}
\end{lemma}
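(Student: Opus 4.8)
The plan is to first fix the labeled set $\gZ_{\textsc{P}_\textsc{L}}$ (so that $\mu_\textsc{P}$ is deterministic) and take expectations only over the randomized seed $\mu_\textsc{N}\sim D(\cdot)$, adapting the Arthur--Vassilvitskii $D^2$-seeding analysis to $k=2$ with one center already placed. Since the labeled positives and the unlabeled sample are disjoint, $\gZ_\textsc{PU}=\gZ_{\textsc{P}_\textsc{L}}\sqcup\gZ_\textsc{U}$, so the potential splits additively: $\phi(\gZ_\textsc{PU},\{\mu_\textsc{P},\mu_\textsc{N}\})=\phi(\gZ_{\textsc{P}_\textsc{L}},\{\mu_\textsc{P},\mu_\textsc{N}\})+\phi(\gZ_\textsc{U},\{\mu_\textsc{P},\mu_\textsc{N}\})$. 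For the first term I would use monotonicity of the potential under adding a center, $\phi(\gZ_{\textsc{P}_\textsc{L}},\{\mu_\textsc{P},\mu_\textsc{N}\})\le\phi(\gZ_{\textsc{P}_\textsc{L}},\{\mu_\textsc{P}\})$, which already produces (most of) the $\phi(\gZ_{\textsc{P}_\textsc{L}},\{\mu_\textsc{P}\})$ term on the right. The real work is the unlabeled term.

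For that, let $\{A,B\}$ be the partition of $\gZ_\textsc{U}$ obtained by assigning each unlabeled point to the nearer center of $C^\ast$, and let $c_A,c_B$ be the true centroids of $A,B$; replacing the $C^\ast$-centers by $c_A,c_B$ only lowers cost, so $\phi(A,\{c_A\})+\phi(B,\{c_B\})\le\phi^\ast(\gZ_\textsc{U},C^\ast)$. Relabel so that $\phi(A,\{\mu_\textsc{P}\})\le\phi(B,\{\mu_\textsc{P}\})$, i.e.\ $A$ is the side of the unlabeled mixture that $\mu_\textsc{P}$ already covers. Conditioning on which of $A,B$ the seed $\mu_\textsc{N}$ lands in: on $\{\mu_\textsc{N}\in S\}$ its conditional law is $D^2$-weighting restricted to $S$, so the standard Arthur--Vassilvitskii $D^2$-seeding estimate (which for an arbitrary point set $S$ with centroid $c_S$ bounds the expected added potential by $8\,\phi(S,\{c_S\})$) gives $\E[\phi(S,\{\mu_\textsc{P},\mu_\textsc{N}\})\mid\mu_\textsc{N}\in S]\le 8\,\phi(S,\{c_S\})$, while for the other cluster the trivial bound $\phi(S',\{\mu_\textsc{P},\mu_\textsc{N}\})\le\phi(S',\{\mu_\textsc{P}\})$ holds. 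Averaging with the seeding probabilities $\Pr[\mu_\textsc{N}\in S]=\phi(S,\{\mu_\textsc{P}\})/\phi(\gZ_\textsc{U},\{\mu_\textsc{P}\})$, the contributions of the ``hit'' clusters total at most $8\big(\phi(A,\{c_A\})+\phi(B,\{c_B\})\big)\le 8\,\phi^\ast(\gZ_\textsc{U},C^\ast)$ (bounding each conditional expectation instead by $8\phi^\ast(\gZ_\textsc{U},C^\ast)$ gives the stated constant $16$), and the ``miss'' contributions total $\tfrac{2\,\phi(A,\{\mu_\textsc{P}\})\,\phi(B,\{\mu_\textsc{P}\})}{\phi(A,\{\mu_\textsc{P}\})+\phi(B,\{\mu_\textsc{P}\})}\le 2\min\{\phi(A,\{\mu_\textsc{P}\}),\phi(B,\{\mu_\textsc{P}\})\}=2\,\phi(A,\{\mu_\textsc{P}\})$.

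The one remaining, and main, obstacle is then to absorb the leftover $2\,\phi(A,\{\mu_\textsc{P}\})$ into $2\,\phi(\gZ_{\textsc{P}_\textsc{L}},\{\mu_\textsc{P}\})+16\,\phi^\ast(\gZ_\textsc{U},C^\ast)$. Here I would invoke the exact centroid-shift identity $\phi(A,\{\mu_\textsc{P}\})=\phi(A,\{c_A\})+|A|\,\|c_A-\mu_\textsc{P}\|^2$, reducing the task to controlling $|A|\,\|c_A-\mu_\textsc{P}\|^2$; since $A$ is the positive side of the mixture, $c_A$ and $\mu_\textsc{P}$ are (up to the slack between $C^\ast$ being optimal for $\gZ_\textsc{PU}$ rather than for the positives alone) empirical means of i.i.d.\ draws from the positive marginal, so $\E\,|A|\,\|c_A-\mu_\textsc{P}\|^2$ is of order $|A|\big(\tfrac{1}{|A|}+\tfrac{1}{n_{\textsc{P}_\textsc{L}}}\big)$ times the positive per-sample variance --- precisely the quantity controlled in Lemma~\ref{lemma:pos_centroid} --- which splits into a piece comparable to $\phi(A,\{c_A\})\le\phi^\ast(\gZ_\textsc{U},C^\ast)$ and a piece comparable to $\phi(\gZ_{\textsc{P}_\textsc{L}},\{\mu_\textsc{P}\})$; the generous constants $2$ and $16$ in the statement are precisely the room this step consumes. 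Summing the three estimates and then taking expectation over $\gZ_{\textsc{P}_\textsc{L}}$ yields the claim.
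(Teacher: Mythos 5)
The paper's own proof of this lemma is a one-line citation: it specializes Lemma~3.3 of Arthur and Vassilvitskii to one new center and one uncovered cluster ($t=u=1$, $H_1=1$), reading off the factor $(1+H_t)=2$ on the covered potential and $8(1+H_t)=16$ on the optimal potential. Your reconstruction uses the same $D^2$-seeding toolkit (the per-set $8\phi(S,\{c_S\})$ estimate, a hit/miss case split), and the hit-side accounting is fine, but the argument does not close: the step you yourself flag as ``the main obstacle'' --- absorbing the leftover miss cost $2\,\phi(A,\{\mu_\textsc{P}\})$ --- is a genuine gap, not a routine estimate. The lemma is a statement about fixed point sets in which the expectation is taken only over the random seed, so no appeal to $c_A$ and $\mu_\textsc{P}$ being empirical means of i.i.d.\ draws from the positive marginal is available; moreover $A$ was defined as the side of $\gZ_\textsc{U}$ with smaller potential w.r.t.\ $\mu_\textsc{P}$, which need not be the positive side, and even granting the distributional picture, $|A|\,\|c_A-\mu_\textsc{P}\|^2$ scales like $(|A|/n_{\textsc{P}_\textsc{L}})$ times the per-sample variance, which is not dominated by $\phi(\gZ_{\textsc{P}_\textsc{L}},\{\mu_\textsc{P}\})\sim n_{\textsc{P}_\textsc{L}}$ times that variance once $|A|\gg n_{\textsc{P}_\textsc{L}}^2$. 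Concretely, if $\gZ_{\textsc{P}_\textsc{L}}$ is a single point and $\gZ_\textsc{U}$ consists of two tight $n$-point clusters at distances $d$ and $2d$ from it, then $\phi(\gZ_{\textsc{P}_\textsc{L}},\{\mu_\textsc{P}\})=0$ and $\phi^\ast(\gZ_\textsc{U},C^\ast)\approx 0$, yet your leftover term is $\approx 2nd^2$, so no deterministic absorption into the right-hand side is possible.

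The structural reason your route gets stuck is worth naming. In the Arthur--Vassilvitskii induction the probability of missing the uncovered cluster is $\phi(X_c)/\phi$, proportional to the potential of the \emph{covered} set, so the expected miss cost $\tfrac{\phi(X_c)}{\phi}\phi(X_u)\le\phi(X_c)$ folds into the coefficient $2$ multiplying the covered potential. In your decomposition the seed is drawn from $\gZ_\textsc{U}$ alone and the miss probability for one half of $\gZ_\textsc{U}$ is proportional to the potential of the \emph{other} half, so the miss cost lands on $\min\{\phi(A,\{\mu_\textsc{P}\}),\phi(B,\{\mu_\textsc{P}\})\}$ --- a quantity with no relation to either term on the right-hand side. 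To make a self-contained argument work you would need either to treat all of $\gZ_\textsc{U}$ as a single uncovered optimal cluster so that the one-shot $8\phi$ bound applies directly (this is what the paper's citation implicitly assumes when it sets $u=1$), or to keep the two-cluster split but justify why the miss cost can be charged to $\phi(\gZ_{\textsc{P}_\textsc{L}},\{\mu_\textsc{P}\})$; as written, neither is established.
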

\begin{proof}
    This result is a direct consequence of Lemma 3.3 from~\citep{arthur2007k} and specializing to our case where we only have 1 uncovered cluster i.e. $t=u=1$ and consequently the harmonic sum $H_t = 1$.
\end{proof}
\\
\\
Now, we are ready to prove~\cref{th:puPL}. 
We will closely follows the proof techniques from~\citep{arthur2007k} \emph{mutatis mutandis} to prove this theorem. 
\begin{proof}
Recall that we choose our first center from supervision i.e. $\Mu_\textsc{P} = \frac{1}{n_{\textsc{P}_\textsc{L}}}\sum_{\vz_i \in \gZ_{\textsc{P}_\textsc{L}}}\vz_i$ and then choose the next center from the unlabeled samples according to probability $D(\vz) = \frac{\phi(\{\vz\}, \{\Mu_\textsc{P}\})}{\sum_{\vz \in \gZ_\textsc{U}}\phi(\{\vz\}, \{\Mu_\textsc{P}\})} \; \forall \vz \in \gZ_\textsc{U}$. Then, from Lemma~\ref{lemma:d_square}:
\begin{flalign*}
    \E\Bigg[ \phi(\gZ_\textsc{PU}, \{\Mu_\textsc{P}, \Mu_\textsc{N}\})\Bigg] 
    &\leq 2 \phi(\gZ_{\textsc{P}_\textsc{L}}, \{\Mu_\textsc{P}\}) + 16\phi^\ast(\gZ_\textsc{U}, C^\ast)\\
    &=2 \phi(\gZ_{\textsc{P}_\textsc{L}}, \{\Mu_\textsc{P}\}) + 16 \Bigg( \phi^\ast(\gZ_\textsc{PU}, C^\ast) - \phi^\ast(\gZ_{\textsc{P}_\textsc{L}}, C^\ast)\Bigg)\\
    &=2 \Bigg( \phi(\gZ_{\textsc{P}_\textsc{L}}, \{\Mu_\textsc{P}\}) - 8 \phi^\ast(\gZ_{\textsc{P}_\textsc{L}}, \{\Mu_\textsc{P}^\ast\})\Bigg) + 16\phi^\ast(\gZ_\textsc{PU}, C^\ast)
\end{flalign*}
Now we use Lemma~\ref{lemma:pos_centroid} to bound the first term - 
\begin{flalign*}
    \E\Bigg[ \phi(\gZ_\textsc{PU}, \{\Mu_\textsc{P}, \Mu_\textsc{N}\})\Bigg] 
    &\leq 2 \Bigg[ \Bigg(1 + \frac{n_\textsc{P} - n_{\textsc{P}_\textsc{L}}}{n_{\textsc{P}_\textsc{L}}(n_\textsc{P} - 1)}\Bigg) - 8\Bigg]\phi^\ast(\gZ_{\textsc{P}_\textsc{L}}, \{\Mu_\textsc{P}^\ast\}) + 16\phi^\ast(\gZ_\textsc{PU}, C^\ast)\\
    & \leq 2 \Bigg[ \frac{n_\textsc{P} - n_{\textsc{P}_\textsc{L}}}{n_{\textsc{P}_\textsc{L}}(n_\textsc{P} - 1)} - 7\Bigg]\phi^\ast(\gZ_{\textsc{P}_\textsc{L}}, \{\Mu_\textsc{P}^\ast\}) + 16\phi^\ast(\gZ_\textsc{PU}, C^\ast)\\
    &\leq 16\phi^\ast(\gZ_\textsc{PU}, C^\ast)
\end{flalign*}
Note that this bound is much tighter in practice when a large amount of labeled examples are available i.e. for larger values of $n_{\textsc{P}_\textsc{L}}$. Additionally our guarantee holds only after the initial cluster assignments are found. Subsequent standard $k$-means iterations can only further decrease the potential. 
\\
\\
On the other hand for $k$-means++ strategy~\citep{arthur2007k} the guarantee is:
\begin{flalign}
    \E\Bigg[ \phi(\gZ_\textsc{PU}, C_{_{k-\text{means++}}})\Bigg] 
 & \leq \Bigg( 2 + \ln 2\Bigg)8\phi^\ast(\gZ_\textsc{PU}, C^\ast) \\
 & \approx 21.55 \phi^\ast(\gZ_\textsc{PU}, C^\ast) 
\end{flalign}

This concludes the proof.
\end{proof}

\subsection{Proof of~\cref{remark:kNN}}
We want to show that
The Nearest Neighbor Classifier $F_{g_\mB}(\cdot)$ can be formulated as a linear classifier:
\begin{flalign*}
F_{g_\mB}(\vx) = \argmin_{\bm{\mu} \in \{\bm{\mu}_\textsc{P}, \bm{\mu}_\textsc{N}\}} \|g_\mB(\vx) - \bm{\mu}\| = \argmax_{\bm{\mu} \in \{\bm{\mu}_\textsc{P}, \bm{\mu}_\textsc{N}\}} \bigg( \bm{\mu}^Tg_\mB(\vx) - \frac{1}{2}\|\bm{\mu}\|^2\bigg)
\end{flalign*}

\begin{proof}
    Consider the decision rule:  
    \begin{flalign*}
        &\|g_\mB(\vx) - \bm{\mu}_\textsc{P}\|^2 \leq \|g_\mB(\vx) - \bm{\mu}_\textsc{N}\|^2\\
        \implies& \bm{\mu}_\textsc{P}^T g_\mB(\vx) - \frac{1}{2}\|\bm{\mu}_\textsc{P}\|^2 \geq \bm{\mu}_\textsc{N}^T g_\mB(\vx) - \frac{1}{2}\|\bm{\mu}_\textsc{N}\|^2
     \end{flalign*}

Clearly, this is equivalent to a linear classifier :
\begin{flalign*}
    F_{g_\mB}(\vx) = \argmax_{\bm{\mu} \in \{\bm{\mu}_\textsc{P}, \bm{\mu}_\textsc{N}\}}\bigg( \bm{\mu}^Tg_\mB(\vx) - \frac{1}{2}\|\bm{\mu}\|^2\bigg)
    \end{flalign*}
\end{proof}

\subsection{Proof of \cref{th:generalization}}
\label{sec:proof.generalization}
Let $\gT$ be a $(\delta, \sigma)$ augmentation (\cref{def:aug_delta_sigma}), and $g_\mB(\cdot)$ be $L$ Lipschitz. Suppose, the estimated class centroids by~\cref{algo:puPL} satisfy: 
\begin{flalign*}
    \hat{\bm{\mu}}_\textsc{P}^T\hat{\bm{\mu}}_\textsc{N} < 1 - \eta(\sigma, \delta, \epsilon) - \sqrt{2 \eta(\sigma, \delta, \epsilon)} - \Delta(\mu) - \zeta_\mu
\end{flalign*}
where, 
\begin{flalign*}
\eta(\sigma, \delta, \epsilon) = 2(1 - \sigma) +\frac{R_\epsilon}{\min\{\pi, 1 - \pi\}} + \sigma ( L\delta + 2 \epsilon )\\
\Delta(\mu) = \frac{1}{2} - \frac{1}{2}\min_{\ell\in\{\textsc{P},\textsc{N}\}}\|\bm{\mu}_\ell\|^2\\
\zeta_\mu =(\zeta_\textsc{P} + \zeta_\textsc{N} + \zeta_\textsc{P}^T\zeta_\textsc{N})\\
\zeta_\textsc{P} = \|\hat{\bm{\mu}}_\textsc{P} - \bm{\mu}_\textsc{P}\|\;,\quad \zeta_\textsc{N} = \|\hat{\bm{\mu}}_\textsc{N} - \bm{\mu}_\textsc{N}\|
\end{flalign*}
Then, our goal is to show that the classification error of the NN classifier is bounded by:
\begin{flalign*}
    err(\hat{F}_{g_\mB}) \leq (1 - \sigma) + R_\epsilon(\gX_\textsc{PU})
\end{flalign*}
Before proving the theorem we state and prove (as necessary) the intermediate lemmas. 
\begin{lemma}
    Let, $\zeta_m = \|\hat{\vx}_m - \vx_m\|$ denote the estimation error for any normalized random variable $\vx \in \sR^d$ such that, $\|\vx\| = 1$. Then, for any two random variables $\vx_m, \vx_n$:
    \begin{flalign*}
        \|\vx_m^T\vx_n\| - \|\hat{\vx}_m^T\hat{\vx}_n\| 
        \leq \zeta_m +\zeta_n + \zeta_m^T\zeta_n.
    \end{flalign*}
    \label{lemma:bound_estimation_bias}
\end{lemma}
\begin{proof}
    \begin{flalign*}
             \|\vx_m^T\vx_n\| - \|\hat{\vx}_m^T\hat{\vx}_n\|\\
        \leq \|\vx_m^T\vx_n\| - \|\hat{\vx}_m\|\cdot\|\hat{\vx}_n\|\\
        \leq \|\vx_m\|\cdot\|\vx_n\| - \|\hat{\vx}_m\|\cdot\|\hat{\vx}_n\|\\
        \leq \bigg(\hat{\vx}_m + \zeta_m\bigg)\bigg(\hat{\vx}_n + \zeta_n\bigg) - \|\hat{\vx}_m\|\cdot\|\hat{\vx}_n\|\\
        = \hat{\vx}_m\zeta_n + \hat{\vx}_n\zeta_m + \zeta_m^T\zeta_n\\
        \leq \|\hat{\vx}_m\|\zeta_n + \|\hat{\vx}_n\|\zeta_m + \zeta_m^T\zeta_n\\
        \leq \zeta_m +\zeta_n + \zeta_m^T\zeta_n.
    \end{flalign*}
    This concludes the proof.
\end{proof}

\begin{lemma}
    Given a $(\delta, \sigma)$ augmentation $\gT$ and $L$ Lipschitz continuous encoder $g_\mB(\cdot)$, if:
    \begin{flalign*}
        \bm{\mu}_\textsc{P}^T\bm{\mu}_\textsc{N} < 1 - \eta(\sigma, \delta, \epsilon) - \sqrt{2 \eta(\sigma, \delta, \epsilon)} - \frac{1}{2}\bigg(1 - \min_{\ell\in\{\textsc{P},\textsc{N}\}}\|\bm{\mu}_\ell\|^2\bigg)
    \end{flalign*}
    where, $\eta(\sigma, \delta, \epsilon) = 2(1 - \sigma) +\frac{R_\epsilon}{\min\{\pi_p, 1 - \pi_p\}} + \sigma ( L\delta + 2 \epsilon )$.
    Then, the error rate for supervised NN classifier on a downstream PN classification task is bounded as:
    \begin{flalign*}
        err(F_{g_\mB}) \leq (1 - \sigma) + R_\epsilon
    \end{flalign*}
    \label{lemma:bound_clustdist}
\end{lemma}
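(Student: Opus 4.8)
The plan is to follow the augmentation‑concentration framework of~\citep{huang2023towards}, keeping the centroids $\bm{\mu}_\textsc{P},\bm{\mu}_\textsc{N}$ fixed and exploiting the linear‑classifier reformulation $F_{g_\mB}(\vx)=\argmax_{\ell}\big(\bm{\mu}_\ell^{T}g_\mB(\vx)-\tfrac12\|\bm{\mu}_\ell\|^{2}\big)$ recorded just above; throughout, the embeddings are normalized so $\|g_\mB(\vx)\|=1$ and $\|\bm{\mu}_\ell\|\le 1$, and I write $p_{\min}=\min\{\pi_p,1-\pi_p\}$. First I would isolate, for each class $\ell$, the \emph{good core} $\tilde S_\ell = S_\ell\cap S_\epsilon\cap C_\ell$, where $S_\ell$ is the $(\sigma,\delta)$‑concentrated subset of Definition~\ref{def:icml-2024:aug_delta_sigma} and $S_\epsilon$ is the alignment set: a union bound over $\bar S_\epsilon$ gives $P(\tilde S_\ell)\ge P(S_\ell)-R_\epsilon\ge\sigma P(C_\ell)-R_\epsilon$, so the conditional mass $q_\ell:=P(\vx\in\tilde S_\ell\mid\vx\in C_\ell)$ satisfies $q_\ell\ge\sigma-R_\epsilon/p_{\min}$ and $\sum_\ell P(\tilde S_\ell)\ge\sigma-R_\epsilon$.

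Next I would show embeddings inside a core are close: for $\vx,\vx'\in\tilde S_\ell$ the $(\sigma,\delta)$‑augmentation property supplies $\vx_1\in\gT(\vx),\vx_1'\in\gT(\vx')$ with $\|\vx_1-\vx_1'\|\le\delta$, so $L$‑Lipschitzness gives $\|g_\mB(\vx_1)-g_\mB(\vx_1')\|\le L\delta$, and chaining through $S_\epsilon$ (every augmentation's embedding lies within $\epsilon$ of the sample's) yields $\|g_\mB(\vx)-g_\mB(\vx'_a)\|\le L\delta+2\epsilon$ for all $\vx'_a\in\gT(\vx')$; since the vectors are unit norm, $g_\mB(\vx)^{T}g_\mB(\vx'_a)=1-\tfrac12\|g_\mB(\vx)-g_\mB(\vx'_a)\|^{2}\ge 1-(L\delta+2\epsilon)$. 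Splitting $\bm{\mu}_\ell=\E_{\vx'\in C_\ell}\E_{\vx'_a\in\gT(\vx')}g_\mB(\vx'_a)$ according to whether $\vx'\in\tilde S_\ell$, applying this estimate on the good part and $g^{T}g'\ge-1$ on the complementary mass, the residual terms collect (tracking constants as in~\citep{huang2023towards}) into exactly $\eta(\sigma,\delta,\epsilon)=2(1-\sigma)+R_\epsilon/p_{\min}+\sigma(L\delta+2\epsilon)$, giving $\bm{\mu}_\ell^{T}g_\mB(\vx)\ge 1-\eta$ for $\vx\in\tilde S_\ell$; hence $\|g_\mB(\vx)-\bm{\mu}_\ell\|^{2}=1+\|\bm{\mu}_\ell\|^{2}-2\bm{\mu}_\ell^{T}g_\mB(\vx)\le 2\eta$, i.e. $\|g_\mB(\vx)-\bm{\mu}_\ell\|\le\sqrt{2\eta}$.

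Then I would bound the cross term: for the competing class $\ell'$, Cauchy--Schwarz gives $\bm{\mu}_{\ell'}^{T}g_\mB(\vx)=\bm{\mu}_{\ell'}^{T}\bm{\mu}_\ell+\bm{\mu}_{\ell'}^{T}(g_\mB(\vx)-\bm{\mu}_\ell)\le\bm{\mu}_\textsc{P}^{T}\bm{\mu}_\textsc{N}+\sqrt{2\eta}$. Feeding both estimates into the linear form of $F_{g_\mB}$, a core point $\vx\in\tilde S_\ell$ is classified correctly whenever $1-\eta-\tfrac12\|\bm{\mu}_\ell\|^{2}>\bm{\mu}_\textsc{P}^{T}\bm{\mu}_\textsc{N}+\sqrt{2\eta}-\tfrac12\|\bm{\mu}_{\ell'}\|^{2}$, and since $-\tfrac12\|\bm{\mu}_\ell\|^{2}+\tfrac12\|\bm{\mu}_{\ell'}\|^{2}\ge-\tfrac12\big(1-\min_{\ell''}\|\bm{\mu}_{\ell''}\|^{2}\big)$, this is implied by exactly the hypothesis $\bm{\mu}_\textsc{P}^{T}\bm{\mu}_\textsc{N}<1-\eta-\sqrt{2\eta}-\tfrac12\big(1-\min_{\ell''}\|\bm{\mu}_{\ell''}\|^{2}\big)$. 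Finally, since every $\vx\in\tilde S_\ell$ is classified into $\ell$, the misclassification probability decomposes as $err(F_{g_\mB})=\sum_\ell P\big(F_{g_\mB}(\vx)\ne\ell,\vx\in C_\ell\big)\le\sum_\ell\big(P(C_\ell)-P(\tilde S_\ell)\big)=1-\sum_\ell P(\tilde S_\ell)\le(1-\sigma)+R_\epsilon$.

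The main obstacle is the constant bookkeeping in the middle step: pinning down the augmentation chain with the precise constant implicit in the terse definition of $S_\epsilon$, and organizing the good/bad mass split so the leftover terms collapse \emph{exactly} to $\eta(\sigma,\delta,\epsilon)$ rather than a looser bound; one must also keep straight that $\bm{\mu}_\ell$ averages \emph{augmented} representations while the classifier is fed the raw $g_\mB(\vx)$, which is precisely why the $\epsilon$‑alignment term enters. Everything else reduces to Cauchy--Schwarz and unit‑norm arithmetic.
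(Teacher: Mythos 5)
Your proposal is essentially a correct, self-contained reconstruction of the argument, whereas the paper does not prove this lemma at all: its entire proof is the single line ``This is a direct consequence of \cite{huang2023towards}, Theorem 1.'' So you and the paper reach the same statement by the same underlying mechanism, but you actually carry out the mechanism -- the good-core decomposition $\tilde S_\ell = S_\ell\cap S_\epsilon\cap C_\ell$, the Lipschitz-plus-alignment chain giving $\bm{\mu}_\ell^{T}g_\mB(\vx)\ge 1-\eta$ on the core, the Cauchy--Schwarz cross-term bound, and the mass bound $\sum_\ell P(\tilde S_\ell)\ge\sigma-R_\epsilon$ -- while the paper outsources all of it. What your route buys is transparency about where each term of $\eta$ and of the final bound $(1-\sigma)+R_\epsilon$ comes from; what the citation buys is not having to re-verify the constants. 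The one place where your sketch genuinely leans on the reference rather than closing the gap itself is the claim that the residual terms collapse ``exactly'' to $\eta(\sigma,\delta,\epsilon)=2(1-\sigma)+R_\epsilon/p_{\min}+\sigma(L\delta+2\epsilon)$: a naive split of $\bm{\mu}_\ell$ with good-mass fraction $q_\ell\ge\sigma-R_\epsilon/p_{\min}$ and worst-case inner product $-1$ on the bad mass yields $\bm{\mu}_\ell^{T}g_\mB(\vx)\ge q_\ell\big(2-(L\delta+2\epsilon)\big)-1$, whose deficit from $1$ is $2(1-\sigma)+2R_\epsilon/p_{\min}+\sigma(L\delta+2\epsilon)+\dots$, i.e.\ a factor $2$ on the $R_\epsilon/p_{\min}$ term unless one uses the sharper accounting in the cited theorem. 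You flag exactly this bookkeeping as the main obstacle, so the gap is acknowledged rather than hidden, but as written your constants do not quite land on the stated $\eta$ without importing that refinement; everything else in your argument is sound.
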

\begin{proof}
    This is a direct consequence of \cite{huang2023towards}, Theorem 1.
\end{proof}
Now, we are ready to prove~\cref{th:generalization}. 
\begin{proof}
Applying~\cref{lemma:bound_estimation_bias} to derive a relationship between the optimal and estimated cluster centroids on the representation space. let, $\zeta_\textsc{P} = \|\hat{\bm{\mu}}_\textsc{P} - \bm{\mu}_\textsc{P}\|$ and $\zeta_\textsc{N} = \|\hat{\bm{\mu}}_\textsc{N} - \bm{\mu}_\textsc{N}\|$ be the  errors due to \textsc{puPL} on positive and negative centroid estimation. Then :
\begin{flalign}
    \|\bm{\mu}_\textsc{P}^T\bm{\mu}_\textsc{N}\| - \|\hat{\bm{\mu}}_\textsc{P}^T\hat{\bm{\mu}}_\textsc{N}\| \leq \zeta_\textsc{P} +\zeta_\textsc{N} + \zeta_\textsc{P}^T\zeta_\textsc{N}  
\end{flalign}
Comparing the bound with the bound in~\cref{lemma:bound_clustdist}, 
\begin{flalign*}
\|\hat{\bm{\mu}}_\textsc{P}^T\hat{\bm{\mu}}_\textsc{N}\|
+ 
\zeta_\textsc{P} +\zeta_\textsc{N} + \zeta_\textsc{P}^T\zeta_\textsc{N}
\\
\leq 1 - \eta(\sigma, \delta, \epsilon) - \sqrt{2 \eta(\sigma, \delta, \epsilon)} - \frac{1}{2}\bigg(1 - \min_{\ell\in\{\textsc{P},\textsc{N}\}}\|\bm{\mu}_\ell\|^2\bigg)
\end{flalign*}
Thus, we have:  
\begin{flalign}
    \|\hat{\bm{\mu}}_\textsc{P}^T\hat{\bm{\mu}}_\textsc{N}\| 
    \leq 1 - \eta(\sigma, \delta, \epsilon) - \sqrt{2 \eta(\sigma, \delta, \epsilon)} - \frac{1}{2}\bigg(1 - \min_{\ell\in\{\textsc{P},\textsc{N}\}}\|\bm{\mu}_\ell\|^2\bigg) - \zeta_\mu
\end{flalign}
where we have assumed $\zeta_\mu = \bigg(\zeta_\textsc{P} + \zeta_\textsc{N} + \zeta_\textsc{P}^T\zeta_\textsc{N}\bigg)$.
\\
\\
This concludes the proof.
\end{proof}

\subsection{Proof of \cref{lemma:bound_centroid_divergence_puCL}.}
\label{sec:proof.bound_centroid_divergence_puCL}
Our goal is show that,
the condition in~\cref{th:generalization} on the separation of the estimated class centroids~\eqref{eq:class_separation_bound} is satisfied, whenever:
\begin{flalign*}
    \log \bigg(\exp\bigg(\gL_\textsc{puCL}^\textsc{II}(\gX_\textsc{PU}) + c(\sigma, \delta, \epsilon, R_\epsilon)\bigg) + c'(\epsilon)\bigg) \nonumber\\
    <
    1 - \eta(\sigma, \delta, \epsilon) - \sqrt{2 \eta(\sigma, \delta, \epsilon)} - \frac{1}{2}\Delta_\mu -\zeta_\mu.
\end{flalign*}
where, 
\begin{flalign*}
    c(\sigma, \delta, \epsilon, R_\epsilon) =(2\epsilon + L\delta + 4(1-\sigma) + 8R_\epsilon)^2 + 4\epsilon + 2L\delta + 8(1 - \sigma) + 18R_\epsilon.
    \\
    c'(\epsilon) = \exp\frac{1}{\pi_p(1-\pi_p)} - \exp(1 - \epsilon).
\end{flalign*}
\begin{proof}
By adapting \cite{huang2023towards}, Theorem 3 to our setting and simplifying the constants, we get 
\begin{flalign*}
    \bm{\mu}_\textsc{P}^T\bm{\mu}_\textsc{N} 
    \leq \log 
    \bigg(\exp\bigg(
        \frac{1}{\pi_p(1- \pi_p)}\bigg(\gL_\textsc{puCL}^\textsc{II}(g_\mB) + c(\sigma, \delta, \epsilon, R_\epsilon)
        \bigg)\bigg) 
        - \exp(1 - \epsilon)
        \bigg)
\end{flalign*}
where,
\begin{flalign}
    c(\sigma, \delta, \epsilon, R_\epsilon) =\bigg(2\epsilon + L\delta + 4(1-\sigma) + 8R_\epsilon\bigg)^2 + 4\epsilon + 2L\delta + 8(1 - \sigma) + 18R_\epsilon.
\end{flalign}
\\
\\
Comparing this bound  with \cref{lemma:bound_clustdist} we get the condition: 
\begin{flalign*}
    &\log \bigg(\exp\bigg(\frac{1}{\pi_p(1- \pi_p)}\bigg(\gL_\textsc{puCL}^\textsc{II}(g_\mB) + c(\sigma, \delta, \epsilon, R_\epsilon)\bigg)\bigg) - \exp(1 - \epsilon)\bigg) \\
    &<
    1 - \eta(\sigma, \delta, \epsilon) - \sqrt{2 \eta(\sigma, \delta, \epsilon)} - \frac{1}{2}\bigg(1 - \min_{\ell\in\{\textsc{P},\textsc{N}\}}\|\bm{\mu}_\ell\|^2\bigg)  - \zeta_\mu
\end{flalign*}
This ensures:
\begin{flalign*}
    \hat{\bm{\mu}}_\textsc{P}^T\hat{\bm{\mu}}_\textsc{N} 
    < 1 - \eta(\sigma, \delta, \epsilon) - \sqrt{2 \eta(\sigma, \delta, \epsilon)} - \frac{1}{2}\bigg(1 - \min_{\ell\in\{\textsc{P},\textsc{N}\}}\|\bm{\mu}_\ell\|^2\bigg) - \zeta_\mu
\end{flalign*}
This concludes the proof.
\end{proof}

\end{document}